\documentclass[accepted]{article}
% % \documentclass{article}
% \documentclass[preprint]{article}

\newif\ifshowappendix
\showappendixtrue % change to \showappendixtrue and comment externaldocuments to produce the full text including appendix

% if you need to pass options to natbib, use, e.g.:
%     \PassOptionsToPackage{numbers, compress}{natbib}
% before loading neurips_2025

\usepackage{hyperref}
\hypersetup{
    colorlinks=true,
    linkcolor=blue,
    filecolor=blue,
    citecolor=blue,
    urlcolor=blue,
}

\usepackage{xr-hyper}
% \externaldocument{neurips_2025_Hrad_suppl} % No .tex extension
% \externaldocument{labels_appendix_Hrad}
%%%%% Numeric
 \PassOptionsToPackage{numbers,sort&compress}{natbib}
\bibliographystyle{unsrtnat}

%ready for submission

%%%%%%%% Author,Year
% \usepackage{natbib}
% \bibliographystyle{apalike}

\usepackage{neurips_2025}

% to compile a preprint version, e.g., for submission to arXiv, add add the
% [preprint] option:
%     \usepackage[preprint]{neurips_2025}

% to compile a camera-ready version, add the [final] option, e.g.:
%     \usepackage[final]{neurips_2025}

% to avoid loading the natbib package, add option nonatbib:
%    \usepackage[nonatbib]{neurips_2025}

\usepackage[table]{xcolor}         % colors
\definecolor{Magenta}{rgb}{1,0,1}
\usepackage[utf8]{inputenc} % allow utf-8 input
\usepackage[T1]{fontenc}    % use 8-bit T1 fonts
\usepackage{hyperref}       % hyperlinks
\usepackage{url}            % simple URL typesetting
\usepackage{booktabs}       % professional-quality tables
\usepackage{amsfonts}       % blackboard math symbols
\usepackage{nicefrac}       % compact symbols for 1/2, etc.
\usepackage{microtype}      % microtypography
\usepackage{xcolor} 
\usepackage{booktabs}
\usepackage{graphicx} % colors
\usepackage{amsmath,amsthm,amssymb, mathtools}
\theoremstyle{plain} % default
\newtheorem{theorem}{Theorem}[section] 
\newtheorem{lemma}[theorem]{Lemma}
\newtheorem{proposition}[theorem]{Proposition}
\newtheorem{assumption}[theorem]{Assumption}
\newtheorem{corollary}[theorem]{Corollary}

\theoremstyle{definition}
\newtheorem{definition}{Definition}[section]
\newtheorem{example}[theorem]{Example}

\theoremstyle{remark}

\newcommand{\eps}{\varepsilon}
\newcommand{\bQ}{\mathbf{Q}}

\newcommand{\R}{\mathbb{R}}

\newcommand{\X}{\mathcal{X}}

\newcommand{\ngb}{N_\gamma(P_0)}
\newcommand{\Pt}{\tilde{\mathcal{P}}}
\newcommand{\Pth}
{\tilde{\mathcal{P}}_{c_1, c_2, h}}
\newcommand{\Ptm}
{\tilde{\mathcal{P}}_{c_1, c_2, \mu}}
\title{Locally Optimal Private Sampling: Beyond the Global Minimax}

% The \author macro works with any number of authors. There are two commands
% used to separate the names and addresses of multiple authors: \And and \AND.
%
% Using \And between authors leaves it to LaTeX to determine where to break the
% lines. Using \AND forces a line break at that point. So, if LaTeX puts 3 of 4
% authors names on the first line, and the last on the second line, try using
% \AND instead of \And before the third author name.

% \author{%
%   David S.~Hippocampus\thanks{Use footnote for providing further information
%     about author (webpage, alternative address)---\emph{not} for acknowledging
%     funding agencies.} \\
%   Department of Computer Science\\
%   Cranberry-Lemon University\\
%   Pittsburgh, PA 15213 \\
%   \texttt{hippo@cs.cranberry-lemon.edu} \\
%   % examples of more authors
%   % \And
%   % Coauthor \\
%   % Affiliation \\
%   % Address \\
%   % \texttt{email} \\
%   % \AND
%   % Coauthor \\
%   % Affiliation \\
%   % Address \\
%   % \texttt{email} \\
%   % \And
%   % Coauthor \\
%   % Affiliation \\
%   % Address \\
%   % \texttt{email} \\
%   % \And
%   % Coauthor \\
%   % Affiliation \\
%   % Address \\
%   % \texttt{email} \\
% }

\author{%
  Hrad Ghoukasian \\
  Department of Computing and Software \\
  McMaster University \\
  \texttt{ghoukash@mcmaster.ca}
  \And
  Bonwoo Lee \\
  Department of Mathematical Sciences \\
  Korea Advanced Institute of Science \& Technology \\
  \texttt{righthim@kaist.ac.kr}
  \And
  Shahab Asoodeh \\
  Department of Computing and Software \\
  McMaster University \\
  \texttt{asoodeh@mcmaster.ca}
}

\begin{document}

\maketitle

\begin{abstract}
  % We consider the problem of sampling from a distribution under local differential privacy (LDP). 
  % Given a private distribution $P\in \mathcal P$, a locally private sampler must output a single observation from a distribution that is close in an $f$-divergence to $P$ while satisfying LDP. Sampling abstracts the goal of generating small amounts of realistic-looking private data.  In contrast to the previous work of Park et al (NeurIPS'24) that focuses on the global minimax locally private samplers, we study the local behavior of the minimax error around a given distribution $P_0$ and provide pointwise bounds that depend on $P_0$ and privacy parameters. In particular, we show that the local minimax error is fully determined by the global minimax error with $\mathcal P$ restricted to a neighborhood around $P_0$.  We prove this reduction, by  (1) extending previous work from pure LDP to the functional LDP framework and then (2) showing the global optimal functional LDP sampler is closely related to the optimal sampler when the input universe is restricted to a certain neighborhood around $P_0$. We also derive a simple closed-form expression for the local minimax samplers. 
  % We demonstrate by several experiments that the optimal local minimax samplers strictly outperform global minimax samplers.    
We study the problem of sampling from a distribution under local differential privacy (LDP). Given a private distribution $P \in \mathcal{P}$, the goal is to generate a single sample from a distribution that remains close to $P$ in $f$-divergence while satisfying the constraints of LDP. This task captures the fundamental challenge of producing realistic-looking data under strong privacy guarantees.
While prior work by Park et al. (NeurIPS'24) focuses on global minimax-optimality across a class of distributions, we take a local perspective. Specifically, we examine the minimax risk in a neighborhood around a fixed distribution~$P_0$, and characterize its exact value, which depends on both~$P_0$ and the privacy level.
Our main result shows that the local minimax risk is determined by the global minimax risk when the distribution class $\mathcal{P}$ is restricted to a neighborhood around $P_0$.
To establish this, we (1) extend previous work from pure LDP to the more general functional LDP framework, and (2) prove that the globally optimal functional LDP sampler yields the optimal local sampler when constrained to distributions near $P_0$. Building on this, we also derive a simple closed-form expression for the locally minimax-optimal samplers which does not depend on the choice of $f$-divergence.
We further argue that this local framework naturally models private sampling with public data, where the public data distribution is represented by $P_0$. In this setting, we empirically compare our locally optimal sampler to existing global methods, and demonstrate that it consistently outperforms global minimax samplers.

\end{abstract}
\vspace{0.05cm}
\section{Introduction}\label{sec: introduction}
\vspace{0.1cm}
Differential privacy (DP) \citep{dwork2006calibrating} is the de facto standard for providing formal privacy guarantees in machine learning. A widely used variant, local differential privacy (LDP) \citep{kasiviswanathan2011can}, enables individuals to randomize their data on their own devices before sharing it with an untrusted curator. LDP ensures that the output of a randomized algorithm is statistically indistinguishable under arbitrary changes to the input, thereby limiting the privacy leakage of an individual’s data. By eliminating the need for a trusted curator, LDP has become particularly appealing to users and companies alike, and has seen wide deployment in practice—including systems developed by Google, Apple, and Microsoft\citep{erlingsson2014rappor, bittau2017prochlo, Apple_Privacy, ding2017collecting}.

Despite this progress, much of the LDP literature assumes that each user holds only a single data point. This assumption is often unrealistic, as modern devices routinely collect large volumes of data—such as images, messages, or time-series records~\citep{husain2020local}. To address this, a line of work known as \emph{user-level DP} assumes that each user holds a dataset of fixed size, with samples drawn i.i.d.\ from an underlying distribution~\citep{kent2024rate, mao2024privshape, bassily2023user, acharya2023discrete, huang2023federated, girgis2022distributed, levy2021learning, ghazi2021user, zhao2024huber}. However, the requirement of equal dataset size across users limits its practicality.

To overcome this limitation, \textit{locally private sampling} was introduced by \citet{husain2020local} and later extended by \citet{park2024exactly}. This framework considers a more general setting where clients possess large datasets of varying sizes and aim to privately release another dataset that closely resembles their original data. Each client's local dataset is modeled as an empirical distribution $P$, from which a single private sample is to be generated. Privacy is enforced by projecting $P$ onto a set of distributions— called a \textit{mollifier}—within a certain radius. While \citet{husain2020local} construct the mollifier in an ad-hoc manner using a reference distribution, \citet{park2024exactly} develop a principled minimax framework to identify the \textit{optimal} mollifier, which corresponds to the worst-case data distribution (namely, degenerate distributions in the discrete case).

In this work, we extend the framework of \citet{park2024exactly} to a public-private setting, where each individual also has access to a public dataset, represented by a distribution $P_0$. From a theoretical perspective, our goal is to move beyond the pessimistic worst-case formulation and instead characterize optimal samplers through a local minimax lens. From a practical standpoint, this framework is motivated by the increasing relevance of personalized collaborative learning~\citep{ozkara2023a,Beirami_personalization,Personalization3}, in which individual models are trained collaboratively, and the public data may reflect (privatized) information shared by other users. Since real-world data distributions are rarely worst-case, our local formulation better captures achievable performance in practice.

Our local minimax framework aims to identify locally private samplers that minimize the $f$-divergence between the true distribution $P$ and the sampling distribution, for all $P$ within a neighborhood $N_\gamma(P_0)$ around $P_0$, parameterized by a constant $\gamma \geq  1$. This neighborhood is defined via one of the strictest notions of distance between distributions: we say $Q \in N_\gamma(P_0)$ if $Q(x) \leq \gamma P_0(x)$ and $P_0(x) \leq \gamma Q(x)$ for all $x$. We will discuss this notion of neighborhood in details in Section \ref{sec: local functional}. 

Interestingly, we show that the local minimax-optimal private sampler under this notion of neighborhood is fully characterized by the global minimax-optimal sampler for a restricted class of data distributions. This relationship is most transparent in the context of functional LDP~\citep{dong2022gaussian}. We therefore first extend the global minimax framework of \citet{park2024exactly} from pure LDP to functional LDP, and then use it to derive the local minimax-optimal samplers.

\textbf{Contributions.} Our main contributions are as follows: 
\begin{itemize}
\item We characterize the exact global minimax risk of private sampling under the functional LDP framework and derive optimal samplers for both continuous and discrete domains. Leveraging properties of functional LDP~\citep{dong2022gaussian}, our results generalize the pure-LDP framework of~\citet{park2024exactly} to approximate and Gaussian LDP (GLDP) notions  (Section~\ref{sec: global functional}). As in their case, our optimal samplers are independent of the choice of $f$-divergence.

\item  We introduce a local minimax formulation to identify samplers that minimize $f$-divergence over all $P \in N_\gamma(P_0)$ for a given $P_0$ and $\gamma \geq 1$. We then show that this minimax risk is fully determined by the global minimax risk when the distribution class is restricted to a neighborhood around \( P_0 \) (Section~\ref{sec: local functional}). Building on our global minimax results, we derive closed-form expressions for the local minimax-optimal functional samplers. We further specialize this to express a pointwise optimal private sampler under pure LDP that achieves the local minimax risk (Section~\ref{sec: local pure}), enabling direct comparison with the global samplers of~\citet{park2024exactly}.

\item We numerically validate our local minimax samplers, showing they consistently—and often substantially—outperform the global samplers of~\citet{park2024exactly} across privacy regimes (Section~\ref{sec: numeric}). Figure~\ref{fig: Laplace pure and functional local vs global} illustrates that our samplers yield distributions significantly closer to the original under both pure LDP and GLDP.\footnote{Experiment code is publicly available at 
\url{https://github.com/hradghoukasian/private_sampling}, 
with reproduction instructions provided in Appendix~\ref{appendix: reproduce}.}

\end{itemize}

\textbf{Related work.}
Locally private sampling was initially introduced by~\citet{husain2020local} and later formalized through a global minimax framework by~\citet{park2024exactly}. In their work, two fundamentally different families of minimax-optimal samplers were identified: one derived from the randomized response mechanism, referred to as the \emph{linear sampler}, and another based on clipping the data distribution, known as the \emph{non-linear sampler}. They rigorously showed that the non-linear sampler is pointwise better than the linear one. In a similar vein, we develop two families of locally minimax-optimal pure LDP samplers and demonstrate that one is pointwise better than the other.
% \HG{Is this sentence saying  that because in \emph{local} minimax we have projection into the neighborhood and then sampling, it is not trivial? Because our functional LDP local minimax sampler is indeed randomized response inside the neighborhood.} \SA{Yes some of our results are linear, yet optimal! At least for the pure LDP, we have the pointwise optimal one though. I feel like we need to explain a bit technically why instantiation of Thm 4.1 on $g_\eps$ is strictly weaker than Thm 5.1. We can say the former and latter differs in only when $P\notin N_\gamma$ and in that case the former is a linear sampler and the latter non-linear. Thus intuitively we have pointiwse optimality. This discription makes Fig 2 more understandable as well.}\HG{I see your point! Should sth be added here?} \SA{Done!}

Sampling with public data was recently explored by~\citet{zamanlooy2024locally}, who study minimax-optimal samplers in the presence of a public prior. While related, their work differs in key ways: they focus on a global minimax formulation restricted to discrete domains and linear samplers (i.e., perturbing the data distribution linearly), whereas we study a local minimax problem, support both discrete and continuous settings, and consider arbitrary samplers. Moreover, the role of public data differs between the two approaches. \citet{zamanlooy2024locally} impose it as a hard constraint: their samplers are required to preserve the public prior exactly, ensuring that the public data remains unchanged while privacy is enforced only on the private data. By contrast, we use the public distribution to define a local neighborhood, which in turn yields a local minimax formulation applicable to general sample spaces. A key implication of this distinction is that in their setting, access to public data does not necessarily reduce the minimax risk, whereas in ours it does.

Private sampling under central DP was introduced by~\citet{NEURIPS2021_f2b5e92f}, who proposed algorithms for generating private samples from $k$-ary and product Bernoulli distributions. They showed that sampling can incur significantly lower privacy costs than private learning for some range of parameters. This was extended to broader distribution families~\citep{NEURIPS2023_f4eaa4b8,kamath_sampling,tasnim2025reveal} and to multi-sample settings~\citep{MultiSampling}.

Private sampling has also gained traction in fine-tuning large language models~\citep{flemings2024differentially,flemings2024adaptively}, and is closely related to private generative modeling. However, most prior work in this area focuses on central DP~\citep{raskhodnikova2021differentially,ghazi2023differentially,ebadi2016sampling,abay2018privacy,xie2018differentially,xin2020private,torkzadehmahani2019dp,liu2019ppgan}, assumes single-record LDP~\citep{cunningham2022geopointgan,shibata2023local,zhang2023publishing,gwon2024ldp}, or targets specific estimation tasks~\citep{imola2024metric}.

Finally, we remark that local minimax formulations have also been explored in the context of estimation and distribution learning under DP \cite{chen2024lq, mcmillan2022instance, Duchi_Complexity,asi2020instance,feldman2024instance,rohde2020geometrizing}. In particular, \citet{chen2024lq} analyze discrete distribution estimation under LDP by considering neighborhoods around a reference distribution. Similarly, \citet{mcmillan2022instance} and \citet{Duchi_Complexity} develop locally minimax estimators for one-dimensional parameters in the central and local models, respectively, while \citet{asi2020instance} extend this framework to function estimation. More recently, \citet{feldman2024instance} utilize the notion of neighborhoods around a distribution in the context of nonparametric density estimation.
Although these works focus primarily on estimating statistical functionals or learning distributions, they are conceptually distinct from our setting, which centers on private sampling. Nonetheless, our notion of neighborhood in the local minimax formulation is closely aligned with that of~\citet{feldman2024instance}.

%In this work, we focus on private sampling, where the goal is to generate a single data point from a user's distribution under LDP constraints.

% \citet{husain2020local} formalized the private sampling problem and proposed a locally differentially private algorithm that generates a sample approximating a user's input distribution. However, the performance of their method is highly sensitive to the choice of an arbitrary reference distribution, making it less reliable in practice.
% \citet{park2024exactly}, however, developed a minimax formulation for locally private sampling and   characterized the minimax risk of the private sampling problem for general $f$-divergences and identifying an optimal sampling mechanism that achieves this risk. Their formulation assumes that input distributions belong to a global class, offering strong worst-case guarantees. However, such global formulations are often overly conservative, as they focus on the hardest possible instances. A more adaptive alternative is the local minimax framework, which tailors the analysis to a specific instance and often yields more practical guarantees. Local minimax-optimality has been widely studied in both private and non-private settings, leading to optimal estimators across a wide range of estimation tasks~\citep{WeiNing_Pointwise,chen2024lq,cai2015framework,donoho1994minimax,feldman2024instance,Duchi_Complexity,asi2020instance,asi2020near,chen2025decision,huang2021instance,mcmillan2022instance}.

\setlength{\textfloatsep}{0.4cm}
\begin{figure}[t]
  \centering
  \includegraphics[width=0.72\linewidth]{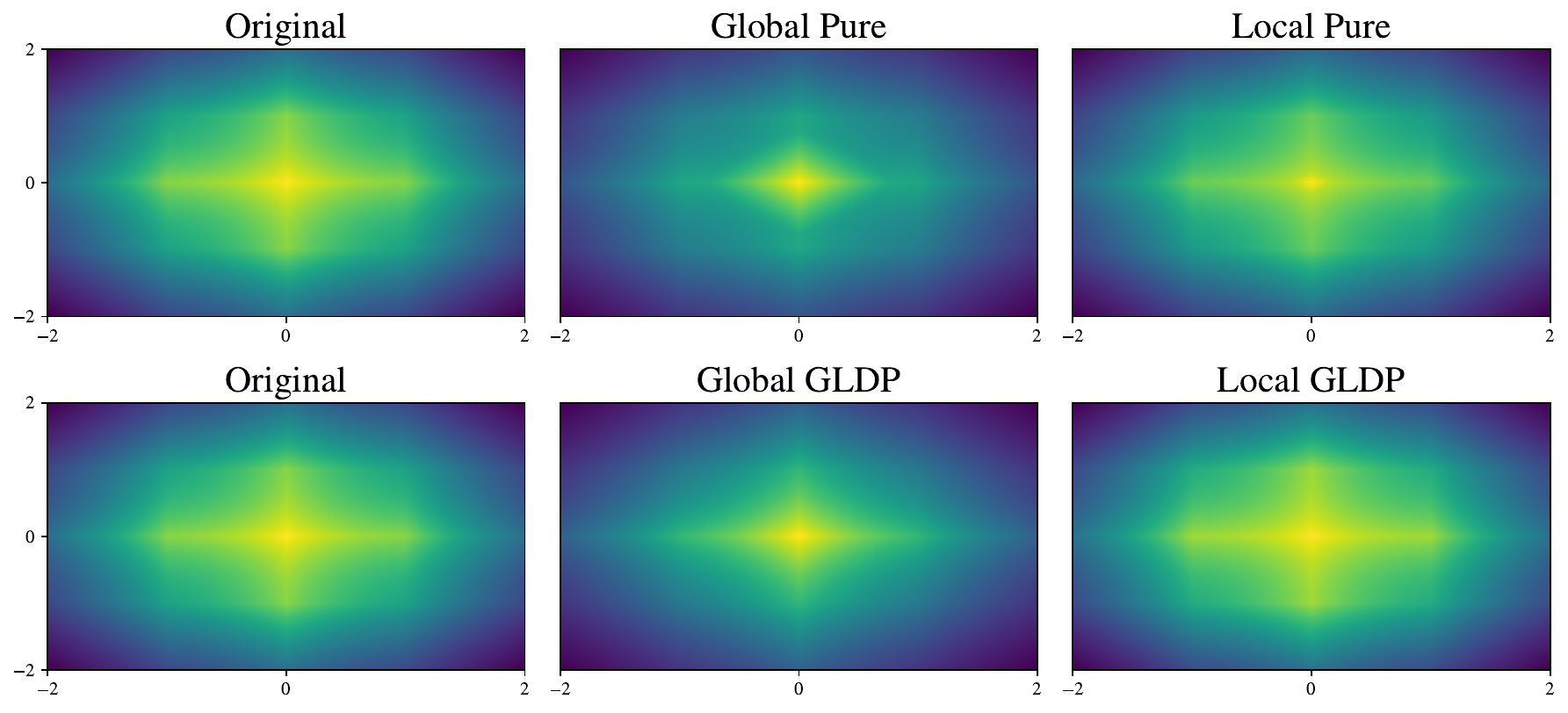}
  \caption{Comparison of global and local minimax-optimal sampler under pure LDP ($\varepsilon = 1$) and $\nu$-GLDP ($\nu = 1.5$). Full details of this experiment are provided in Appendix~\ref{appendix: laplace mixture}. 
}
  \label{fig: Laplace pure and functional local vs global}
\end{figure}

\textbf{Notation.} Let $\mathcal{P}(\mathcal{X})$ denote the set of all probability distributions over the sample space $\mathcal{X}$, and let $\mathcal{C}(\mathbb{R}^n)$ denote the set of all continuous probability distributions on $\mathbb{R}^n$. For $P\in \mathcal{C}(\mathbb{R}^n)$, we denote its probability density function (PDF) by the lowercase letter $p$. For any integer $k \in \mathbb{N}$, we define $[k] \coloneqq \{1, 2, \dots, k\}$. The notation $\mathrm{clip}(x; s_1, s_2) \coloneqq \max\{s_1, \min\{s_2, x\}\}$ refers to the clipping function that bounds $x$ between $s_1$ and $s_2$.
% We use $\mathcal{Q}_{\mathcal{X}, \tilde{\mathcal{P}}, g}$, $\mathcal{Q}_{\mathcal{X}, \tilde{\mathcal{P}}, \varepsilon}$, and $\mathcal{Q}_{\mathcal{X}, \tilde{\mathcal{P}}, \varepsilon, \delta}$ to denote the sets of all mechanisms $\bQ : \tilde{\mathcal{P}} \to \mathcal{P}(\mathcal{X})$ that satisfy, respectively, $g$-FLDP, $\varepsilon$-LDP, and $( \varepsilon, \delta )$-LDP. 
 % The convex conjugate of a function \( g \) is denoted by \( g^* \). 
 The convex conjugate of a function $g:D\subseteq\mathbb{R}\to\mathbb{R}$ is the function $g^*:\mathbb{R}\to(-\infty,+\infty]$ defined by $g^*(y) \coloneqq \sup_{x\in D} \{\, xy - g(x) \,\}$ for $y\in\mathbb{R}$.
We denote the \( n \)-dimensional Laplace distribution with mean \( m \in \mathbb{R}^n \) and scale \( b > 0 \) by \( \mathcal{L}(m, b) \), and its density at \( x \in \mathbb{R}^n \) by \( \mathcal{L}(x \mid m, b) \). For the Gaussian case, \( \mathcal{N}(m, \Sigma) \) denotes the \( n \)-dimensional normal distribution with mean \( m \in \R^n\) and covariance matrix \( \Sigma \).

\vspace{0.1cm}
\section{Preliminaries}\label{sec: prilim}
\vspace{0.1cm}
In this section, we formally introduce definitions and concepts necessary for the subsequent sections.

Suppose a client holds a distribution \( P \in \tilde{\mathcal{P}} \subset \mathcal{P}(\mathcal{X}) \) over a sample space \( \mathcal{X} \), and wishes to release a sample that resembles being drawn from \( P \) while preserving privacy. To ensure privacy, the client applies a private sampler \( \mathbf{Q}: \tilde{\mathcal{P}} \to \mathcal{P}(\mathcal{X}) \), which maps the input distribution \( P \) to a privatized distribution \( \mathbf{Q}(P) \in \mathcal{P}(\mathcal{X}) \). The released sample is then drawn from \( \mathbf{Q}(P) \), thereby preserving privacy while approximating the original distribution. We equivalently view \( \mathbf{Q}(P) \) as the conditional distribution \( \mathbf{Q}(\cdot \mid P) \).

% To ensure privacy, the client applies a private sampler $\mathbf{Q}: \tilde{\mathcal{P}} \to \mathcal{P}(\mathcal{X})$, the output of the sampler is a distribution in $\mathcal{P}(\mathcal{X})$ and then to draw a sample from  and the released sample is drawn from the sampling distribution. 

 % defined as a conditional distribution from \( \tilde{\mathcal{P}} \) to \( \mathcal{X} \), and the released sample is drawn from the sampling distribution \( \mathbf{Q}(\cdot \mid P) \). 
 %The sampler $\bQ$ is required to satisfy different variants of LDP—such as pure, approximate, or functional—depending on the application \citep{husain2020local, duchi2013local, park2024exactly}.
 % \SA{above, you said sampler is from $\tilde{\mathcal{P}}$ to $\mathcal X$? but below is from $\tilde{\mathcal{P}}$ to $\mathcal{P}(\mathcal X)$ Pick the second one and keep it that way. }\HG{I see your point. what I had was confusing. I needed the conditional distributional interpretation of $\bQ(.|P)$ for the privacy definitions to be meaningful though. I deleted the domain and do-domain of the conditional distribution. Is it clear now?}  \SA{It is great now!}
\begin{definition}[Approximate LDP]
\label{def:approx-LDP}
Let $\varepsilon \geq 0$ and $\delta \in [0,1]$.  
A sampler $\mathbf{Q}: \tilde{\mathcal{P}} \to \mathcal{P}(\mathcal{X})$ is said to satisfy \emph{$(\varepsilon,\delta)$-local differential privacy ($( \varepsilon,\delta)$-LDP)} if, for every pair of input distributions $P, P' \in \tilde{\mathcal{P}}$ and every measurable set $A \subseteq \mathcal{X}$, we have
\[
\mathbf{Q}(A \mid P) \leq e^{\varepsilon} \, \mathbf{Q}(A \mid P') + \delta.
\]
When $\delta = 0$, $\mathbf{Q}$ is $\eps$-LDP sampler \cite{husain2020local}, also referred to as \emph{pure LDP}. We let $\mathcal{Q}_{\mathcal{X}, \tilde{\mathcal{P}}, \varepsilon}$ and $\mathcal{Q}_{\mathcal{X}, \tilde{\mathcal{P}}, \varepsilon, \delta}$ denote the sets of all $\varepsilon$-LDP and $(\varepsilon, \delta)$-LDP samplers, respectively.

\end{definition}

% Definition \ref{def:approx-LDP} ensures privacy by requiring that the output of the sampling mechanism remains indistinguishable under any change in the input. In contrast, functional LDP interprets privacy within the framework of hypothesis testing. Specifically, let \(P\) and \(Q\) denote the output distributions of a private mechanism corresponding to two different inputs. Functional LDP measures privacy guarantee of the mechanism by quantifying the difficulty an adversary faces in distinguishing between these two distributions. This can be equivalently framed as the following binary hypothesis testing problem \citep{wasserman2010statistical,kairouz2015composition}:

Definition~\ref{def:approx-LDP} ensures privacy by requiring that the output of the sampler remains indistinguishable under any arbitrary changes in the input. In contrast, functional LDP \cite{dong2022gaussian} interprets privacy through the lens of hypothesis testing: given two inputs, let \(P\) and \(Q\) be the corresponding output distributions. The privacy guarantee is quantified by how hard it is for an adversary to distinguish \(P\) from \(Q\), which can be framed as a binary hypothesis testing problem \citep{wasserman2010statistical,kairouz2015composition}:
\[
H_0:\ \text{output} \sim P 
\quad\text{vs}\quad 
H_1:\ \text{output} \sim Q.
\]
Given a rejection rule \(\phi:\mathcal{X}\rightarrow[0,1]\), the Type I and Type II error rates  are defined as
\[
    a_\phi := \mathbb{E}_{P}[\phi],
    \qquad
    b_\phi := 1 - \mathbb{E}_{Q}[\phi],
\]
respectively.
%Based on this formulation, we define the trade-off function and the notion of $g$-FLDP.

\begin{definition}[Trade--off function]\label{def:trade-off}
Let \(P\) and \(Q\) be probability measures in $\mathcal{P}(\X)$. 
The \emph{trade--off function}
\(T(P,Q):[0,1]\to[0,1]\) is defined as
\[
   T(P,Q)(u)
   \;=\;
   \inf_{\phi}\,\bigl\{\,b_\phi : a_\phi \le u \bigr\},
\]
where the infimum is taken over all measurable rejection rules~\(\phi\).
\end{definition}

Notice that \(T(P,Q)(u)\) represents the smallest achievable Type II error
subject to Type I error being at most \(u\). A function $g$ is called a trade-off function if $g = T(P,Q)$ for some distributions $P$ and $Q$.

\begin{definition}[Functional LDP \citep{dong2022gaussian,lee2023minimax}]\label{def:gFLDP}
% Let \(g:[0,1]\to[0,1]\) be a convex, continuous, non-increasing function such that 
% \(g(t)\le 1-t\) for all \(t\in[0,1]\).
Given a trade-off function $g:[0,1]\to[0,1]$,
a sampler 
\(\mathbf{Q}~:~\tilde{\mathcal{P}}~\to~\mathcal{P}(\mathcal{X})\)
is said to satisfy \emph{\(g\)-functional local differential privacy} (\(g\)-FLDP) if for every pair of input distributions \(P, P' \in \tilde{\mathcal{P}}\) and every
\(u \in [0,1]\),
\[
   T\bigl(\mathbf{Q}(\cdot\mid P),\,
            \mathbf{Q}(\cdot\mid P')\bigr)(u)
   \;\ge\; g(u).
\]
We let $\mathcal{Q}_{\mathcal{X}, \tilde{\mathcal{P}}, g}$ denote the sets of all samplers $\mathbf{Q} : \tilde{\mathcal{P}} \to \mathcal{P}(\mathcal{X})$ satisfying $g$-FLDP.
\end{definition}
Note that functional LDP generalizes both approximate and pure LDP. Specifically, for any privacy parameters \( (\varepsilon, \delta) \), there exists a trade-off function \( g_{\varepsilon,\delta} \) such that \( g_{\varepsilon,\delta} \)-FLDP is equivalent to \( (\varepsilon,\delta) \)-LDP \citep[Proposition 3]{dong2022gaussian}. This function is given by $g_{\varepsilon, \delta}(\theta) = \max \left\{ 0, 1 - \delta - e^{\varepsilon} \theta,\ e^{-\varepsilon}(1 - \delta - \theta) \right\}$. The case of pure LDP corresponds to the special case $g_{\eps} \coloneqq g_{\eps,0}$.
%The case of pure LDP corresponds to the special case \( g_\varepsilon := g_{\varepsilon, 0} \). 
Another notable instance of \( g \)-FLDP is \( \nu \)-GLDP, where the trade-off function is given by
$G_\nu(\theta) = \Phi\left(\Phi^{-1}(1 - \theta) - \nu\right)$,
with \( \Phi \) denoting the standard normal CDF \citep{dong2022gaussian,lee2023minimax}.

% A function
% $g$ that satisfies the conditions in Definition~\ref{def:gFLDP} is a trade-off function. 
% \textcolor{blue}{[If such characterization of trade-off function is not used in the main body, it would be better to say $g$ is a trade-off function, and add in Definition 2.2 sentence like \textit{we say $g$ is a trade-off function if $g=TV(P,Q)$ for some distributions $P$ and $Q$.}]\HG{Yeah, the characterization of of $g$ in terms of its properties is only used in the appendix in som part of proof. It's updated now. }}

In order to quantify utility loss incurred by sampler $\bQ$, we use the \( f \)-divergence between the original distribution \( P \) and the resulting sampling distribution \( \bQ(P) \), defined as follows.
\begin{definition}[$f$-divergence \citep{ali1966general,csiszar1967information}]
\label{def:f-divergence}
Let \( f: (0,\infty) \to \mathbb{R} \) be a convex function with \( f(1) = 0 \). 
The $f$-divergence between $P, Q \in \mathcal{P}(\mathcal{X})$ with $P \ll Q$ is defined as
% \[D_f(P \| Q) = \mathbb{E}_Q\big[f(dP/dQ)\big].\]
$
D_f(P \| Q) := \mathbb{E}_Q\Big[ f\left( \frac{dP}{dQ} \right) \Big]
$.

% If $P \not\ll Q$, we set $D_f(P \| Q) := +\infty$.
\end{definition}
The above definition can be extended to cases where 
$\displaystyle P\ll Q$ is no longer satisfied as in Appendix~\ref{subsec: f-div}.
 Common examples of \( f \)-divergences include KL divergence, total variation distance, and squared Hellinger distance. We denote total variation distance between $P,Q\in\mathcal{P}(\mathcal{X})$ by \( D_{\mathrm{TV}}(P, Q) \). A key instance in this work is the \( E_\gamma \)-divergence (or hockey-stick divergence), defined as the $f$-divergence with \( f(t) = \max\{t - \gamma, 0\} \) for \( \gamma \geq 1 \)~\citep{sharma2013fundamental}. This divergence is particularly relevant as it defines the local neighborhood in our minimax formulation. We denote $E_\gamma$-divergence of two distributions $P$ and $Q$ as $E_\gamma(P \,\|\, Q)$.

Our most general result, covering both continuous and discrete spaces, requires a technical assumption on distributions, stated below and discussed further in Section~\ref{sec: global functional}.

\begin{definition}[Decomposability \citep{park2024exactly}]\label{def: decomposability}
    Let $\alpha \in (0,1)$ and $t,u \in \mathbb{N}$ with $t > u$. A probability measure $\mu \in \mathcal{P}(\mathcal{X})$ is \emph{$(\alpha, t, u)$-decomposable} if there exist sets $A_1, \dots, A_t \subseteq \mathcal{X}$ such that $\mu(A_i) = \alpha$ for all $i \in [t]$, and for every $x \in \mathcal{X}$, the number of sets $A_i$ containing $x$ is at most $u$.
%     \textcolor{blue}{[Do we need to define decomposability if we will state results for continuous case and discrete case each? If we are going to mention this in main body, we should mention that each cases satisfy the decomposability condition.]\HG{The original plan was to present the continuous case first, then the general case, and finally the discrete case—to highlight the generality of our framework. We also planned to note that the discrete case follows from the general one, so we only focus on the continuous case in later results. I agree that introducing the decomposability assumption could add complexity, though it does lead to a stronger statement, which I like. Let’s check with Shahab.}
% \HG{Regarding your second point,do you mean I should explain that the decomposability assumption is satisfied for continuous and discrete right after introducing it? After Theorem~\ref{thm: global functional}, I note that it holds in the continuous case and in the discrete case when 
% $P_0$ is uniform. Should I emphasize this elsewhere in the main body?} }
% \textcolor{blue}{[I see. I must have missed it. Sorry. So you are going from specific to general? Then I think it is good enough. I got confused by introducing continuous setting in the start of section3->general->discrete case.]}\HG{Exactly, I was going from specific (continuous) to generel then specific (discrete). }
\end{definition}
Establishing the exact optimal minimax risk in our proofs involves deriving matching upper bounds (the achievability part) and lower bounds (the converse part). For a unified proof over general sample space $\mathcal{X}$, the converse part relies on $(\alpha, t, 1)$-decomposability; namely,  \( \mathcal{X} \) contains \( t \) disjoint measurable subsets, each of measure \( \alpha \).  We note that this mild condition holds naturally for absolutely continuous distributions and for the uniform reference measure in the finite case.

% \textcolor{blue}{.But if we are not going to use the full definition of the decomposability, we could just add it in Assumption 3.2 with other assumptions on the measure space, then explain. Plus, I think the specific examples given below Theorem 3.5 ($\mathcal{X}=[k]$ etc.) is good. Also, the paper use different notions for the same value $t=\frac{1}{\alpha}=\frac{c_2-c_1}{1-c_1}$. We could also define $t:=\frac{c_2-c_1}{1-c_1}\in\mathbb{N}$ and unify the notion.]}
% \HG{Regarding adding decomposability assumption in Assumption 3.2, I am using Assumption 3.2 for all continuous results but I only need decomposability for Theorem \ref{thm: global functional}} \textcolor{blue}{[I see. Then it will be inappropriate.]}

\vspace{0.1cm}
\section{Global minimax-optimal samplers under functional LDP}\label{sec: global functional}
\vspace{0.1cm}

% As outlined in the introduction, our goal is to characterize the locally minimax-optimal sampler for the private sampling problem. It can be shown that the technical framework developed by \citet{park2024exactly} cannot be naturally extended to approximate LDP (due to the notion of decomposability). In order to make a unified proof technique for different notions of LDP, we first study the global minimax formulation under functional LDP—a generalization of the pure LDP setting in \citet{park2024exactly} that also encompasses approximate LDP and GDP. This section establishes the global minimax mechanism and its optimal value, which will serve as the foundation for our local analysis in the next section. More precisely, the global minimax private sampling problem is formulated as:

In this section, we study the global minimax samplers under the general framework of functional LDP and derive compact characterizations of the optimal samplers. These global samplers will serve as proxies for constructing local minimax-optimal samplers in subsequent sections.

Using $f$-divergence as the utility measure, we define the global minimax risk as
\begin{equation}\label{eqn: global minimax functional problem}
\mathcal{R}\big(\mathcal{Q}_{\mathcal{X},\tilde{\mathcal{P}},g},\tilde{\mathcal{P}},f\big) \coloneqq \inf_{\bQ \in \mathcal{Q}_{\mathcal{X},\tilde{\mathcal{P}},g}} \hspace{0.1cm} \sup_{P \in \tilde{\mathcal{P}}}\hspace{0.1cm} D_f\big(P \, \| \, \bQ(P)\big).
\end{equation}
As noted by~\citet{park2024exactly}, this risk becomes vacuous in the continuous setting if the distribution class $\tilde{\mathcal{P}}$ is unrestricted. To address this, they consider a restricted class of distributions defined as
\begin{equation}\label{eqn: set of distributions}
   \tilde{\mathcal{P}} = \tilde{\mathcal{P}}_{c_1, c_2, h} := \left\{ P \in \mathcal{C}(\mathbb{R}^n) : c_1 h(x) \leq p(x) \leq c_2 h(x), \quad \forall x \in \mathbb{R}^n \right\}, 
\end{equation}

for a reference function $h : \mathcal{X} \to [0, \infty)$ satisfying $\int_{\mathbb{R}^n} h(x)\,dx < \infty$, and constants $c_2 > c_1 \geq 0$. 
%This assumption is followed by prior works on sampling and generative models literature~\citep{husain2020local, goodfellow2020generative, park2024exactly}, that assume the set of all data distributions $\tilde{\mathcal{P}}$ satisfies $\tilde{\mathcal{P}} \subseteq \tilde{\mathcal{P}}_{c_1, c_2, h}$.
This class captures a broad range of practical distributions, including mixtures of Gaussians and Laplace distributions. See the example below and further discussion in Appendices~\ref{appendix: laplace mixture} and~\ref{appendix: gaussian ring}.
%\footnote{Proof of all subsequent results is deferred to the appendix.} 

\begin{example}\label{example: Gaussian Laplace universe}
    For $n,k \in \mathbb{N}$,
define $\tilde{\mathcal{P}}_{\mathcal{L}}  = \Big\{ \sum_{i=1}^k \lambda_i \mathcal{L}( m_i, b) :  \lambda_i \geq 0, \, \sum_{i=1}^k \lambda_i = 1, \, \|m_i\|_1 \leq 1 \Big\}$ and $\tilde{\mathcal{P}}_{\mathcal{N}}  = \Big\{ \sum_{i=1}^k \lambda_i \mathcal{N}(m_i, \sigma^2 I_n)  :  \lambda_i \geq 0, \, \sum_{i=1}^k \lambda_i = 1, \, \|m_i\|_2 \leq 1 \Big\}$. Here $m_i, x \in \mathbb{R}^n$ and $I_n$ denotes the $n \times n$ identity matrix.  It can be verified that $\tilde{\mathcal{P}}_{\mathcal{N}} \subseteq \tilde{\mathcal{P}}_{0,\, 1,\, h_{\mathcal{N}}}$ and  $\tilde{\mathcal{P}}_{\mathcal{L}} \subseteq \tilde{\mathcal{P}}_{e^{-1/b},\, e^{1/b},\, h_{\mathcal{L}}}$, where $h_{\mathcal{N}}(x) = \frac{1}{(2\pi \sigma^2)^\frac{n}{2}} \exp\big(-\frac{[\max(0, \|x\|_2 - 1)]^2}{2\sigma^2} \big)$ and  $h_{\mathcal{L}}(x) = \mathcal{L}(x \mid \mathbf{0}, b)$. 
\end{example}
Without loss of generality, we assume that $\int_{\mathbb{R}^n} h(x),dx = 1$ and $0 \leq c_1 < 1 < c_2$. Moreover, for technical reasons---explained in Appendix~\ref{proof: theorem global functional}--- we also assume that \( \frac{c_2 - c_1}{1 - c_1} \in \mathbb{N} \). This assumption is not restrictive and holds in many practical settings, e.g., Gaussian mixtures (see Example~\ref{example: Gaussian Laplace universe}). When it is not satisfied, one can slightly increase \( c_2 \) or decrease \( c_1 \) to enforce the condition, resulting in a superset of the original distribution class, i.e., \( \tilde{\mathcal{P}} \subseteq \tilde{\mathcal{P}}_{c_1, c_2, h} \). 

While many of our examples focus on continuous domains, our analysis applies more generally to \emph{any} sample space~\( \mathcal{X} \), provided the distributions are absolutely continuous with respect to a reference measure~\( \mu \). In this case, we define  $
\tilde{\mathcal{P}}_{c_1, c_2, \mu}
:= \{P\in \mathcal{P}(\mathcal{X}):P \ll \mu, ~c_1 \leq \frac{dP}{d\mu} \leq c_2, ~~ \mu\text{-a.e.}\}$, which we take as the generalized distribution class.
Thus, we make the following assumption. 
% \SA{The following assumption can be removed if we need to save up space!}
\begin{assumption}\label{assumption: norm}
We assume $\int_{\mathbb{R}^n} h(x)\,dx = 1$ and $\mu(\mathcal{X}) = 1$ for the classes $\tilde{\mathcal{P}}_{c_1, c_2, h}$ and $\tilde{\mathcal{P}}_{c_1, c_2, \mu}$, respectively. Additionally, we assume that $0 \leq c_1 < 1 < c_2$, and $\frac{c_2 - c_1}{1 - c_1} \in \mathbb{N}$. 
% \SA{Don't we want to assume $c_1<1<c_2$ for the latter case?}\HG{We do! I initially, framed the assumption in this way: For $\tilde{\mathcal{P}}_{c_1, c_2, h}$, we assume $\int_{\mathbb{R}^n} h(x)\,dx = 1$, $0 \leq c_1 < 1 < c_2$, and $\frac{c_2 - c_1}{1 - c_1} \in \mathbb{N}$. In the geneal sample space case, the condition   $\int_{\mathbb{R}^n} h(x)\,dx = 1$ is replaced by $\mu(\mathcal{X}) = 1$. Isn't this okay? } \SA{I changed it!}
\end{assumption}
The following proposition shows that an additional condition is required to exclude trivial samplers.
\begin{proposition}\label{prop: triviality functional}
Let $g$ be a trade-off function. Under Assumption~\ref{assumption: norm},  if \( 1 + g^*(-e^\beta) > \frac{(c_2 - c_1 e^\beta)(1 - c_1)}{c_2 - c_1} \) for all \( \beta \geq 0 \), then \( \mathbf{Q}(P) = P \) satisfies \( g \)-FLDP. 
% \SA{$\mathbf{Q}^I$ do we need this notation later? If not, remove it and write $\mathbf{Q}(P) = P $. Avoid using unnecessary notation!}\HG{right, no we don't need it in the main body. I changed it.}
\end{proposition}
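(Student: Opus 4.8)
The claim is that the identity sampler $\mathbf{Q}(P) = P$ satisfies $g$-FLDP under the stated quantitative condition on $g^*$. The plan is to verify the defining inequality of Definition~\ref{def:gFLDP} directly: for arbitrary $P, P' \in \tilde{\mathcal{P}}$, one must show $T(P, P')(u) \ge g(u)$ for all $u \in [0,1]$. Since the sampler is the identity, this reduces to a pure property of the distribution class $\tilde{\mathcal{P}}$ — namely, any two members are close enough (in the trade-off sense) that their trade-off function dominates $g$. The key structural fact is that for $P, P' \in \tilde{\mathcal{P}}_{c_1, c_2, \mu}$ (or the $h$-version), the density ratio $\frac{dP}{dP'}$ is sandwiched: $\frac{c_1}{c_2} \le \frac{dP}{dP'} \le \frac{c_2}{c_1}$ pointwise, which bounds how distinguishable $P$ and $P'$ can be.

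First I would translate the trade-off condition into a more tractable form. A standard fact (e.g.\ via the Neyman–Pearson lemma and convex duality, as in Dong et al.) is that $T(P,P')(u) \ge g(u)$ for all $u$ is equivalent to a family of linear constraints indexed by the slopes of $g$: roughly, $E_{e^\beta}(P \| P') \le 1 + g^*(-e^\beta)$ for all $\beta$, since the hockey-stick divergence $E_\gamma$ is exactly the support-function / conjugate characterization of the region above $g$. So it suffices to show that for every $\beta \ge 0$ and every $P, P' \in \tilde{\mathcal{P}}$, $$E_{e^\beta}(P \,\|\, P') \le 1 + g^*(-e^\beta).$$ (The $\beta < 0$ case should follow by the symmetry $E_\gamma$-type bound or be subsumed, since the roles of $P,P'$ are exchangeable and the class is symmetric.)

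Next I would compute $\sup_{P, P' \in \tilde{\mathcal{P}}} E_{e^\beta}(P \| P')$ explicitly in terms of $c_1, c_2$. Writing $r = \frac{dP}{d\mu}$ and $r' = \frac{dP'}{d\mu}$ with $c_1 \le r, r' \le c_2$ and $\int r \, d\mu = \int r' \, d\mu = 1$, we have $E_{e^\beta}(P\|P') = \int \max\{r - e^\beta r', 0\}\, d\mu$. The worst case is achieved by pushing $r$ up to $c_2$ and $r'$ down to $c_1$ on the set where the integrand is positive, and reversing elsewhere, subject to the normalization constraints; this is a linear program whose optimum is a function of $c_1, c_2, e^\beta$ only. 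I expect the maximizing value to come out to exactly $\frac{(c_2 - c_1 e^\beta)(1-c_1)}{c_2 - c_1}$ (matching the right-hand side of the hypothesis) — indeed this is the content of the assumption $\frac{c_2 - c_1}{1-c_1} \in \mathbb{N}$, which presumably guarantees a clean extremal configuration where a fraction of mass sits at $c_2$ on one side and $c_1$ on the other. Plugging this supremum into the previous display and invoking the hypothesis $1 + g^*(-e^\beta) > \frac{(c_2 - c_1 e^\beta)(1-c_1)}{c_2 - c_1}$ closes the argument.

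The main obstacle I anticipate is the linear-programming / extremal step: carefully solving $\sup_{P,P'} E_{e^\beta}(P\|P')$ over the constrained density class and confirming the closed form equals $\frac{(c_2 - c_1 e^\beta)(1-c_1)}{c_2 - c_1}$, including handling the normalization constraints and the role of the divisibility assumption in Assumption~\ref{assumption: norm}. A secondary technical point is justifying the equivalence between the trade-off-function formulation and the family of $E_\gamma$-divergence bounds (and checking the $\beta < 0$ regime and the endpoints $u = 0, 1$), but this is a known dictionary from the functional-DP literature that I would cite rather than reprove. Once the extremal computation is in hand, the rest is a direct substitution.
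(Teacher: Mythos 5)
Your proposal is correct and follows essentially the same route as the paper: your reduction of $g$-FLDP to the family of constraints $E_{e^\beta}(P\,\|\,P') \le 1+g^*(-e^\beta)$ for $\beta\ge 0$ is exactly Proposition~6 of Dong et al.\ (since $(\varepsilon,\delta)$-indistinguishability is precisely the hockey-stick bound $E_{e^\varepsilon}\le\delta$), and the extremal computation you defer is what the paper carries out via a two-case analysis on $\mu(A)$ relative to the threshold $\tfrac{1-c_1}{c_2-c_1}$, yielding exactly the bound $\tfrac{(c_2-c_1e^\beta)(1-c_1)}{c_2-c_1}$ you anticipate. The only refinements worth noting are that only the upper-bound direction of your LP is needed (so neither decomposability nor the divisibility assumption enters here), and that the degenerate case $c_2<c_1e^\beta$ is harmless since then $E_{e^\beta}(P\,\|\,P')=0\le 1+g^*(-e^\beta)$.
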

In view of this result, we assume throughout that for any given trade-off function $g$ the following condition holds:  \begin{equation}\label{eqn: non-trivial condition condition functional}
    \exists\, \beta \geq 0 \quad \text{such that} \quad 1 + g^*(-e^\beta) \leq \tfrac{(c_2 - c_1 e^\beta)(1 - c_1)}{c_2 - c_1}.
\end{equation}
Now, we are in order to state our first technical result. 
%The following theorem provides the optimal value and an optimal sampler for problem~\eqref{eqn: global minimax functional problem} for the continuous sample space $\X = \R^n$.
\begin{theorem}\label{thm: continuous global functional} 
Let $\tilde{\mathcal{P}}=\tilde{\mathcal{P}}_{c_1,c_2,h}$. Under Assumption \ref{assumption: norm}, the sampler $\mathbf{Q}_{c_1,c_2,h,g}^\star$ defined as a continuous distribution whose density is given by
\begin{equation}\label{eqn: continuous global functional opt mech}
   q^\star_g(P) (x)\coloneqq \lambda^\star_{c_1,c_2,g} p(x) + \left(1 - \lambda^\star_{c_1,c_2,g}\right) h(x), \quad \lambda^\star_{c_1,c_2,g} = \inf_{\beta \geq 0} 
\tfrac{e^\beta + \frac{c_2 - c_1}{1 - c_1} ( 1 + g^*(-e^\beta)) - 1}
     {(1 - c_1)e^\beta + c_2 - 1},
\end{equation}
satisfies $g$-FLDP, and is minimax-optimal under any \( f \)-divergence, that is
\begin{equation}\label{eqn: continuous global functional optimal value}
    \sup_{P \in \tilde{\mathcal{P}}} D_f\big(P \,\|\, \bQ^\star_{c_1,c_2,h,g}(P)\big)
    = \mathcal{R}\big(\mathcal{Q}_{\mathbb{R}^n, \tilde{\mathcal{P}}, g}, \tilde{\mathcal{P}}, f\big)
    = \frac{1 - r_1}{r_2 - r_1} f(r_2) + \frac{r_2 - 1}{r_2 - r_1} f(r_1),
\end{equation}
% where \( \beta^\star_g \) is the optimizer of the expression for $\lambda^\star_g$ in~\eqref{eqn: continuous global functional opt mech}, and
% \[
% r_1 = \frac{c_1}{c_2 - c_1} \cdot \frac{(1 - c_1)e^{\beta^\star_g} + c_2 - 1}{1 - (1 + g^*(-e^{\beta^\star_g}))}, \quad
% r_2 = \frac{c_2}{c_2 - c_1} \cdot \frac{(1 - c_1)e^{\beta^\star_g} + c_2 - 1}{e^{\beta^\star_g} + \frac{c_2 - 1}{1 - c_1} (1 + g^*(-e^{\beta^\star_g}))}.
% \]
for $r_1=\frac{c_1}{1-(1-c_1)\lambda_{c_1,c_2,g}^\star}$ and $r_2=\frac{c_2}{(c_2-1)\lambda_{c_1,c_2,g}^\star+1}$.
\end{theorem}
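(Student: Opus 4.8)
The plan is to split the argument into an achievability part (the proposed sampler $\mathbf{Q}_{c_1,c_2,h,g}^\star$ satisfies $g$-FLDP and attains the claimed value as its worst-case $f$-divergence) and a converse part (no $g$-FLDP sampler can do better). For achievability, first I would verify the privacy constraint: given two inputs $P,P'$, the outputs are $q_g^\star(P)=\lambda^\star p+(1-\lambda^\star)h$ and $q_g^\star(P')=\lambda^\star p'+(1-\lambda^\star)h$. Because both are mixtures that share the common component $(1-\lambda^\star)h$, the trade-off function $T(\mathbf{Q}^\star(\cdot\mid P),\mathbf{Q}^\star(\cdot\mid P'))$ is bounded below by the trade-off function of the mixtures against a pure-$h$ reference; using the post-processing / joint-convexity characterization of trade-off functions from \citet{dong2022gaussian} together with the bounds $c_1 h\le p,p'\le c_2 h$, this reduces to checking that the infimum defining $\lambda^\star_{c_1,c_2,g}$ is exactly the smallest mixing weight for which the pair of worst-case mixtures (arising from the extreme densities $c_1 h$ and $c_2 h$ on a decomposable partition) still dominates $g$. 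That is precisely where the Legendre-conjugate term $g^*(-e^\beta)$ enters: the constraint $T(\cdot,\cdot)(u)\ge g(u)$ for all $u$ is dual to a family of linear inequalities indexed by the slope $-e^\beta$, and $1+g^*(-e^\beta)$ is the support-function value of $g$ in direction $(e^\beta,1)$. Optimizing the mixing weight against all these linear constraints yields the stated $\inf_{\beta\ge0}$ formula.

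Second, for the utility of $\mathbf{Q}^\star$, I would compute $D_f(P\|q_g^\star(P))$ for $P\in\tilde{\mathcal{P}}_{c_1,c_2,h}$. Writing $t(x)=p(x)/h(x)\in[c_1,c_2]$, the likelihood ratio is $\tfrac{dP}{d\mathbf{Q}^\star(P)}(x)=\tfrac{t(x)}{\lambda^\star t(x)+(1-\lambda^\star)}$, which is a monotone increasing function of $t(x)$ ranging over $[r_1,r_2]$ with $r_1,r_2$ as defined in the statement. Hence $D_f(P\|\mathbf{Q}^\star(P))=\mathbb{E}_{\mathbf{Q}^\star(P)}[f(\psi(t(X)))]$ where $\psi$ is this monotone map; since $f$ is convex and $\psi(t)$ takes values in $[r_1,r_2]$, Jensen's inequality applied along the chord through $r_1$ and $r_2$ gives the upper bound $\frac{1-r_1}{r_2-r_1}f(r_2)+\frac{r_2-1}{r_2-r_1}f(r_1)$ (using that the $\mathbf{Q}^\star(P)$-mean of the likelihood ratio is $1$, so the appropriate convex combination of $r_1,r_2$ equals $1$). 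Equality is approached by (or attained at) distributions $P$ whose density ratio $t(x)$ is $\{c_1,c_2\}$-valued on the decomposable sets, so the supremum over $P\in\tilde{\mathcal{P}}$ equals the chord value; this is where Assumption~\ref{assumption: norm}, in particular $\frac{c_2-c_1}{1-c_1}\in\mathbb{N}$, is used to guarantee such an extreme two-valued normalized density exists.

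Third, for the converse I would show that \emph{every} $g$-FLDP sampler $\mathbf{Q}$ incurs worst-case $f$-divergence at least the chord value. Here I would exploit $(\alpha,t,1)$-decomposability of the reference: pick $t$ disjoint sets of $h$-measure $\alpha$ and consider the family of distributions in $\tilde{\mathcal{P}}$ that put the maximal relative density $c_2$ on one of these sets and $c_1$ (appropriately renormalized) elsewhere. Averaging the $g$-FLDP constraint over these "hard" instances and over the symmetric group acting on the sets, a data-processing / symmetrization argument forces $\mathbf{Q}$ to behave, on average, no better than a sampler whose output-versus-input likelihood ratio is confined to $[r_1,r_2]$, which by the same Jensen argument (now in the reverse direction, using convexity of $f$) lower-bounds the $f$-divergence by the chord value. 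The role of condition~\eqref{eqn: non-trivial condition condition functional} is to ensure $\lambda^\star_{c_1,c_2,g}<1$ so that $r_1<1<r_2$ and the bound is nontrivial. Much of this machinery—the reduction to extreme two-valued densities, the decomposability-based converse, and the Jensen chord computation—parallels \citet{park2024exactly}; the new ingredient is handling a general trade-off function $g$ rather than the pure-LDP curve $g_\varepsilon$, which is absorbed cleanly through the conjugate $g^*$.

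I expect the main obstacle to be the privacy verification: proving that $\lambda^\star_{c_1,c_2,g}$ as defined by the $\inf_\beta$ is exactly the threshold mixing weight for $g$-FLDP. This requires a clean characterization of when a two-component mixture family dominates a given trade-off function $g$, reducing an infinite family of inequalities (over all $u\in[0,1]$) to the one-parameter conjugate condition. Establishing both directions of this equivalence—sufficiency of $\lambda\ge\lambda^\star$ for $g$-FLDP, and necessity via a worst-case testing rule that is tight for the extreme inputs—is the technical heart of the theorem, and is where the properties of trade-off functions and their conjugates from \citet{dong2022gaussian} will be doing the real work.
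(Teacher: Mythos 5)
Your plan has the right skeleton (privacy verification, chord upper bound via bounded likelihood ratios, decomposability-based converse), but it takes a noticeably different route from the paper on the part you yourself flag as the technical heart. The paper does \emph{not} characterize from scratch when a two-component mixture family dominates a trade-off function $g$. Instead it first solves the $(\varepsilon,\delta)$-LDP version of the problem completely (Proposition~\ref{prop: global appx linear}: the optimal linear sampler has weight $\lambda^\star_{\varepsilon,\delta}=\frac{e^{\varepsilon}+\frac{c_2-c_1}{1-c_1}\delta-1}{(1-c_1)e^{\varepsilon}+c_2-1}$, with the converse obtained by summing the privacy constraint over the $t$ disjoint decomposable sets to bound $\min_i\bQ_i(A_i)$ and then applying data processing to a Bernoulli pair), and then invokes the known primal--dual correspondence of \citet{dong2022gaussian} (their Proposition~6): a sampler is $g$-FLDP iff it is $(\beta,\,1+g^*(-e^{\beta}))$-LDP for \emph{every} $\beta\ge0$. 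The $\inf_{\beta\ge0}$ in $\lambda^\star_{c_1,c_2,g}$ then falls out immediately: the converse holds for each $\beta$ separately (take the best one), and achievability of the infimal weight for all $\beta$ simultaneously is secured by the monotonicity Lemma~\ref{lem:decrease-lambda} (decreasing the mixing weight preserves $(\varepsilon,\delta)$-LDP). So the duality you propose to establish---reducing the infinite family of trade-off inequalities to the one-parameter conjugate condition $1+g^*(-e^{\beta})$---is exactly this cited result; your route would end up reproving it, which is legitimate but considerably more work than the paper's reduction. Your route buys a self-contained treatment of functional LDP; the paper's buys brevity and reuse of the approximate-LDP machinery.

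Two smaller points. First, your converse sketch (``averaging over the symmetric group'' and a ``reverse Jensen'' forcing the likelihood ratio into $[r_1,r_2]$) is vaguer than, and not quite isomorphic to, the paper's argument: the lower bound there comes from $D_f^B(c_2\alpha\,\|\,\bQ_i(A_i))$ with $\min_i\bQ_i(A_i)\le\frac{e^{\beta}+(t-1)\delta}{e^{\beta}+t-1}$ plus monotonicity of the binary $f$-divergence in its second argument, and it is only after substitution that this value coincides with the chord expression in $r_1,r_2$; no symmetrization is needed. Second, Theorem~\ref{thm: continuous global functional} is formally a specialization of the general-measure result, so the paper must (and does) verify that the reference measure with density $h$ is $(\alpha,1/\alpha,1)$-decomposable, via an intermediate-value construction of half-spaces of equal $h$-measure; your plan uses decomposability but does not note that it must be established for continuous $h$. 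Neither gap is fatal, but both would need to be filled to turn your outline into a complete proof.
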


We note that the non-triviality condition in Proposition~\ref{prop: triviality functional} is equivalent to \( \lambda_{c_1,c_2,g}^\star \leq 1 \), which allows a natural interpretation of the sampler $\mathbf{Q}_{c_1,c_2,h,g}^\star$: It samples from \( P \) with probability \( \lambda_{c_1,c_2,g}^\star \) and from the reference distribution with probability \(1- \lambda_{c_1,c_2,g}^\star \). This probabilistic mixture reflects the privacy–utility trade-off: a larger $ \lambda^\star_{c_1, c_2, g} $ yields outputs closer to $ P $, while sampling from $ h $ introduces the randomness required for privacy.
The same mixture structure yields a minimax-optimal sampler for the discrete case where \( \mathcal{X} = [k] \) and \( \tilde{\mathcal{P}} = \mathcal{P}([k]) \).
\begin{theorem}\label{thm: discrete global functional}
Let \( \tilde{\mathcal{P}} = \mathcal{P}([k]) \) for some \( k \in \mathbb{N} \), and let \( \mu_k \) denote the uniform distribution on \([k]\). Then the sampler defined as
\[
\bQ^\star_{k,g}(P) \coloneqq \lambda^\star_{k,g} P + (1 - \lambda^\star_{k,g}) \mu_k, \quad 
\lambda^\star_{k,g} = \inf_{\beta \geq 0} 
\tfrac{e^\beta + k ( 1 + g^*(-e^\beta)) - 1}
     {e^\beta + k - 1},
\]
satisfies $g$-FLDP, and is minimax-optimal under any \( f \)-divergence, that is
\[
\sup_{P \in \tilde{\mathcal{P}}} D_f\big(P \,\|\, \bQ^\star_{k,g}(P)\big)
= \mathcal{R}\big(\mathcal{Q}_{[k], \tilde{\mathcal{P}}, g}, \tilde{\mathcal{P}}, f\big).
\]
\end{theorem}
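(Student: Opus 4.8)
The plan is to run the same argument as the proof of Theorem~\ref{thm: continuous global functional}, now with $c_1=0$, $c_2=k$ and reference measure $\mu_k$ in place of $h$. Indeed $\mathcal{P}([k])=\tilde{\mathcal{P}}_{0,k,\mu_k}$ since any $P\in\mathcal{P}([k])$ satisfies $0\le\frac{dP}{d\mu_k}=kP(i)\le k$; these bounds meet Assumption~\ref{assumption: norm} for $k\ge2$ (the case $k=1$ is vacuous, the class being a singleton), and substituting $c_1=0,c_2=k$ into $\lambda^\star_{c_1,c_2,g}$ from~\eqref{eqn: continuous global functional opt mech} returns exactly $\lambda^\star_{k,g}$ and turns the optimal density into the mixture $\lambda^\star_{k,g}P+(1-\lambda^\star_{k,g})\mu_k$. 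The decomposability hypothesis used for the converse is automatic here: $\mu_k$ is $(\tfrac1k,k,1)$-decomposable via the $k$ singletons.

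For achievability I would first reduce the privacy check to extreme points. Since $E_\gamma(\cdot\,\|\,\cdot)$ is jointly convex and $P\mapsto\bQ^\star_{k,g}(P)$ is affine, the map $(P,P')\mapsto E_\gamma\!\big(\bQ^\star_{k,g}(P)\,\|\,\bQ^\star_{k,g}(P')\big)$ is convex on $\mathcal{P}([k])^2$, hence maximized at a pair of Diracs $(\delta_i,\delta_j)$. A direct computation gives $\bQ^\star_{k,g}(\delta_i)(\{i\})=\lambda^\star_{k,g}+\tfrac{1-\lambda^\star_{k,g}}{k}$ and $\bQ^\star_{k,g}(\delta_i)(\{\ell\})=\tfrac{1-\lambda^\star_{k,g}}{k}$ for $\ell\neq i$, so the optimal discriminating set is $\{i\}$ and $E_{e^\beta}\!\big(\bQ^\star_{k,g}(\delta_i)\,\|\,\bQ^\star_{k,g}(\delta_j)\big)=\max\{0,\ \lambda^\star_{k,g}+\tfrac{(1-\lambda^\star_{k,g})(1-e^\beta)}{k}\}$. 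Rearranging, the requirement $E_{e^\beta}\le 1+g^*(-e^\beta)$ for every $\beta\ge0$ is equivalent to $\lambda^\star_{k,g}\le 1+\tfrac{k\,g^*(-e^\beta)}{e^\beta+k-1}$ for every $\beta$, i.e. $\lambda^\star_{k,g}\le\inf_{\beta\ge0}\tfrac{e^\beta+k(1+g^*(-e^\beta))-1}{e^\beta+k-1}$, which holds with equality by the very definition of $\lambda^\star_{k,g}$. By the primal--dual characterization of functional DP (the family of hockey-stick bounds over all $\gamma=e^\beta\ge1$ is equivalent to the trade-off constraint, using that the worst pair is permutation-symmetric hence its trade-off function is symmetric), $\bQ^\star_{k,g}$ is $g$-FLDP. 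For the utility side, joint convexity of $D_f$ and affineness of $\bQ^\star_{k,g}$ again push the supremum over $P$ to a Dirac, and $\sup_{P}D_f\big(P\,\|\,\bQ^\star_{k,g}(P)\big)=D_f\big(\delta_i\,\|\,\bQ^\star_{k,g}(\delta_i)\big)=\psi(t^\star)$, where $\psi(t):=tf(1/t)+(1-t)f(0)$ and $t^\star:=\lambda^\star_{k,g}+\tfrac{1-\lambda^\star_{k,g}}{k}=\tfrac{(k-1)\lambda^\star_{k,g}+1}{k}$; this is the claimed achievable value (and simplifies to $\tfrac{1-r_1}{r_2-r_1}f(r_2)+\tfrac{r_2-1}{r_2-r_1}f(r_1)$ with $r_1=0$, $r_2=k/((k-1)\lambda^\star_{k,g}+1)$, matching the continuous formula).

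For the converse, let $\bQ$ be any $g$-FLDP sampler and write $t_i:=\bQ(\delta_i)(\{i\})$. From $g$-FLDP, $E_{e^\beta}\!\big(\bQ(\delta_j)\,\|\,\bQ(\delta_i)\big)\le 1+g^*(-e^\beta)$; testing against the set $\{j\}$ gives $t_j-e^\beta\,\bQ(\delta_i)(\{j\})\le 1+g^*(-e^\beta)$, i.e. $\bQ(\delta_i)(\{j\})\ge e^{-\beta}\big(t_j-1-g^*(-e^\beta)\big)$. Summing over $j$ (the total is $1$) and then over $i$, one obtains $\tfrac1k\sum_i t_i\le\inf_{\beta>0}\tfrac{e^\beta+(k-1)(1+g^*(-e^\beta))}{e^\beta+k-1}$, and an elementary algebraic identity shows this infimum equals $\tfrac{(k-1)\lambda^\star_{k,g}+1}{k}=t^\star$. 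Hence $\min_i t_i\le t^\star$. Now $D_f\big(\delta_i\,\|\,\bQ(\delta_i)\big)=\psi(t_i)$, and $\psi$ is convex with $\psi(1)=0=\min\psi$, so $\psi$ is nonincreasing on $[0,1]$; therefore $\sup_{P}D_f\big(P\,\|\,\bQ(P)\big)\ge\max_i\psi(t_i)=\psi(\min_i t_i)\ge\psi(t^\star)$. Combined with achievability this pins the minimax risk at $\psi(t^\star)$. The degenerate case where~\eqref{eqn: non-trivial condition condition functional} fails---equivalently $\lambda^\star_{k,g}\ge1$---is covered by Proposition~\ref{prop: triviality functional}: then $\bQ(P)=P$ is $g$-FLDP and the risk is $0$.

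The step I expect to be the main obstacle is the converse bookkeeping: choosing precisely the right hockey-stick inequalities and test sets so that, after summing over the $k$ Diracs, the combined constraint collapses to the single-parameter infimum $\inf_{\beta}\tfrac{e^\beta+(k-1)(1+g^*(-e^\beta))}{e^\beta+k-1}$ and this matches the achievable $t^\star$. A secondary subtlety is invoking the primal--dual equivalence between the trade-off formulation of $g$-FLDP and the $E_\gamma$-divergence profile, which needs the worst-case pair to be symmetric---true here by the permutation invariance of $\bQ^\star_{k,g}$ on Diracs.
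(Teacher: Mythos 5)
Your proposal is correct, but it proves the theorem by a genuinely different and more self-contained route than the paper. The paper's proof is a pure reduction: it checks that $\mathcal{P}([k]) = \tilde{\mathcal{P}}_{0,k,\mu_k}$, that $\tfrac{c_2-c_1}{1-c_1}=k\in\mathbb{N}$, that $\mu_k$ is $(\tfrac1k,k,1)$-decomposable (via the singletons), and that the non-triviality condition~\eqref{eqn: non-trivial condition condition functional} holds automatically because $g^*(-e^\beta)\le 0$; it then invokes Theorem~\ref{thm: global functional}, whose own proof routes through the $(\varepsilon,\delta)$-LDP minimax result (Proposition~\ref{prop: global appx linear}, using decomposability and the data-processing inequality for the converse and Lemma~\ref{lem: bounded f-div} for achievability), Lemma~\ref{lem:decrease-lambda}, and the Dong et al.\ equivalence between $g$-FLDP and the family of $(\varepsilon,\,1+g^*(-e^\varepsilon))$-LDP constraints. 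Your first paragraph performs exactly this hypothesis-checking, but you then replace the invocation of the general machinery with a direct finite-alphabet argument: convexity pushes both the privacy check and the utility supremum to Dirac inputs, the hockey-stick divergence between two Dirac outputs is computed in closed form and is at most $1+g^*(-e^\beta)$ precisely when $\lambda\le\lambda^\star_{k,g}$, and the converse sums hockey-stick constraints over the $k$ Diracs. The bookkeeping you flagged as the main obstacle does work out, provided that in the sum over $j$ you keep the exact value $t_i$ for the diagonal term $j=i$ and use the lower bound $\bQ(\delta_i)(\{j\})\ge e^{-\beta}\bigl(t_j-1-g^*(-e^\beta)\bigr)$ only for $j\ne i$; this yields $\sum_i t_i\le\frac{ke^\beta+k(k-1)(1+g^*(-e^\beta))}{e^\beta+k-1}$, and the algebraic identity equating $\inf_\beta\frac{e^\beta+(k-1)(1+g^*(-e^\beta))}{e^\beta+k-1}$ with $t^\star=\frac{(k-1)\lambda^\star_{k,g}+1}{k}$ checks out. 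Both proofs lean on the same trade-off/hockey-stick duality (Proposition~6 of Dong et al.), so the symmetry caveat you raise applies equally to the paper's argument. Your route buys an elementary, transparent proof specialized to $[k]$ with an explicit closed form $\psi(t^\star)$ for the risk; the paper's route buys brevity and uniformity with the continuous and general-space cases.
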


%Similar to Theorem~\ref{thm: continuous global functional}, the optimal value of the minimax problem in the finite setting can be expressed in closed form. Theorem~\ref{thm: discrete global functional} states only the optimal sampler. The optimal value can similarly be derived in all subsequent results where it is omitted.

Theorems~\ref{thm: continuous global functional} and~\ref{thm: discrete global functional} provide optimal samplers for continuous and finite sample spaces, respectively. The next theorem addresses the same task for general sample spaces under the assumption of decomposability.

\begin{theorem}\label{thm: global functional}
% Let \( \tilde{\mathcal{P}} = \tilde{\mathcal{P}}_{c_1, c_2, \mu} \) under Assumption \ref{assumption: norm}. Suppose \( \mu \) is \(( \alpha,\frac{1}{\alpha},1) \)-decomposable with \( \alpha = \frac{1 - c_1}{c_2 - c_1} \). Then the mechanism
% \begin{equation}\label{eqn: global functional opt mech}
% \bQ^\star_g(P) \coloneqq \lambda^\star_g P + (1 - \lambda^\star_g) \mu, \quad \lambda^\star_g = \inf_{\beta \geq 0} 
% \frac{e^\beta + \frac{c_2 - c_1}{1 - c_1} (1 + g^*(-e^\beta)) - 1}
%      {(1 - c_1)e^\beta + c_2 - 1},
% \end{equation}
% belongs to \( \mathcal{Q}_{\mathcal{X}, \tilde{\mathcal{P}}, g} \) and is minimax-optimal under any \( f \)-divergence:
% \begin{equation}\label{eqn: global functional optimal value}
%     \sup_{P \in \tilde{\mathcal{P}}} D_f\big(P \,\|\, \bQ^\star_g(P)\big)
%     = \mathcal{R}\big(\mathcal{Q}_{\mathcal{X}, \tilde{\mathcal{P}}, g}, \tilde{\mathcal{P}}, f\big).
% \end{equation}
Let \( \tilde{\mathcal{P}} = \tilde{\mathcal{P}}_{c_1, c_2, \mu} \) under Assumption~\ref{assumption: norm}, and suppose \( \mu \) is \( (\alpha, \frac{1}{\alpha}, 1) \)-decomposable with \( \alpha = \frac{1 - c_1}{c_2 - c_1} \). Then the sampler defined as \( \bQ^\star_{c_1,c_2,\mu,g}(P) = \lambda^\star_{c_1,c_2,g} P + (1 - \lambda^\star_{c_1,c_2,g}) \mu \) satisfies $g$-FLDP, and is minimax-optimal with respect to any $f$-divergence, where $\lambda_{c_1,c_2,g}^\star$ is defined in Theorem~\ref{thm: continuous global functional}.
% i.e., it is minimax-optimal under any \( f \)-divergence:

\end{theorem}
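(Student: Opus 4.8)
The plan is to show that both the worst-case risk of $\bQ^\star := \bQ^\star_{c_1,c_2,\mu,g}$ and the minimax risk $\mathcal{R}(\mathcal{Q}_{\mathcal{X},\tilde{\mathcal{P}},g},\tilde{\mathcal{P}},f)$ equal $\rho^\star := \tfrac{1-r_1}{r_2-r_1}f(r_2) + \tfrac{r_2-1}{r_2-r_1}f(r_1)$, the expression appearing in~\eqref{eqn: continuous global functional optimal value}. For the \emph{achievability} half (that $\bQ^\star$ is $g$-FLDP and $\sup_{P\in\tilde{\mathcal{P}}}D_f(P\,\|\,\bQ^\star(P)) \le \rho^\star$), I would reuse the argument from the proof of Theorem~\ref{thm: continuous global functional} essentially verbatim: it invokes only Assumption~\ref{assumption: norm}, the pointwise bounds $c_1 \le \tfrac{dP}{d\mu} \le c_2$, normalization $\int\tfrac{dP}{d\mu}\,d\mu=1$, and condition~\eqref{eqn: non-trivial condition condition functional}, none of which depend on the structure of $\mathcal{X}$. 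Writing $\lambda^\star := \lambda^\star_{c_1,c_2,g}$, $a_1:=1-\lambda^\star(1-c_1)$, $a_2:=1+\lambda^\star(c_2-1)$, the $\mu$-density of $\bQ^\star(P)$ lies in $[a_1,a_2]$, which (using $\alpha=\tfrac{1-c_1}{c_2-c_1}\le\tfrac{1}{2}$, a consequence of $\tfrac{1}{\alpha}\in\mathbb{N}$ and $c_2>1$) yields $E_{e^\beta}(\bQ^\star(P)\,\|\,\bQ^\star(P')) \le \alpha(a_2-e^\beta a_1)_+$ for all inputs $P,P'$ and $\beta\ge0$; the right side is at most $1+g^*(-e^\beta)$ by the definition of $\lambda^\star$ as an infimum, giving $g$-FLDP through the hockey-stick characterization. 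Likewise $\tfrac{dP}{d\bQ^\star(P)}\in[r_1,r_2]$ with $\mathbb{E}_{\bQ^\star(P)}[\tfrac{dP}{d\bQ^\star(P)}]=1$, so convexity of $f$ gives $D_f(P\,\|\,\bQ^\star(P))\le\rho^\star$ for every $P$. No decomposability is needed here.

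The substance is the matching converse $\mathcal{R}(\mathcal{Q}_{\mathcal{X},\tilde{\mathcal{P}},g},\tilde{\mathcal{P}},f)\ge\rho^\star$, where $(\alpha,\tfrac{1}{\alpha},1)$-decomposability enters. Set $m:=\tfrac{1}{\alpha}=\tfrac{c_2-c_1}{1-c_1}\in\mathbb{N}$, with $m\ge2$ since $c_2>1$; decomposability gives disjoint $A_1,\dots,A_m$ with $\mu(A_i)=\alpha$, which partition $\mathcal{X}$ $\mu$-a.e.\ because $\sum_i\mu(A_i)=1$. Define extremal inputs $P_i\in\tilde{\mathcal{P}}_{c_1,c_2,\mu}$ by $\tfrac{dP_i}{d\mu}=c_2\mathbf{1}_{A_i}+c_1\mathbf{1}_{\mathcal{X}\setminus A_i}$ (well defined since $c_2\alpha+c_1(1-\alpha)=1$). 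Bounding the minimax risk below by the Bayes risk under the uniform prior on $\{P_1,\dots,P_m\}$ gives $\mathcal{R}(\mathcal{Q}_{\mathcal{X},\tilde{\mathcal{P}},g},\tilde{\mathcal{P}},f) \ge \inf_{\bQ\in\mathcal{Q}_{\mathcal{X},\tilde{\mathcal{P}},g}}\tfrac{1}{m}\sum_{i=1}^m D_f(P_i\,\|\,\bQ(P_i))$.

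Next I would symmetrize over the group of measure-preserving bijections of $\mathcal{X}$ permuting the blocks $A_1,\dots,A_m$: averaging a $g$-FLDP sampler over this group keeps it $g$-FLDP (the $g$-FLDP samplers form a convex set, since $T(\sum_j w_j\mu_j,\sum_j w_j\nu_j)\ge\sum_j w_j T(\mu_j,\nu_j)$) and does not increase the average risk (by convexity of $Q\mapsto D_f(P\,\|\,Q)$ and invariance of $D_f$ under a common relabeling). So it suffices to bound $\tfrac{1}{m}\sum_i D_f(P_i\,\|\,\bQ(P_i))$ for samplers under which $\bQ(P_i)$ has $\mu$-density $s\mathbf{1}_{A_i}+t\mathbf{1}_{\mathcal{X}\setminus A_i}$, with common constants $s,t\ge0$ satisfying $\alpha s+(1-\alpha)t=1$; a routine argument also lets us assume $\bQ(P_i)\ll\mu$, as a $\mu$-singular component only inflates $D_f(P_i\,\|\,\bQ(P_i))$. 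For such a sampler $D_f(P_i\,\|\,\bQ(P_i))=\alpha s f(c_2/s)+(1-\alpha)t f(c_1/t)=:\psi(s,t)$ for all $i$, while the $g$-FLDP condition on the pair $\bQ(P_i),\bQ(P_j)$, $i\ne j$ (expressed via hockey-stick divergences, and using $\mu(A_i\cap A_j)=0$, $\mu(A_i)=\alpha$, $\beta\ge0$) reduces to $\alpha(s-e^\beta t)_+\le1+g^*(-e^\beta)$ for all $\beta\ge0$.

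It then remains to minimize $\psi$ over this feasible set. With $t=(1-\alpha s)/(1-\alpha)$, $u:=\alpha s$, and $v_2:=\alpha c_2$ (so $1-v_2=(1-\alpha)c_1$), one recognizes $\psi = u f(v_2/u)+(1-u)f\big(\tfrac{1-v_2}{1-u}\big)=D_f(\mathrm{Ber}(v_2)\,\|\,\mathrm{Ber}(u))$, which is nonincreasing in $u$ for $u\le v_2$. Parametrizing $s=1+\lambda(c_2-1)$ forces $t=1-\lambda(1-c_1)$ by normalization, and then the privacy constraint $\alpha(s-e^\beta t)_+\le1+g^*(-e^\beta)$ holding for all $\beta\ge0$ is equivalent to $\lambda\le\lambda^\star$; hence the largest feasible $s$ is $a_2=1+\lambda^\star(c_2-1)$, and $a_2<c_2$ (so $\alpha s\le\alpha a_2<v_2$ throughout the feasible set) because~\eqref{eqn: non-trivial condition condition functional} forces $\lambda^\star\le1$. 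Therefore $\min\psi=\psi(a_2,a_1)=\alpha a_2 f(c_2/a_2)+(1-\alpha)a_1 f(c_1/a_1)=\alpha a_2 f(r_2)+(1-\alpha)a_1 f(r_1)=\rho^\star$, using $c_2/a_2=r_2$, $c_1/a_1=r_1$, $\alpha a_2+(1-\alpha)a_1=1$, and that $(\alpha a_2,(1-\alpha)a_1)$ are the weights of the unique mean-one two-point distribution supported on $\{r_1,r_2\}$. Combined with achievability this gives the theorem. I expect the main obstacle to be the converse---specifically the symmetrization step: checking that it preserves $g$-FLDP, disposing of $\mu$-singular components of an arbitrary sampler, and verifying that the feasible region for $(s,t)$ is cut out exactly by $s\le a_2$, so that monotonicity of $\psi$ pins down the minimizer.
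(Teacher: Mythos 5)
Your achievability half matches the paper's in substance: the paper verifies $g$-FLDP by reducing to the family of $(\beta,\,1+g^*(-e^\beta))$-LDP constraints (Proposition 6 of Dong et al.\ plus Lemma~\ref{lem:decrease-lambda}), which is exactly your hockey-stick computation $E_{e^\beta}(\bQ^\star(P)\,\|\,\bQ^\star(P'))\le\alpha(a_2-e^\beta a_1)_+\le 1+g^*(-e^\beta)$, and the risk bound via $\tfrac{dP}{d\bQ^\star(P)}\in[r_1,r_2]$ is the paper's Lemma~\ref{lem: bounded f-div}. Your converse, however, is genuinely different. The paper routes through Proposition~\ref{prop: global appx linear}: it fixes the single optimizing privacy level $\varepsilon^\star_g$, uses only the constraints $e^{\varepsilon^\star_g}\bQ_1(A_i)+\delta(\varepsilon^\star_g)\ge\bQ_i(A_i)$, averages them over $i$ to bound $\min_i\bQ_i(A_i)$, and then applies the data-processing inequality with the indicator $\mathbf{1}_{A_i}$ to reduce to a Bernoulli divergence. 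You instead keep the entire family of hockey-stick constraints, symmetrize to a two-level density $(s,t)$, and recover $\lambda^\star$ as the binding constraint of a one-dimensional optimization. Your route is more self-contained (it never needs the attainment of the infimum defining $\lambda^\star$, which the paper proves separately via continuity of $h_g$) and makes transparent why $\lambda^\star$ is an infimum over $\beta$; the paper's route avoids any symmetrization and so works on an arbitrary decomposable measure space with no further structure.

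That difference is where your one real gap sits. The hypothesis that $\mu$ is $(\alpha,\tfrac{1}{\alpha},1)$-decomposable gives disjoint blocks of equal measure but does \emph{not} give measure-preserving bijections permuting them (take $\mathcal{X}=\{1,2,3\}$ with $\mu(\{1\})=\tfrac12$, $\mu(\{2\})=\mu(\{3\})=\tfrac14$, $\alpha=\tfrac12$: no measure isomorphism between $A_1=\{1\}$ and $A_2=\{2,3\}$ exists). Moreover, even when such bijections exist, averaging over them only makes the sampler exchangeable across blocks; it does not make the output density \emph{constant} on each block, which is what your reduction to the pair $(s,t)$ requires, and your dismissal of $\mu$-singular output components is asserted rather than proved. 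All three issues disappear if you reverse the order of operations: first push everything through the quantizer $x\mapsto i$ for $x\in A_i$ (a Markov kernel, so it preserves $g$-FLDP by post-processing and can only decrease $D_f(P_i\,\|\,\bQ(P_i))$ by Theorem~\ref{thm: dpi}, with $P_i$ mapping to $\mathrm{Ber}$-type distributions on $[m]$ unchanged in law), and only then symmetrize over permutations of the finite set $[m]$, where the group action is unproblematic. With that repair your optimization over $(s,t)$ is correct — the feasibility region is exactly $\lambda\le\lambda^\star$, and monotonicity of $D_f^B(v_2\,\|\,u)$ in $u\le v_2$ pins the minimizer at $\lambda=\lambda^\star$, giving $\rho^\star$ — so the argument closes.
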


Similar to Theorem~\ref{thm: global functional}, all subsequent results can be extended to a general sample space. For clarity of presentation, however, we state them in the continuous setting, while all proofs are given in full generality. 
  % \SA{This paragraph needs to be edited.}\HG{I changed and made it shorter. Is it better now?}

% As we extended the continuous sample space $\mathbb{R}^n$ in Theorem~\ref{thm: continuous global functional} to an arbitrary measurable space $\mathcal{X}$ in Theorem~\ref{thm: global functional}, 

Having established the optimal sampler for general \( g \)-FLDP, we present specific instantiations of the result for two widely studied settings: pure LDP and $\nu$-GLDP. First, we derive the optimal $\eps$-LDP sampling algorithm by setting $g=g_\eps$ in Theorem~\ref{thm: continuous global functional}, as stated in the following corollary.

% \begin{corollary}[$g_\eps$-FLDP as a special case]\label{cor: special case of f-LDP}\HG{check the functional LDP non-triviality condition!}
% Let $\tilde{\mathcal{P}} = \tilde{\mathcal{P}}_{c_1, c_2, \mu}$ satisfy the normalization condition~\eqref{eqn: normalization condition} and non-triviality conditions (\ref{eqn: non-trivial condition condition functional}), and assume that $\frac{c_2 - c_1}{1 - c_1} \in \mathbb{N}$. Suppose the reference measure $\mu$ is $(\alpha,\frac{1}{\alpha},1)$-decomposable with $\alpha = \frac{1 - c_1}{c_2 - c_1}$. Then the mechanism
% \begin{equation}
%     \bQ^\star_{g_\varepsilon}(P) \coloneqq \lambda^\star_{g_\varepsilon} P + (1 - \lambda^\star_{g_\varepsilon})\mu
% \end{equation}
% belongs to the class $\mathcal{Q}_{\mathcal{X}, \tilde{\mathcal{P}}, g_\varepsilon}$ and is minimax-optimal with respect to any $f$-divergence $D_f$, that is,
% \begin{equation}
%     \sup_{P \in \tilde{\mathcal{P}}} D_f\big(P \, \| \, \bQ^\star_{g_\varepsilon}(P)\big)
%     = \mathcal{R}\big(\mathcal{Q}_{\mathcal{X}, \tilde{\mathcal{P}}, g_\varepsilon}, \tilde{\mathcal{P}}, f\big),
% \end{equation}
% where 
% \[
% \lambda^\star_{g_\varepsilon} = \frac{e^\varepsilon - 1}{(1 - c_1)e^\varepsilon + c_2 - 1}.
% \]
% \end{corollary}

\begin{corollary}\label{cor: special case of f-LDP}
Let $\tilde{\mathcal{P}}=\tilde{\mathcal{P}}_{c_1,c_2,h}$. Under Assumption \ref{assumption: norm}, the sampler $\mathbf{Q}_{c_1,c_2,h,g_\eps}^\star$ defined as a continuous distribution whose density is given by
   $q^\star_{g_\eps}(P) (x)\coloneqq \lambda^\star_{c_1,c_2,g_\eps} p(x) + \left(1 - \lambda^\star_{c_1,c_2,g_\eps}\right) h(x)$, with $ \lambda^\star_{c_1,c_2,g_\eps} = \frac{e^\varepsilon - 1}{(1 - c_1)e^\varepsilon + c_2 - 1},
$
satisfies $g_\eps$-FLDP, and is minimax-optimal with respect to any \( f \)-divergence, that is
\begin{equation}\label{eqn: continuous global functional optimal value g_eps}
    \sup_{P \in \tilde{\mathcal{P}}} D_f\big(P \,\|\, \bQ^\star_{c_1,c_2,h,g_\eps}(P)\big)
    = \mathcal{R}\big(\mathcal{Q}_{\mathbb{R}^n, \tilde{\mathcal{P}}, g_\eps}, \tilde{\mathcal{P}}, f\big)
    = \frac{1 - r_1}{r_2 - r_1} f(r_2) + \frac{r_2 - 1}{r_2 - r_1} f(r_1),
\end{equation}
for $
r_1 = c_1 \cdot \frac{(1 - c_1)e^{\eps} + c_2 - 1}{c_2 - c_1 }$, and $
r_2 = \frac{c_2}{c_2 - c_1} \cdot \frac{(1 - c_1)e^{\eps} + c_2 - 1}{e^{\eps}}$.
% Let \( h: \mathbb{R}^n \to [0, \infty) \) be a probability density function, and let \( H \) be the corresponding probability measure. Define \( \tilde{\mathcal{P}} = \tilde{\mathcal{P}}_{c_1, c_2, h} \) under Assumption \ref{assumption: norm}. Then the sampler $
%     \bQ^\star_{g_\varepsilon}(P) \coloneqq \lambda^\star_{g_\varepsilon} P + (1 - \lambda^\star_{g_\varepsilon}) H$, with 
% $\lambda^\star_{g_\varepsilon} = \frac{e^\varepsilon - 1}{(1 - c_1)e^\varepsilon + c_2 - 1}$
% belongs to \( \mathcal{Q}_{\mathbb{R}^n, \tilde{\mathcal{P}}, g_\varepsilon} \) and is minimax-optimal under any \( f \)-divergence:
% \[
% \sup_{P \in \tilde{\mathcal{P}}} D_f\big(P \,\|\, \bQ^\star_{g_\varepsilon}(P)\big)
% = \mathcal{R}\big(\mathcal{Q}_{\mathbb{R}^n, \tilde{\mathcal{P}}, g_\varepsilon}, \tilde{\mathcal{P}}, f\big) = \frac{1 - r_1}{r_2 - r_1} f(r_2) + \frac{r_2 - 1}{r_2 - r_1} f(r_1),
% \]
% where
% $
% r_1 = c_1 \cdot \frac{(1 - c_1)e^{\eps} + c_2 - 1}{c_2 - c_1 }$, and $
% r_2 = \frac{c_2}{c_2 - c_1} \cdot \frac{(1 - c_1)e^{\eps} + c_2 - 1}{e^{\eps}}$.
\end{corollary}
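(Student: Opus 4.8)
The plan is to obtain Corollary~\ref{cor: special case of f-LDP} as a direct specialization of Theorem~\ref{thm: continuous global functional} with $g = g_\eps$, so the only real work is to evaluate the infimum defining $\lambda^\star_{c_1,c_2,g_\eps}$ in closed form and then simplify the resulting expressions for $r_1$ and $r_2$. First I would recall from the discussion following Definition~\ref{def:gFLDP} that $g_\eps = g_{\eps,0}$ has the explicit form $g_\eps(\theta) = \max\{0,\, 1 - e^\eps\theta,\, e^{-\eps}(1-\theta)\}$, and compute its convex conjugate $g_\eps^*$. Evaluating $g_\eps^*(-e^\beta) = \sup_{\theta \in [0,1]}\{-e^\beta\theta - g_\eps(\theta)\}$ for $\beta \geq 0$: since $g_\eps$ is piecewise linear and nonnegative and $-e^\beta\theta$ is decreasing, the supremum is attained at $\theta = 0$, where $g_\eps(0) = 1$, giving $g_\eps^*(-e^\beta) = -1$, hence $1 + g_\eps^*(-e^\beta) = 0$ for every $\beta \geq 0$.

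Substituting $1 + g_\eps^*(-e^\beta) = 0$ into the formula for $\lambda^\star_{c_1,c_2,g}$ from~\eqref{eqn: continuous global functional opt mech} collapses the numerator to $e^\beta - 1$, so
\[
\lambda^\star_{c_1,c_2,g_\eps} = \inf_{\beta \geq 0} \frac{e^\beta - 1}{(1-c_1)e^\beta + c_2 - 1}.
\]
Writing $s = e^\beta \in [1,\infty)$ and differentiating $\varphi(s) = \frac{s-1}{(1-c_1)s + c_2 - 1}$ with respect to $s$, the numerator of $\varphi'(s)$ is $(1-c_1)s + c_2 - 1 - (s-1)(1-c_1) = c_2 - c_1 > 0$ under Assumption~\ref{assumption: norm}, so $\varphi$ is strictly increasing on $[1,\infty)$ and the infimum is attained at $s = e^\beta = 1$, i.e.\ $\beta = 0$... wait, that would give $\lambda = 0$, which is trivial; I should double-check the sign. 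Since $\varphi(1) = 0$ and $\varphi$ is increasing, the infimum over $\beta \ge 0$ is $0$ — but this contradicts the claimed value $\frac{e^\eps - 1}{(1-c_1)e^\eps + c_2 - 1}$. The resolution is that the infimum in Theorem~\ref{thm: continuous global functional} must implicitly be restricted to the $\beta$ satisfying the non-triviality constraint~\eqref{eqn: non-trivial condition condition functional}, or more precisely the derivation of Theorem~\ref{thm: continuous global functional} shows $g_\eps$-FLDP forces $e^\beta \le e^\eps$; so I would instead argue directly that for $g_\eps$ the relevant range of $\beta$ is $[0,\eps]$ (this is where $g_{\eps}$ is ``active'' as a trade-off bound — any $\beta > \eps$ violates the $g_\eps$-FLDP hypothesis-testing characterization), on which $\varphi$ is increasing, so the infimum is at $\beta = \eps$, yielding exactly $\lambda^\star_{c_1,c_2,g_\eps} = \frac{e^\eps-1}{(1-c_1)e^\eps + c_2 - 1}$.

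Once $\lambda^\star_{c_1,c_2,g_\eps}$ is pinned down, the rest is algebra: plug this value into the expressions $r_1 = \frac{c_1}{1 - (1-c_1)\lambda^\star_{c_1,c_2,g_\eps}}$ and $r_2 = \frac{c_2}{(c_2-1)\lambda^\star_{c_1,c_2,g_\eps} + 1}$ from Theorem~\ref{thm: continuous global functional}. For $r_1$, compute $1 - (1-c_1)\lambda^\star = 1 - \frac{(1-c_1)(e^\eps-1)}{(1-c_1)e^\eps + c_2 - 1} = \frac{c_2 - c_1}{(1-c_1)e^\eps + c_2 - 1}$, so $r_1 = c_1 \cdot \frac{(1-c_1)e^\eps + c_2 - 1}{c_2 - c_1}$; and for $r_2$, compute $(c_2-1)\lambda^\star + 1 = \frac{(c_2-1)(e^\eps - 1) + (1-c_1)e^\eps + c_2 - 1}{(1-c_1)e^\eps + c_2 - 1} = \frac{e^\eps(c_2 - c_1)}{(1-c_1)e^\eps + c_2 - 1}$, so $r_2 = \frac{c_2}{c_2 - c_1} \cdot \frac{(1-c_1)e^\eps + c_2 - 1}{e^\eps}$, matching the statement. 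The claimed formula for the minimax risk in~\eqref{eqn: continuous global functional optimal value g_eps} is then immediate from~\eqref{eqn: continuous global functional optimal value}. The main obstacle is the subtle point in the second paragraph: justifying why the infimum over $\beta \ge 0$ effectively truncates at $\beta = \eps$ rather than collapsing to $0$; I expect this to follow from unwinding the proof of Theorem~\ref{thm: continuous global functional} (where the parameter $e^\beta$ is the likelihood ratio bound enforced by $g$-FLDP, which for $g_\eps$ is at most $e^\eps$), and I would cite that argument rather than reprove it. Everything else is routine substitution.
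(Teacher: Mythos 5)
Your overall strategy---specialize Theorem~\ref{thm: continuous global functional} to $g=g_\eps$, evaluate $\lambda^\star_{c_1,c_2,g_\eps}$, and simplify $r_1,r_2$---is exactly the paper's route (Appendix~\ref{appendix: proof of g_eps}), and your final algebra for $r_1$ and $r_2$ is correct. However, the key step is broken. Your computation of the conjugate is wrong: it is not true that $g_\eps^*(-e^\beta)=-1$ for all $\beta\ge 0$. The argument ``$-e^\beta\theta$ is decreasing, so the supremum is at $\theta=0$'' ignores that $-g_\eps(\theta)$ is \emph{increasing} in $\theta$; on $[0,\tfrac{1}{e^\eps+1}]$ the objective $-e^\beta\theta-g_\eps(\theta)=(e^\eps-e^\beta)\theta-1$ is increasing whenever $\beta<\eps$. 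The correct conjugate is $g_\eps^*(-e^\beta)=\tfrac{-e^\beta-1}{e^\eps+1}$ for $\beta\in[0,\eps]$ and $-1$ only for $\beta\ge\eps$, so $1+g_\eps^*(-e^\beta)=\tfrac{e^\eps-e^\beta}{e^\eps+1}>0$ on $[0,\eps)$. This is precisely why the infimum over all $\beta\ge 0$ does \emph{not} collapse to $0$.

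Your attempted repair---asserting that the infimum in Theorem~\ref{thm: continuous global functional} is ``implicitly restricted'' to $\beta\in[0,\eps]$ and attained at $\beta=\eps$---is not a valid reading of the theorem and is not how the paper resolves it. The infimum genuinely ranges over all $\beta\ge 0$ (for $\beta>\eps$ the induced $(\beta,\,1+g_\eps^*(-e^\beta))$-LDP constraints are simply weaker and never achieve the minimum because the ratio is increasing there, as you computed). With the correct conjugate, the infimum splits into two regimes: on $[\eps,\infty)$ the candidate value is $\tfrac{e^\eps-1}{(1-c_1)e^\eps+c_2-1}$ (at $\beta=\eps$), and on $[0,\eps]$ it is $\tfrac{e^\eps-1}{(e^\eps+1)(1-c_1)}$ (at $\beta=0$). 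One must then show the former is smaller, which the paper does using $c_1+c_2\ge 2$, a consequence of $\tfrac{c_2-c_1}{1-c_1}\in\mathbb{N}$ and $c_2>1$ in Assumption~\ref{assumption: norm}. Your proposal never performs this comparison and never invokes $c_1+c_2\ge 2$, so even granting the truncation heuristic, the identification of the minimizer is unjustified. The conclusion happens to be right, but the derivation of $\lambda^\star_{c_1,c_2,g_\eps}$ needs to be redone with the correct piecewise conjugate and the two-regime comparison.
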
 
 % \textcolor{blue}{[Maybe we can just remove the corollary and add 'Theorem 3.4 recovers the known result of park et al.' below Theorem 3.4, if we need space]} \SA{Since we need this result for experiments, I think we better keep pure LDP in the main body, but we don't have to present as a corollary. We can mention the result in the text and say it is exactly Theorem blah in Park et al. }\HG{I wrote it with the new notation. If we are short of space, or if stating as a corollary makes it complicated I can move it into text.}

Indeed, by framing $\eps$-LDP as a special case of functional LDP, Corollary \ref{cor: special case of f-LDP} reproduces the minimax risk of the optimal 
$\eps$-LDP sampler in \citep[Theorem 3.3]{park2024exactly}. %highlighting the generality of the functional LDP framework.
Next, we focus on the $\nu$-GLDP as a special case of functional LDP. 
\begin{corollary}[$\nu$-GLDP as a special case]\label{cor: special case GDP}
Let \( \tilde{\mathcal{P}} = \tilde{\mathcal{P}}_{c_1, c_2, h} \).
% Let \( h: \mathbb{R}^n \to [0, \infty) \) be a probability density function, and \( \tilde{\mathcal{P}} = \tilde{\mathcal{P}}_{c_1, c_2, h} \). 
% \SA{We already said $h$ is a pdf, so do we still need to say it again?} \HG{right, we don't need to say it again. I changed it.} 
Under Assumption \ref{assumption: norm}, the sampler  \eqref{eqn: continuous global functional opt mech} for $g=G_\nu$ belongs to $\mathcal{Q}_{\R^n, \tilde{\mathcal{P}}, G_\nu}$ and is minimax-optimal with respect to any $f$-divergence, that is,
\begin{equation}
    \sup_{P \in \tilde{\mathcal{P}}} D_f\big(P \, \| \, \bQ^\star_{G_\nu}(P)\big)
    = \mathcal{R}\big(\mathcal{Q}_{\R^n, \tilde{\mathcal{P}},G_\nu}, \tilde{\mathcal{P}}, f\big).
\end{equation}
\end{corollary}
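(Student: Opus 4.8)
The plan is to derive Corollary~\ref{cor: special case GDP} as a direct instantiation of Theorem~\ref{thm: continuous global functional} with $g = G_\nu$, where $G_\nu(\theta) = \Phi(\Phi^{-1}(1-\theta) - \nu)$ is the Gaussian trade-off function. Since Theorem~\ref{thm: continuous global functional} already establishes that $\mathbf{Q}^\star_{c_1,c_2,h,g}$ satisfies $g$-FLDP and is minimax-optimal under any $f$-divergence for an \emph{arbitrary} trade-off function $g$, the only genuine obligation is to check that $G_\nu$ is a valid trade-off function and that the non-triviality condition~\eqref{eqn: non-trivial condition condition functional} holds for $g = G_\nu$; everything else is inherited verbatim. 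So I would first recall that $G_\nu$ is indeed a trade-off function --- it equals $T(\mathcal{N}(0,1), \mathcal{N}(\nu,1))$, which is the content of the Gaussian DP literature \citep{dong2022gaussian} --- and is convex, continuous, decreasing on $[0,1]$ with $G_\nu(0) = 1 - \Phi(-\infty)\cdot$ type endpoint behavior; this makes Theorem~\ref{thm: continuous global functional} applicable.

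Next I would compute the convex conjugate $G_\nu^*(-e^\beta)$ explicitly so that the formula for $\lambda^\star_{c_1,c_2,G_\nu}$ becomes concrete. Writing $y = -e^\beta \le -1$, one has $G_\nu^*(y) = \sup_{\theta \in [0,1]} \{ \theta y - G_\nu(\theta) \}$. Substituting $u = \Phi^{-1}(1-\theta)$ so that $\theta = 1 - \Phi(u)$ and $G_\nu(\theta) = \Phi(u - \nu)$, the optimization becomes $\sup_u \{ (1-\Phi(u)) y - \Phi(u-\nu) \}$; differentiating in $u$ and using $\Phi'(u) = \phi(u)$ gives the stationarity condition $-y\,\phi(u) = \phi(u-\nu)$, i.e. $e^{-u^2/2} \cdot (-y) = e^{-(u-\nu)^2/2}$, which solves to $u^\star = \frac{\nu}{2} + \frac{1}{\nu}\log(-y) = \frac{\nu}{2} + \frac{\beta}{\nu}$. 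Plugging back yields a closed form $1 + G_\nu^*(-e^\beta) = \Phi\!\left(\frac{\beta}{\nu} + \frac{\nu}{2}\right) - e^\beta\, \Phi\!\left(\frac{\beta}{\nu} - \frac{\nu}{2}\right)$ (using $1 - \Phi(u^\star) + $ the affine term, and the standard GDP identity). This expression is exactly what is needed to make $\lambda^\star_{c_1,c_2,G_\nu}$ and hence $r_1, r_2$ fully explicit, and it is worth recording even though the corollary as stated only asserts optimality.

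The remaining step is to verify the non-triviality hypothesis~\eqref{eqn: non-trivial condition condition functional} for $g = G_\nu$: I must show there exists $\beta \ge 0$ with $1 + G_\nu^*(-e^\beta) \le \frac{(c_2 - c_1 e^\beta)(1-c_1)}{c_2 - c_1}$. Taking $\beta = 0$ reduces the right-hand side to $1 - c_1$ and, using the closed form above, the left-hand side to $\Phi(\nu/2) - \Phi(-\nu/2) = 2\Phi(\nu/2) - 1$; so it suffices that $2\Phi(\nu/2) - 1 \le 1 - c_1$, i.e. $c_1 \le 2(1 - \Phi(\nu/2)) = 2\Phi(-\nu/2)$. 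If this fails one takes a larger $\beta$, noting that the left-hand side $1 + G_\nu^*(-e^\beta)$ decreases toward $0$ faster than the right-hand side, since $e^\beta \Phi(\beta/\nu - \nu/2)$ grows and $\Phi(\beta/\nu + \nu/2) \to 1$; a short monotonicity/limit argument shows the inequality holds for $\beta$ sufficiently large unless $c_1$ and $c_2$ are in a degenerate regime already excluded by the standing assumption that the sampler is non-trivial. Once~\eqref{eqn: non-trivial condition condition functional} is confirmed, Theorem~\ref{thm: continuous global functional} applies word-for-word with $g = G_\nu$ and gives both membership of the sampler in $\mathcal{Q}_{\mathbb{R}^n, \tilde{\mathcal{P}}, G_\nu}$ and the claimed minimax identity, completing the proof.

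I expect the main (and only real) obstacle to be the explicit computation of the convex conjugate $G_\nu^*$ and confirming it has the right monotonicity in $\beta$ so that the non-triviality condition can always be arranged; the optimality and privacy claims themselves are immediate corollaries of the general functional-LDP theorem and require no new work. A minor bookkeeping point is to state the result in the continuous setting as written while noting (as the paper does for Theorem~\ref{thm: global functional}) that the general-sample-space version follows identically.
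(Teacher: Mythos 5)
Your overall route is the same as the paper's: Corollary~\ref{cor: special case GDP} is proved simply by instantiating the general functional-LDP theorem (Theorem~\ref{thm: global functional}/\ref{thm: continuous global functional}) with $g=G_\nu$, and then making $\lambda^\star_{c_1,c_2,G_\nu}$ explicit by computing $G_\nu^*(-e^\beta)$ — the paper cites \citep[Corollary~1]{dong2022gaussian} for the conjugate rather than re-deriving it. The membership and minimax-optimality claims indeed require no new work, so the skeleton of your argument is sound.

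Two concrete errors, however. First, your closed form for the conjugate has a sign slip: your stationarity computation correctly gives $u^\star=\tfrac{\nu}{2}+\tfrac{\beta}{\nu}$, but substituting back yields $1+G_\nu^*(-e^\beta)=\Phi\bigl(\tfrac{\nu}{2}-\tfrac{\beta}{\nu}\bigr)-e^\beta\,\Phi\bigl(-\tfrac{\nu}{2}-\tfrac{\beta}{\nu}\bigr)$, not $\Phi\bigl(\tfrac{\nu}{2}+\tfrac{\beta}{\nu}\bigr)-e^\beta\,\Phi\bigl(\tfrac{\beta}{\nu}-\tfrac{\nu}{2}\bigr)$. (Your version tends to $-\infty$ as $\beta\to\infty$, whereas $1+g^*(-e^\beta)$ is the $\delta$ of an $(\beta,\delta)$-LDP guarantee and must lie in $[0,1]$; the two expressions only agree at $\beta=0$.) Second, your treatment of the non-triviality condition~\eqref{eqn: non-trivial condition condition functional} is both unnecessary and wrong as argued: the paper imposes~\eqref{eqn: non-trivial condition condition functional} as a standing assumption immediately after Proposition~\ref{prop: triviality functional}, precisely because it does \emph{not} always hold. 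Your claim that it "can always be arranged" by taking $\beta$ large fails when $c_1>0$: with the correct conjugate the left-hand side stays nonnegative while the right-hand side $\tfrac{(c_2-c_1e^\beta)(1-c_1)}{c_2-c_1}$ tends to $-\infty$, so the inequality fails for all large $\beta$. Neither error invalidates the corollary's literal statement (which follows from the general theorem under the standing assumption), but both would corrupt the explicit formula for $\lambda^\star_{G_\nu}$ and any downstream use of it.
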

% \HG{ I know  $\lambda_{c_1,c_2,G_\nu}^\star$ does not have a very clean expression and makes the result complicated. But is it okay to mention Corollary \ref{cor: special case GDP} in the above format? I feel it does not give an additional information than Theorem \ref{thm: continuous global functional}. I mean is it okay not to mention its specific $\lambda_{c_1,c_2,G_\nu}^\star$? Should I refer to appendix for how to derive the sampler in detail?} \SA{I think so! Maybe we need to add a line after corollary say that the explicit description of the sampler is given in Appendix..}
The explicit description of the optimal sampler under $\nu$-GLDP is given in Appendix~\ref{proof: GDP}.

As outlined in the introduction, the global minimax formulation under functional LDP serves as the basis for our local analysis. We now turn to the local minimax setting, first under general  $g$-FLDP (Section~\ref{sec: local functional}) and then under pure LDP (Section~\ref{sec: local pure}), leveraging Theorem \ref{thm: continuous global functional} and Corollary~\ref{cor: special case of f-LDP}, respectively.

% In this section, we studied the global minimax sampler under functional LDP setting and derived optimal mechanisms and values for any $f$-divergence. In the next section, we turn to the local minimax setting, where input distributions lie within a neighborhood of a reference distribution $P_0$.

% where 

% \begin{align*}
% \lambda^\star_{G_\nu} & =\inf_{\beta \geq 0} 
%     \frac{e^\beta + \Big(\frac{c_2 - c_1}{1 - c_1}\Big) \bigg( 1 -e^\beta \Phi\left(-\frac{\nu}{2} - \frac{\beta}{\nu}\right) - \Phi\left(-\frac{\nu}{2} + \frac{\beta}{\nu}\right) \bigg) - 1}
%          {(1 - c_1)e^\beta + c_2 - 1}\\
%          & = \inf_{\beta \geq 0} 
%     \frac{e^\beta + \Big(\frac{c_2 - c_1}{1 - c_1}\Big) \bigg( \Phi\left(\frac{\nu}{2} - \frac{\beta}{\nu}\right)  -e^\beta \Phi\left(-\frac{\nu}{2} - \frac{\beta}{\nu}\right)  \bigg) - 1}
%          {(1 - c_1)e^\beta + c_2 - 1}.    
% \end{align*}

\vspace{0.1cm}
\section{Local minimax-optimal sampling under functional LDP}\label{sec: local functional}
\vspace{0.1cm}
In this section, we begin by formalizing the local minimax setting using a neighborhood structure induced by the $E_\gamma$-divergence. We then develop a general framework for identifying local minimax-optimal samplers over arbitrary sample spaces.

The local minimax-optimal sampling problem adopts the same structural form as the global minimax problem, with a key distinction: instead of selecting the worst-case distribution from the entire space of the general class \( \tilde{\mathcal{P}} \), the search is restricted to a neighborhood around a fixed distribution $P_0$. This reference distribution $P_0$  may represent, for example, a publicly available dataset held by an individual or shared collaboratively in a distributed setting. Formally, the local minimax problem under \( g \)-FLDP is defined as:
\begin{equation}\label{eqn: local minimax functional problem} \mathcal{R}\big(\mathcal{Q}_{\mathcal{X},\tilde{\mathcal{P}},g},N_\gamma(P_0),f\big) \coloneqq \inf_{\bQ \in \mathcal{Q}_{\mathcal{X},\tilde{\mathcal{P}},g}} \hspace{0.1cm} \sup_{P \in N_\gamma(P_0)}\hspace{0.1cm} D_f\big(P \, \| \, \bQ(P)\big),
\end{equation}
where $N_\gamma(P_0)$ denotes a neighborhood around $P_0$, recently adopted by  \citet{feldman2024instance}, and is defined for any $\gamma\geq 1$ as
% \begin{equation}\label{eqn: ngb}N_\gamma(P_0) \coloneqq  \left\{ P \in \mathcal{P}(\mathcal{X}) :\quad \frac{1}{\gamma} \leq \frac{P(x)}{P_0(x)} \leq \gamma \quad \forall x \in \mathcal{X} \right\} \end{equation}
\begin{equation}\label{eqn: ngb}N_\gamma(P_0) \coloneqq  \left\{ P \in \mathcal{P}(\mathcal{X}) :\quad E_\gamma(P \, \| \, P_0) =  E_\gamma(P_0 \, \| \, P) = 0  \right\}. \end{equation}
The zero-\( E_\gamma \) neighborhood generalizes the classical notion of total variation distance. Specifically, for \( \gamma \geq 1 \), \( E_\gamma(P \| Q) = 0 \) implies \( D_{\mathrm{TV}}(P, Q) \leq 1 - \tfrac{1}{\gamma} \) \citep[Proposition~4]{liu2016e_}. Moreover, we can straightforwardly extend this neighborhood to the more general case by replacing \( E_\gamma(P \| P_0) = 0 \) with \( E_\gamma(P \| P_0) \leq \zeta \), meaning the likelihood ratio lies in \( [1/\gamma, \gamma] \) with probability at least \( 1 - \zeta \) under \( P_0 \). This, in turn, enables further generalization to any \( f \)-divergence with a twice differentiable function \( f \), which is a common trick in information theory \citep{polyanskiy2015dissipation,raginsky2016strong,zamanlooy2023strong,asoodeh2024contraction}.
As in the global case (Section~\ref{sec: global functional}), we assume \( \gamma \in \mathbb{N} \) for the same technical reasons. Theorem~\ref{thm: local functional} presents the corresponding locally minimax-optimal sampler, connecting the global minimax analysis of Section~\ref{sec: global functional} with the local setting under functional LDP.

% \begin{figure}[ht]
%   \centering
%   \includegraphics[width=0.6\linewidth]{figures/local_functional.png}
%   \caption{Illustration of the optimal mechanism $\bQ^\star_{g, N_\gamma(P_0)}$ described in Theorem~\ref{thm: local functional}.}
%   \label{fig:local-functional}
% \end{figure}

% \begin{wrapfigure}{r}{0.3\textwidth}
%   \centering
%   \includegraphics[width=0.45\textwidth]{figures/local_functional.png}
 % adjust as needed
%   \caption{Illustration of the optimal mechanism $\bQ^\star_{g, N_\gamma(P_0)}$.}
%   \label{fig:local-functional}
% \end{wrapfigure}
\begin{theorem}\label{thm: local functional}
Let \( P_0 \) be a continuous distribution on \( \mathbb{R}^n \). Define \( N_\gamma(P_0) \)  as in~\eqref{eqn: ngb}, and assume \( N_\gamma(P_0) \subseteq \tilde{\mathcal{P}} \).
Let \( \bQ^\star_g \) denote the global sampler from~\eqref{eqn: continuous global functional opt mech}, instantiated with parameters \( c_1 = \frac{1}{\gamma} \), \( c_2 = \gamma \), and \( h = p_0 \).
We define the sampler \( \bQ^\star_{g, N_\gamma(P_0)} \) as
\[
\bQ^\star_{g, N_\gamma(P_0)}(P) \coloneqq
\begin{cases}
\bQ^\star_g(P), & \text{if } P \in N_\gamma(P_0), \\[1pt]
\bQ^\star_g(\hat{P}), & \text{otherwise},
\end{cases}
\]
where \( \hat{P} \in N_\gamma(P_0) \) is a distribution that minimizes \(  D_f(P \,\|\, P') \) over all $P' \in N_\gamma(P_0)$.
Then \( \bQ^\star_{g, N_\gamma(P_0)} \in \mathcal{Q}_{\mathbb{R}^n, \tilde{\mathcal{P}}, g} \) and is locally minimax-optimal under any \( f \)-divergence, that is
\[
\sup_{P \in N_\gamma(P_0)} D_f\big(P \,\|\, \bQ^\star_{g, N_\gamma(P_0)}(P)\big)
= \mathcal{R}\big(\mathcal{Q}_{\mathbb{R}^n, \tilde{\mathcal{P}}, g}, N_\gamma(P_0), f\big).
\]
\end{theorem}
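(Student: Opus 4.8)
The plan is to reduce the local minimax problem over $N_\gamma(P_0)$ to the global minimax problem of Theorem~\ref{thm: continuous global functional} with the identification $c_1 = 1/\gamma$, $c_2 = \gamma$, $h = p_0$, and then transport both the optimality of the sampler and the value of the risk through that identification. The first and most important step is the observation that $N_\gamma(P_0)$ \emph{coincides} with the class $\tilde{\mathcal{P}}_{1/\gamma, \gamma, p_0}$: by~\eqref{eqn: ngb}, $P \in N_\gamma(P_0)$ iff the likelihood ratio $dP/dP_0$ lies in $[1/\gamma, \gamma]$ $P_0$-almost everywhere (using $E_\gamma(P\|P_0)=0 \iff dP/dP_0 \le \gamma$ a.e., and symmetrically $E_\gamma(P_0\|P)=0 \iff dP_0/dP \le \gamma$ a.e.), which is exactly the definition of $\tilde{\mathcal{P}}_{1/\gamma,\gamma,p_0}$ after noting $P_0$ itself has $\int p_0 = 1$ so Assumption~\ref{assumption: norm} is met with $h = p_0$ (and $\gamma\in\mathbb{N}$ makes $\frac{c_2-c_1}{1-c_1} = \frac{\gamma - 1/\gamma}{1 - 1/\gamma} = \gamma + 1 \in \mathbb{N}$, and $0 \le 1/\gamma < 1 < \gamma$). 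I would spell this equivalence out carefully, including the a.e.\ measure-zero bookkeeping, as it is the conceptual heart of the theorem.

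Given this identification, the global result of Theorem~\ref{thm: continuous global functional} applies \emph{verbatim} with $\tilde{\mathcal{P}} = N_\gamma(P_0) = \tilde{\mathcal{P}}_{1/\gamma,\gamma,p_0}$, so $\bQ^\star_g$ restricted to $N_\gamma(P_0)$ is $g$-FLDP \emph{as a map on $N_\gamma(P_0)$} and achieves $\sup_{P\in N_\gamma(P_0)} D_f(P\|\bQ^\star_g(P)) = \mathcal{R}(\mathcal{Q}_{\mathbb{R}^n, N_\gamma(P_0), g}, N_\gamma(P_0), f)$, which by definition is a lower bound for $\mathcal{R}(\mathcal{Q}_{\mathbb{R}^n,\tilde{\mathcal{P}},g}, N_\gamma(P_0), f)$ since restricting an $\tilde{\mathcal{P}}$-sampler to $N_\gamma(P_0)$ gives an $N_\gamma(P_0)$-sampler with the same $g$-FLDP guarantee and the same worst-case $f$-divergence over $N_\gamma(P_0)$. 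So the converse (lower bound) direction is immediate from the global theorem; no new work is needed there.

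The remaining task is the achievability direction: I must check that the \emph{extension} $\bQ^\star_{g,N_\gamma(P_0)}$ — which applies $\bQ^\star_g$ to the $f$-projection $\hat P$ of $P$ onto $N_\gamma(P_0)$ whenever $P\notin N_\gamma(P_0)$ — is (i) a valid $g$-FLDP sampler as a map $\tilde{\mathcal{P}}\to\mathcal{P}(\mathbb{R}^n)$, and (ii) has worst-case $f$-risk over $N_\gamma(P_0)$ equal to the global value. Part (ii) is trivial: on $N_\gamma(P_0)$ the extension equals $\bQ^\star_g$, so $\sup_{P\in N_\gamma(P_0)} D_f(P\|\bQ^\star_{g,N_\gamma(P_0)}(P)) = \sup_{P\in N_\gamma(P_0)} D_f(P\|\bQ^\star_g(P))$, which we already identified with the global minimax value. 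For part (i) — the $g$-FLDP property — I need: for any $P, P' \in \tilde{\mathcal{P}}$, the trade-off function between the outputs dominates $g$. There are three cases. If both are in $N_\gamma(P_0)$, this is the global theorem. If neither is, the outputs are $\bQ^\star_g(\hat P)$ and $\bQ^\star_g(\hat{P'})$ with $\hat P, \hat{P'} \in N_\gamma(P_0)$, and again the global theorem (applied to the pair $\hat P, \hat{P'}$) gives the bound. If one is in and one is out, say $P\in N_\gamma(P_0)$, $P'\notin$, the outputs are $\bQ^\star_g(P)$ and $\bQ^\star_g(\hat{P'})$ with both arguments in $N_\gamma(P_0)$, so once more the global $g$-FLDP guarantee for $\bQ^\star_g$ over $N_\gamma(P_0)$ covers it. The point is that every output of the extension is of the form $\bQ^\star_g(\tilde P)$ for some $\tilde P \in N_\gamma(P_0)$, so $g$-FLDP transfers directly from $\bQ^\star_g$ on $N_\gamma(P_0)$.

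\textbf{Main obstacle.} The only genuinely delicate point is the first step: verifying that $N_\gamma(P_0)$ as defined via zero $E_\gamma$-divergence in both directions is \emph{exactly} the density-ratio class $\tilde{\mathcal{P}}_{1/\gamma,\gamma,p_0}$, handling absolute continuity (a distribution with bounded likelihood ratio in $[1/\gamma,\gamma]$ is mutually absolutely continuous with $P_0$) and the almost-everywhere qualifications so that Assumption~\ref{assumption: norm} and the hypotheses of Theorem~\ref{thm: continuous global functional} are literally satisfied. A secondary, purely existence-level point is ensuring the $f$-projection $\hat P$ onto $N_\gamma(P_0)$ exists (or picking an approximate minimizer and passing to a limit), since $N_\gamma(P_0)$ is a convex, weakly-compact set of distributions and $D_f(P\|\cdot)$ is lower semicontinuous — but this does not affect the risk identity, since the extension's behavior off $N_\gamma(P_0)$ is irrelevant to the supremum over $N_\gamma(P_0)$, and is only needed to make the extension well-defined as an honest $g$-FLDP map on all of $\tilde{\mathcal{P}}$.
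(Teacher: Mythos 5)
Your proposal is correct and follows essentially the same route as the paper's proof: identify $N_\gamma(P_0)$ with $\tilde{\mathcal{P}}_{1/\gamma,\gamma,p_0}$, get the lower bound by restricting any $\tilde{\mathcal{P}}$-sampler to $N_\gamma(P_0)$, and get the matching upper bound by invoking the global theorem on the restricted class and extending via $\hat P$. Your explicit three-case check that the extension remains $g$-FLDP is slightly more careful than the paper's "by construction," but it is the same argument.
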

Theorem~\ref{thm: local functional} establishes that the local minimax-optimal sampler for the problem in~\eqref{eqn: local minimax functional problem} coincides with the global solution in~\eqref{eqn: global minimax functional problem} when the universe is restricted to the $N_\gamma$ neighborhood around \( P_0 \). In this sense, the global solution presented in Theorem~\ref{thm: continuous global functional} directly yields the local solution in Theorem~\ref{thm: local functional}. 
% By setting \( g = g_\varepsilon \) in Theorem~\ref{thm: local functional}, we recover the local minimax sampler for pure LDP, as shown in Corollary~\ref{cor: special case of f-LDP}. In the next section, however, we focus specifically on the pure LDP setting and introduce a mechanism with improved pointwise performance.

By instantiating Theorem~\ref{thm: local functional} with $g = g_\eps$, we obtain an $\eps$-LDP sampler that is locally minimax-optimal. However, as we demonstrate in the next section, this sampler can still be improved in a \textit{pointwise} sense.

%In the next section, we focus on the local minimax samplers under pure LDP and identify a fundamentally different construction that is \textit{pointwise} superior to the pure-LDP sampler obtained by instantiating 

%As a special case, setting \( g = g_\varepsilon \) in Theorem~\ref{thm: local functional} yields a local minimax sampler for pure LDP. Section \ref{sec: local pure} focuses on local minimax under $\eps$-LDP and presents a sampler with better pointwise performance than this instantiation.

\vspace{0.1cm}
\section{Local minimax-optimal sampling under pure LDP}\label{sec: local pure}
\vspace{0.1cm}

In this section, we turn our attention to pure-LDP sampling and develop a new framework that is fundamentally different from the approach presented in the previous section. Our goal is to identify a sampler that is \textit{pointwise} better than the one obtained from Theorem~\ref{thm: local functional} with $g = g_\eps$.  The construction of such sampler is presented in the next theorem.
\setlength{\textfloatsep}{0.6cm}
\begin{figure}[t]
  \centering
  \includegraphics[width=0.8\linewidth]{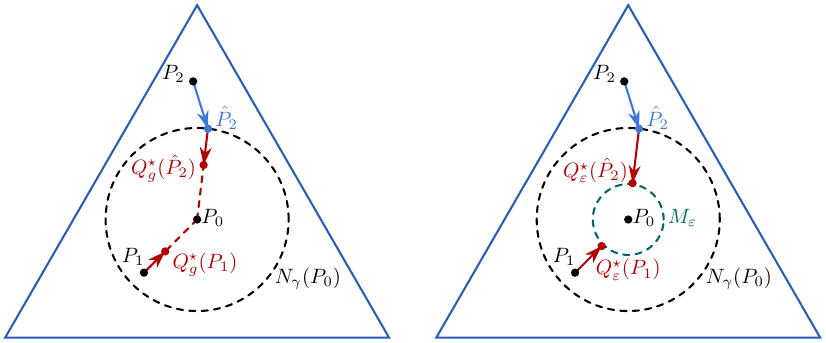}
  \caption{
    Illustrations of the optimal samplers described in Theorems~\ref{thm: local functional} and~\ref{thm: local pure}. 
    \textbf{Left:} $\bQ^\star_{g, N_\gamma(P_0)}$ for functional LDP. 
    \textbf{Right:} $\bQ^\star_{\varepsilon, N_\gamma(P_0)}$ for pure LDP. Following \citep[Proposition 3.4]{park2024exactly}, $M_\eps$  is defined as $M_\eps \coloneqq \{ Q \in \mathcal{C}(\mathbb{R}^n): \hspace{0.1cm} \frac{\gamma + 1}{\gamma + e^\varepsilon} \, p_0(x)  \leq q(x) \leq \frac{(\gamma + 1)e^\varepsilon}{\gamma + e^\varepsilon} \, p_0(x), \hspace{0.1cm}\forall x \in \mathbb{R}^n \}$.} 
  \label{fig:local-mechanisms}
  % \SA{Do we have the definition of $M_\eps$ in appendix?The caption reads a bit cryptically! }\HG{I don't have the definition in the appendix right now, but it is basically the set that $f$-divergence projection of $P$ into $M_\eps$ gives us the optimal sampler. (Similar to Hyun-Young's result but for local neighborhood) Should I include it in the appendix? I can create a section in appendix and formally show that optimal clipping is equivalent to $f$-divergence projection similar to Proposition 3.4 in \citep{park2024exactly}.} \SA{Maybe just restate the Hyun-Young result and clearly defined $M_\eps$ and cite to it here.}\HG{done! Is it okay now?} \SA{Awesome!}
  % is the set onto which projecting any distribution in $\ngb$ ensures $\varepsilon$-LDP and minimizes the local minimax risk.
\end{figure}
\begin{theorem}\label{thm: local pure}
Let \( P_0 \) be a continuous distribution on \( \mathbb{R}^n \) with density \( p_0\) and $N_\gamma(P_0)$ be the neighborhood around $P_0$ defined in \eqref{eqn: ngb}. Assuming \( N_\gamma(P_0) \subseteq \tilde{\mathcal{P}} \), let 
\( \bQ^\star_\varepsilon \in \mathcal{Q}_{\mathbb{R}^n, N_\gamma(P_0), \varepsilon} \) be  a sampler with \( \bQ^\star_\varepsilon(P) \)  having density
\[
q(x) 
= \mathrm{clip}\left(
\frac{1}{r_P} \, p(x); \,
\frac{\gamma + 1}{\gamma + e^\varepsilon} \, p_0(x),\ 
\frac{(\gamma + 1)e^\varepsilon}{\gamma + e^\varepsilon} \, p_0(x)
\right),
\]
where \( r_P \) being the normalizing constant ensuring \( \int_{\mathbb{R}^n} q(x) \, dx = 1 \).
Furthermore, define the extended sampler 
\( \bQ^\star_{\varepsilon, N_\gamma(P_0)} \) by
\[
\bQ^\star_{\varepsilon, N_\gamma(P_0)}(P) \coloneqq
\begin{cases}
\bQ^\star_\varepsilon(P), & \text{if } P \in N_\gamma(P_0), \\[1pt]
\bQ^\star_\varepsilon(\hat{P}), & \text{otherwise},
\end{cases}
\]
where \( \hat{P} \in N_\gamma(P_0) \) is a distribution that minimizes \(  D_f(P \,\|\, P') \) over all $P' \in N_\gamma(P_0)$.
Then, we have \( \bQ^\star_{\varepsilon, N_\gamma(P_0)} \in \mathcal{Q}_{\mathbb{R}^n, \tilde{\mathcal{P}}, \varepsilon} \) and 
$
\sup_{P \in N_\gamma(P_0)} D_f\big(P \,\|\, \bQ^\star_{\varepsilon, N_\gamma(P_0)}(P)\big)
= \mathcal{R}\big(\mathcal{Q}_{\mathbb{R}^n, \tilde{\mathcal{P}}, \varepsilon}, N_\gamma(P_0), f\big)$.
%is locally minimax-optimal under any \( f \)-divergence:
\end{theorem}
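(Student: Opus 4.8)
The plan is to mirror the structure of the proof of Theorem~\ref{thm: local functional}: reduce the local problem over $N_\gamma(P_0)$ to a global minimax problem over a restricted distribution class, but this time using the \emph{non-linear} (clipping-based) sampler family rather than the mixture family, so as to inherit the pointwise dominance of the non-linear sampler established by \citet{park2024exactly}. Concretely, I would first verify that when the universe $\tilde{\mathcal{P}}$ is replaced by $N_\gamma(P_0)$, the neighborhood coincides with the class $\tilde{\mathcal{P}}_{c_1,c_2,h}$ of~\eqref{eqn: set of distributions} with $c_1 = 1/\gamma$, $c_2 = \gamma$, and $h = p_0$: indeed $P \in N_\gamma(P_0)$ exactly means $\tfrac{1}{\gamma} p_0(x) \le p(x) \le \gamma p_0(x)$ a.e., which is the defining two-sided density constraint. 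This identification is what lets the global machinery apply verbatim to the local problem.

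Next I would establish the three ingredients separately. \emph{(i) Feasibility:} show $\bQ^\star_\varepsilon(P)$ is a well-defined probability density for each $P \in N_\gamma(P_0)$ — the clip keeps $q$ between $\tfrac{\gamma+1}{\gamma+e^\varepsilon}p_0$ and $\tfrac{(\gamma+1)e^\varepsilon}{\gamma+e^\varepsilon}p_0$, and one must check the normalizing constant $r_P$ exists and lies in a range that makes the clipped function integrate to $1$ (an intermediate-value argument: scaling $p$ by a large vs.\ small constant makes the clipped integral exceed vs.\ fall below $1$). \emph{(ii) Privacy:} since every output density of $\bQ^\star_\varepsilon$ lies in the band $[\tfrac{\gamma+1}{\gamma+e^\varepsilon}p_0, \tfrac{(\gamma+1)e^\varepsilon}{\gamma+e^\varepsilon}p_0]$, for any two inputs $P,P'$ the ratio of output densities is at most $\tfrac{(\gamma+1)e^\varepsilon/(\gamma+e^\varepsilon)}{(\gamma+1)/(\gamma+e^\varepsilon)} = e^\varepsilon$ pointwise, hence $\varepsilon$-LDP; the extension clause defining $\bQ^\star_{\varepsilon,N_\gamma(P_0)}$ on all of $\tilde{\mathcal{P}}$ only ever outputs distributions of the form $\bQ^\star_\varepsilon(\hat P)$ with $\hat P \in N_\gamma(P_0)$, so it satisfies the same band constraint and remains $\varepsilon$-LDP over the larger domain. \emph{(iii) Optimality:} this is where I would invoke the converse. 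The band $M_\varepsilon$ (the mollifier in Figure~\ref{fig:local-mechanisms}) is precisely the set of output distributions forced by $\varepsilon$-LDP when inputs range over $N_\gamma(P_0)$ — one direction is the privacy argument just given, the other uses a degenerate/extremal input together with $(\alpha,1/\alpha,1)$-decomposability as in the global converse of Section~\ref{sec: global functional}. Given that the feasible output set is exactly $M_\varepsilon$, minimizing $\sup_{P} D_f(P\|\bQ(P))$ reduces to: for each $P$, project $P$ onto $M_\varepsilon$ in $f$-divergence, and the projection of $p$ onto the band $[\tfrac{\gamma+1}{\gamma+e^\varepsilon}p_0, \tfrac{(\gamma+1)e^\varepsilon}{\gamma+e^\varepsilon}p_0]$ (subject to normalization) is exactly the clipped-and-rescaled density defining $q$ — this is the standard fact that $f$-divergence projection onto a pointwise-interval constraint is pointwise clipping. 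Taking the supremum over $P \in N_\gamma(P_0)$ of this projected divergence gives the claimed minimax value, with the worst case again attained at an extreme point of $N_\gamma(P_0)$.

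I expect the main obstacle to be part \emph{(iii)}, specifically showing that the $f$-divergence-optimal sampler must have every output in $M_\varepsilon$ \emph{and} that, within that constraint, the pointwise clipping rule simultaneously minimizes $D_f(P\|\bQ(P))$ for the worst-case $P$ while also being achievable by a single valid sampler (the normalization constant $r_P$ couples the pointwise choices, so one must check the rescaling does not break the band membership — it does not, since multiplying a function already in the band by $r_P$ and re-clipping stays in the band). A secondary subtlety is handling the extension to $P \notin N_\gamma(P_0)$: one must argue that routing such $P$ through its $f$-projection $\hat P \in N_\gamma(P_0)$ is without loss for the \emph{local} minimax value, because the supremum in~\eqref{eqn: local minimax functional problem} is only over $P \in N_\gamma(P_0)$, so the behaviour off the neighborhood is irrelevant to the objective and only needs to preserve privacy — which it does. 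Finally, the pointwise improvement over the sampler of Theorem~\ref{thm: local functional} with $g=g_\varepsilon$ would follow by the same comparison \citet{park2024exactly} use between non-linear and linear samplers: both output distributions lie in $M_\varepsilon$, but clipping is the $f$-divergence projection whereas the mixture $\lambda^\star P + (1-\lambda^\star)p_0$ is a generally suboptimal point of $M_\varepsilon$, so $D_f(P\|\bQ^\star_\varepsilon(P)) \le D_f(P\|\bQ^\star_{g_\varepsilon}(P))$ for every $P$, with the inequality typically strict.
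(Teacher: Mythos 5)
Your proposal follows essentially the same route as the paper: identify $N_\gamma(P_0)$ with $\tilde{\mathcal{P}}_{c_1,c_2,\mu}$ for $(c_1,c_2,\mu)=(\tfrac{1}{\gamma},\gamma,P_0)$, observe that the local minimax risk over the big universe equals the global minimax risk over this restricted class (restriction of samplers gives the lower bound; the $\hat P$-routing extension gives the matching upper bound, since the supremum is taken only over $N_\gamma(P_0)$ and off-neighborhood behaviour only needs to preserve privacy), and then read off the optimal sampler and risk from the known clipping-sampler result for $\tilde{\mathcal{P}}_{c_1,c_2,\mu}$. The only substantive difference is that the paper invokes that global result as a black box (Theorem C.4 of \citet{park2024exactly}) after verifying its hypotheses ($\tfrac{c_2-c_1}{1-c_1}\in\mathbb{N}$, decomposability of $P_0$, and the non-triviality threshold $\eps<2\log\gamma$), whereas you propose to re-derive it. One caution on your part \emph{(iii)}: the claim that $\varepsilon$-LDP forces every output into the specific band $M_\eps$ is false as stated---privacy only forces all outputs to be within a multiplicative factor $e^\eps$ of \emph{each other}, not to be anchored to $p_0$ with those particular constants. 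The actual converse in \citet{park2024exactly} instead uses $(\alpha,\tfrac{1}{\alpha},1)$-decomposability to build extremal inputs $P_i$, applies the data-processing inequality to reduce to a Bernoulli comparison, and derives a matching lower bound on the risk directly; if you carry out your re-derivation you would need that argument rather than the "feasible set equals $M_\eps$" shortcut.
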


%Figure~\ref{fig:local-mechanisms} provides a visual representation of Theorems~\ref{thm: local functional} and~\ref{thm: local pure}. In both cases, the local minimax-optimal samplers follow a two-step procedure: first, project the input distribution onto the neighborhood \( \ngb \) (if it lies outside), and then apply the corresponding optimal sampler within~\( \ngb \).

Similar to Theorem~\ref{thm: local functional}, the above theorem defines a sampler that distinguishes between the cases \( P \in N_\gamma(P_0) \) and \( P \notin N_\gamma(P_0) \). In the latter case, both theorems are aligned in that the sampling distribution depends on \( \hat{P} \), the closest distribution to \( P \) within \( N_\gamma(P_0) \). However, for the case \( P \in N_\gamma(P_0) \), the two constructions differs fundamentally. Theorem~\ref{thm: local functional} assigns a \textit{linear} sampler, whereas Theorem~\ref{thm: local pure} carefully constructs a non-linear sampler tailored to this regime. In fact, this non-linear sampler is obtained by projecting $P$ onto a mollifier $M_\eps$; see proof of the theorem in Appendix~\ref{proof: local pure} and Figure~\ref{fig:local-mechanisms} for a visual representation. This projection is carried out through a clipping operation, which guarantees that the resulting distribution $ \mathbf{Q}^\star_\varepsilon(P) $ satisfies the LDP constraint while maintaining a close match to the original distribution $ P $ in terms of $ f $-divergence. This approach yields a nonlinear transformation of $ P $ that carefully balances privacy constraints and utility preservation.
It can be verified---using a mutatis mutandis adaptation of \cite[Proposition C.5]{park2024exactly}--- that the non-linear sampler pointwise outperforms the linear one under the same privacy guarantees; that is, for any $\eps\geq 0$ and $P\in N_\gamma(P_0)$
\[
D\big(P \,\|\, \bQ^*_{\eps}(P)\big) \leq D\big(P \,\|\, \bQ^*_{g_\eps}(P)\big).
\]
In fact, the non-linear sampler outperforms the linear one because it is not only worst-case optimal but also instance-optimal: for each input distribution $P$ in our distribution class and any $f$-divergence $D_f$, it minimizes $D_f(P || \mathbf{Q})$ over all admissible $\mathbf{Q}$ satisfying $\varepsilon$-LDP (see Proposition \ref{prop: pointwise optimal Park et al.}). In contrast, the linear sampler uses a fixed transformation for all input distributions. This instance-specific optimization explains the empirical advantage of the non-linear sampler. As a result, the overall sampler in Theorem~\ref{thm: local pure} is pointwise better than the one in Theorem~\ref{thm: local functional}. This intuition is formalized in the result below.

 %With the optimal sampler for the general $g$-FLDP setting in Section \ref{sec: local functional}, a local minimax-optimal sampler for pure LDP can be obtained by instantiating Theorem~\ref{thm: local functional} with $g = g_\varepsilon$, yielding the sampler $\bQ^\star_{g_\varepsilon,\ngb}$. However, the following proposition shows that the sampler $\bQ^\star_{\varepsilon,\ngb}$ proposed in Theorem~\ref{thm: local pure} achieves better pointwise performance; i.e., for any $P \in \ngb$, it achieves a smaller $f$-divergence between the input distribution and the sampler's output.

\begin{proposition}\label{prop: pointwise}
Let \( \tilde{\mathcal{P}} \) be the global universe and \( P_0 \) a probability measure on \( \mathbb{R}^n \), with \( N_\gamma(P_0) \subseteq \tilde{\mathcal{P}} \) for $\ngb$ defined as in~\eqref{eqn: ngb}. Let \( \bQ^\star_{\varepsilon,\ngb} \) be the optimal $\eps$-LDP sampler from Theorem~\ref{thm: local pure}, and  \( \bQ^\star_{g_\varepsilon,\ngb} \) the instantiation of Theorem~\ref{thm: local functional} with \( g = g_\varepsilon \). Then, for all \( P \in N_\gamma(P_0) \),
\[
D_f(P \,\|\, \bQ^\star_{\varepsilon,\ngb}) \leq D_f(P \,\|\, \bQ^\star_{g_\varepsilon,\ngb}).
\]
\end{proposition}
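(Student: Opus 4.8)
The plan is to reduce Proposition~\ref{prop: pointwise} to the pointwise optimality of the non-linear sampler $\bQ^\star_\varepsilon$ over the restricted class $N_\gamma(P_0)$. The key observation is that both $\bQ^\star_{\varepsilon,\ngb}$ and $\bQ^\star_{g_\varepsilon,\ngb}$ act identically on inputs outside $N_\gamma(P_0)$: by construction (Theorems~\ref{thm: local pure} and~\ref{thm: local functional}), for $P\notin N_\gamma(P_0)$ both samplers output $\bQ^\star_\varepsilon(\hat P)$ resp.\ $\bQ^\star_{g_\varepsilon}(\hat P)$ where $\hat P$ is the $D_f$-projection of $P$ onto $N_\gamma(P_0)$; but for inputs \emph{in} $N_\gamma(P_0)$ the difference is exactly the difference between the non-linear clipping sampler and the linear mixture sampler. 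So it suffices to prove the inequality for $P\in N_\gamma(P_0)$, which is precisely the scope of the claim anyway. First, I would note that $\bQ^\star_{g_\varepsilon}$ instantiated with $c_1=1/\gamma$, $c_2=\gamma$, $h=p_0$ (from Theorem~\ref{thm: local functional} via Corollary~\ref{cor: special case of f-LDP}) is a valid $\varepsilon$-LDP sampler on $N_\gamma(P_0)$; that is, $\bQ^\star_{g_\varepsilon,\ngb}\in\mathcal{Q}_{\mathbb{R}^n, N_\gamma(P_0),\varepsilon}$. This is immediate since $g_\varepsilon$-FLDP on $N_\gamma(P_0)$ is equivalent to $\varepsilon$-LDP there.

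Next, I would invoke the instance-optimality of the non-linear sampler. By Proposition~\ref{prop: pointwise optimal Park et al.} (the $N_\gamma(P_0)$-restricted analogue of \cite[Proposition C.5]{park2024exactly}, obtained by a mutatis mutandis adaptation), the sampler $\bQ^\star_\varepsilon$ of Theorem~\ref{thm: local pure} satisfies, for every $P\in N_\gamma(P_0)$ and every $f$-divergence,
\[
D_f\big(P\,\|\,\bQ^\star_\varepsilon(P)\big)=\min_{\bQ\in\mathcal{Q}_{\mathbb{R}^n, N_\gamma(P_0),\varepsilon}} D_f\big(P\,\|\,\bQ(P)\big).
\]
The point is that the restricted distribution class $N_\gamma(P_0)=\{P: 1/\gamma\le \tfrac{dP}{dP_0}\le\gamma\}$ is exactly a class of the form $\tilde{\mathcal{P}}_{c_1,c_2,\mu}$ with $\mu=P_0$, $c_1=1/\gamma$, $c_2=\gamma$, so the structural results of \citet{park2024exactly} (specifically the projection-onto-mollifier characterization and its instance-optimality) transfer verbatim after the substitution $h\mapsto p_0$.

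Combining the two facts finishes the argument: since $\bQ^\star_{g_\varepsilon,\ngb}$ is \emph{some} admissible $\varepsilon$-LDP sampler on $N_\gamma(P_0)$, and $\bQ^\star_{\varepsilon,\ngb}$ minimizes $D_f(P\,\|\,\cdot)$ over \emph{all} such samplers for each fixed $P\in N_\gamma(P_0)$, we get
\[
D_f\big(P\,\|\,\bQ^\star_{\varepsilon,\ngb}(P)\big)\le D_f\big(P\,\|\,\bQ^\star_{g_\varepsilon,\ngb}(P)\big),\qquad\forall P\in N_\gamma(P_0).
\]

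I expect the main obstacle to be verifying that the mollifier-projection machinery of \citet{park2024exactly}—in particular the instance-optimality statement and the fact that clipping against $M_\varepsilon$ realizes the $D_f$-projection onto the mollifier—carries over when the reference is a general continuous density $p_0$ rather than the specific reference function $h$ of their setup. Concretely, one must check that $M_\varepsilon$ as defined in Figure~\ref{fig:local-mechanisms} is the correct optimal mollifier for the class $N_\gamma(P_0)$ (equivalently $\tilde{\mathcal{P}}_{1/\gamma,\gamma,P_0}$), that $\bQ^\star_\varepsilon(P)$ indeed lands in $M_\varepsilon$ for every $P\in N_\gamma(P_0)$, and that the clipping operation is the minimizer of $D_f(P\,\|\,\cdot)$ over $M_\varepsilon$ for every $f$; the last point follows from the standard fact that clipping is the $f$-divergence projection onto a "sandwich" constraint set, uniformly in $f$, but it requires care with normalization constants $r_P$. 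Once these are in place, Proposition~\ref{prop: pointwise} is a one-line consequence.
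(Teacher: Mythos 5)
Your high-level architecture is the right one and matches the paper's narrative (the non-linear sampler is an $f$-divergence projection, the linear one is admissible, hence the inequality), but the formal pivot of your argument is wrong. You claim that $\bQ^\star_\varepsilon$ satisfies
\[
D_f\bigl(P\,\|\,\bQ^\star_\varepsilon(P)\bigr)=\min_{\bQ\in\mathcal{Q}_{\mathbb{R}^n, N_\gamma(P_0),\varepsilon}} D_f\bigl(P\,\|\,\bQ(P)\bigr),
\]
i.e.\ instance-optimality over \emph{all} $\varepsilon$-LDP samplers. This is false: for any fixed $P\in N_\gamma(P_0)$, the constant sampler $\bQ(P')\equiv P$ is $0$-LDP (hence $\varepsilon$-LDP) and achieves $D_f(P\,\|\,\bQ(P))=0$, so the right-hand side above is zero while the left-hand side is generically positive. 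Proposition~\ref{prop: pointwise optimal Park et al.} (Park et al.'s Proposition~C.8) does not minimize over samplers; it minimizes over output \emph{distributions} in the fixed mollifier $\tilde{\mathcal{P}}_{b,be^\varepsilon,P_0}$ with $b=\tfrac{\gamma+1}{\gamma+e^\varepsilon}$, i.e.\ it says $\bQ^\star_\varepsilon(P)$ is the $D_f$-projection of $P$ onto $M_\varepsilon$. The main-text phrase ``over all admissible $\bQ$'' refers to membership in that mollifier, not to the LDP property of a sampler.

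Once the comparison class is corrected, the step your proposal is missing becomes visible: to conclude $D_f(P\,\|\,\bQ^\star_\varepsilon(P))\le D_f(P\,\|\,\bQ^\star_{g_\varepsilon}(P))$ from projection optimality, you must verify that the linear sampler's output actually lies in $M_\varepsilon$, i.e.\ that $\bQ^\star_{g_\varepsilon}(P)=\lambda^\star P+(1-\lambda^\star)P_0\in\tilde{\mathcal{P}}_{b,be^\varepsilon,P_0}$ for every $P\in N_\gamma(P_0)$. This is the substantive computation in the paper's proof: with $\lambda^\star=\tfrac{\gamma(e^\varepsilon-1)}{(\gamma-1)(e^\varepsilon+\gamma)}$ and $\tfrac{1}{\gamma}\le\tfrac{dP}{dP_0}\le\gamma$, one checks $\tfrac{\gamma+1}{e^\varepsilon+\gamma}\le\lambda^\star\tfrac{dP}{dP_0}+(1-\lambda^\star)\le\tfrac{(\gamma+1)e^\varepsilon}{e^\varepsilon+\gamma}$, which lands exactly on the boundary of $M_\varepsilon$. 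Your closing paragraph discusses verifying mollifier properties of the \emph{non-linear} sampler, but never this membership claim for the \emph{linear} one, and the mere fact that $\bQ^\star_{g_\varepsilon,\ngb}$ is an $\varepsilon$-LDP sampler does not substitute for it.
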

\vspace{0.1cm}
\section{Numerical results}\label{sec: numeric}
\vspace{0.1cm}
In this section, we numerically compare the worst-case \( f \)-divergence of our locally minimax sampler against the globally optimal sampler of~\citep{park2024exactly} under the $\varepsilon$-LDP setting. Experiments span both finite and continuous domains, evaluating KL divergence, total variation distance, and squared Hellinger distance across \( \varepsilon \in \{0.1, 0.5, 1, 2\} \). Additional results under \( \nu \)-GLDP appear in Appendix~\ref{appendix: experiment GDP local global discrete} and~\ref{appendix: experiment GDP local global continuous}. In addition, we report complementary results for the continuous case under pure LDP in Appendix~\ref{appendix: high dimension laplace}.

\subsection{Finite sample space}\label{experiment: finite}

We compare local and global minimax-optimal samplers in the finite setting \( \mathcal{X} = [k] \), where the global class is \( \tilde{\mathcal{P}}_{\textsf{global}} = \mathcal{P}([k]) = \tilde{\mathcal{P}}_{0, k, \mu_k} \) for the uniform measure \( \mu_k \). The local neighborhood is defined as \( \tilde{\mathcal{P}}_{\textsf{local}} = \mathcal{N}_\gamma(\mu_k) = \tilde{\mathcal{P}}_{\frac{1}{\gamma},\, \gamma,\, \mu_k} \) with \( \gamma = \frac{k}{2} - 1 \), ensuring that all local neighborhood assumptions are satisfied and \( \mathcal{N}_\gamma(\mu_k) \subseteq \mathcal{P}([k]) \).
Indeed, both global and local worst-case \( f \)-divergences can be computed in closed form: the global risk is given by~\citep[Theorem 3.1]{park2024exactly}, while the local risk is derived from a finite-space version of Theorem~\ref{thm: local pure} (see Appendix~\ref{appendix: finite space}). Figure~\ref{fig: finite worst-case k = 20} compares the two for \( k = 20 \); additional results for other \( k \) values are provided in Appendix~\ref{appendix: finite space}. The local minimax-optimal sampler consistently outperforms the global minimax-optimal sampler across all \( \varepsilon \) and \( f \)-divergences.

\setlength{\textfloatsep}{0.4cm}
\begin{figure}[ht]
  \centering
  \includegraphics[width=0.8\linewidth]{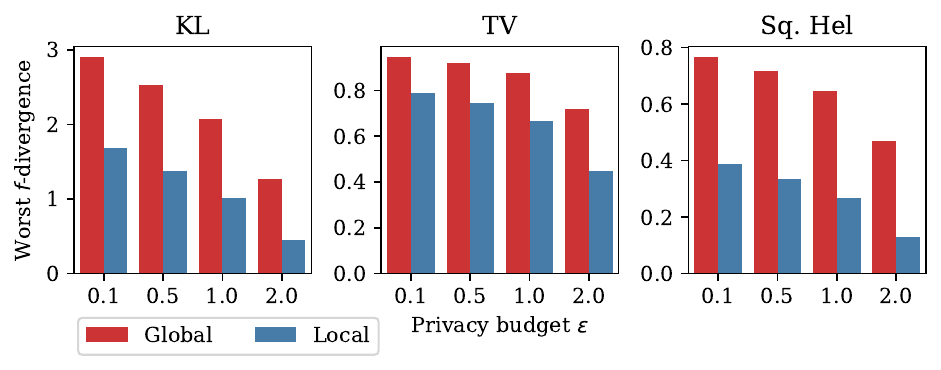}
  \caption{Theoretical worst-case 
$f$-divergences of global and local minimax samplers under the pure LDP setting with uniform reference distribution $\mu_k$ over finite space ($k = 20$).
% (Left: KL divergence; Center: Total variation distance; Right: Squared Hellinger distance.)
}
  \label{fig: finite worst-case k = 20}
\end{figure}

% Commented Fig
\subsection{Continuous sample space}\label{subsec: experiment 1d continuous}

In the continuous setting with \( \mathcal{X} = \mathbb{R} \), we fix the universe \( \tilde{\mathcal{P}}_{\textsf{local}} \) and evaluate the empirical worst‐case \( f \)-divergence over 100 randomly generated client distributions \( P_1, \ldots, P_{100} \in \tilde{\mathcal{P}}_{\textsf{local}} \).  Each \( P_j \) represents a client and is constructed as a mixture of a random number of one–dimensional Laplace components with scale parameter \( b = 1 \); the complete procedure for generating these distributions is described in detail in Appendix~\ref{appendix: details of continuous}.

We define the local and global universes as  
\( \tilde{\mathcal{P}}_{\textsf{local}} = \tilde{\mathcal{P}}_{1/3,\, 3,\, h_{\mathcal{L}}} \) and  
\( \tilde{\mathcal{P}}_{\textsf{global}} = \tilde{\mathcal{P}}_{1/9,\, 9,\, h_{\mathcal{L}}} \), 
where \( h_{\mathcal{L}} \) is the density of a Laplace distribution with mean zero and scale \( b = 1 \).  
Since \( \tilde{\mathcal{P}}_{\textsf{local}} \subseteq \tilde{\mathcal{P}}_{\textsf{global}} \), every client distribution \( P_j \) also belongs to the global universe; thus the input distributions are identical for both samplers.
We evaluate the empirical worst‐case divergence of each sampler as  
\( \max_{j \in [100]} D_f\!\bigl(P_j \,\|\, \mathbf{Q}(P_j)\bigr) \).  
The local minimax sampler follows Theorem~\ref{thm: local pure}, while the global sampler is the optimal sampler from~\citep[Theorem 3.3]{park2024exactly}.  
As illustrated in Figure~\ref{fig: continuous worst-case}, the local sampler consistently achieves lower worst‐case \( f \)-divergence than its global counterpart across all \( f \)-divergences and privacy levels \( \varepsilon \).

\setlength{\textfloatsep}{0.4cm}
\begin{figure}[ht]
  \centering
  \includegraphics[width=0.8\linewidth]{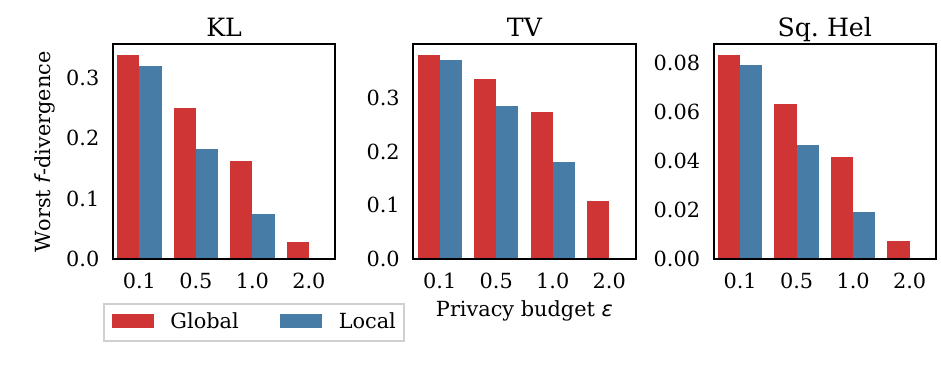}
  \caption{Empirical worst-case 
$f$-divergences of global and local minimax samplers under the pure LDP setting, over 100 experiments on a 1-D Laplace mixture.
% (Left: KL divergence; Center: Total variation distance; Right: Squared Hellinger distance.)
}
  \label{fig: continuous worst-case}
\end{figure}

% \vspace{0.1cm}
\section{Limitations and future direction}\label{sec: conclusion}
\vspace{0.1cm}

Our main contribution is the development of a minimax-optimal private sampler, validated through a series of experiments on synthetic data. However, the absence of evaluations on high-dimensional real-world datasets limits our understanding of its practical utility. A key reason for this is the computational complexity of our sampler (in particular the clipping function), which poses challenges for scalability in high-dimensional settings. Addressing this limitation---by developing more efficient implementations of our samplers or approximate variants, potentially leveraging techniques such as MCMC--- is an important direction for future work.

Beyond computational considerations, another important direction for future work is to generalize the formulation of the local neighborhood. The neighborhood in our local minimax formulation is defined using the $E_\gamma$-divergence: $P\in N_\gamma(P_0)$ if  \( E_\gamma(P \| P_0) = E_\gamma(P_0 \| P) = 0 \), as in \citep{feldman2024instance}. A natural extension is 
$N_{\gamma,\zeta}(P_0) \coloneqq \left\{ P \in \mathcal{P}(\mathcal{X}) :\; E_\gamma(P \| P_0) \leq \zeta \;\text{and}\; E_\gamma(P_0 \| P) \leq \zeta \right\}$. 
The use of $E_\gamma$-divergence in our formulation is particularly useful, as it provides a foundation for broader neighborhood definitions. In particular, it can be extended to neighborhoods based on general \( f \)-divergences with twice differentiable \( f \), as  
$
N_{f,\zeta}(P_0) \coloneqq \left\{ P \in \mathcal{P}(\mathcal{X}) :\; D_f(P \| P_0) \leq \zeta \;\text{and}\; D_f(P_0 \| P) \leq \zeta \right\}
$ \citep{polyanskiy2015dissipation,raginsky2016strong,zamanlooy2023strong,asoodeh2024contraction}.  
This approach provides a principled strategy for characterizing the local minimax solution over the broad and flexible class of distributional neighborhoods $N_{f,\zeta}(P_0)$. We leave this generalization for future work. 
Furthermore, this work focuses on the setting where each client releases a single sample. Extending to the case of multiple samples per client is a natural direction for future work.

% Finally, the broader impacts of our work is presented in Appendix~\ref{appendix: broader impact}. 

 \newpage

 \section*{Acknowledgments} 
 This work was supported in part by the Natural Sciences and Engineering Research Council of Canada (NSERC).
\bibliography{reference}

\begin{thebibliography}{61}
\providecommand{\natexlab}[1]{#1}
\providecommand{\url}[1]{\texttt{#1}}
\expandafter\ifx\csname urlstyle\endcsname\relax
  \providecommand{\doi}[1]{doi: #1}\else
  \providecommand{\doi}{doi: \begingroup \urlstyle{rm}\Url}\fi

\bibitem[Dwork et~al.(2006)Dwork, McSherry, Nissim, and Smith]{dwork2006calibrating}
Cynthia Dwork, Frank McSherry, Kobbi Nissim, and Adam Smith.
\newblock Calibrating noise to sensitivity in private data analysis.
\newblock In \emph{Theory of Cryptography: Third Theory of Cryptography Conference, TCC 2006, New York, NY, USA, March 4-7, 2006. Proceedings 3}, pages 265--284. Springer, 2006.

\bibitem[Kasiviswanathan et~al.(2011)Kasiviswanathan, Lee, Nissim, Raskhodnikova, and Smith]{kasiviswanathan2011can}
Shiva~Prasad Kasiviswanathan, Homin~K Lee, Kobbi Nissim, Sofya Raskhodnikova, and Adam Smith.
\newblock What can we learn privately?
\newblock \emph{SIAM Journal on Computing}, 40\penalty0 (3):\penalty0 793--826, 2011.

\bibitem[Erlingsson et~al.(2014)Erlingsson, Pihur, and Korolova]{erlingsson2014rappor}
{\'U}lfar Erlingsson, Vasyl Pihur, and Aleksandra Korolova.
\newblock Rappor: Randomized aggregatable privacy-preserving ordinal response.
\newblock In \emph{Proceedings of the 2014 ACM SIGSAC conference on computer and communications security}, pages 1054--1067, 2014.

\bibitem[Bittau et~al.(2017)Bittau, Erlingsson, Maniatis, Mironov, Raghunathan, Lie, Rudominer, Kode, Tinnes, and Seefeld]{bittau2017prochlo}
Andrea Bittau, {\'U}lfar Erlingsson, Petros Maniatis, Ilya Mironov, Ananth Raghunathan, David Lie, Mitch Rudominer, Ushasree Kode, Julien Tinnes, and Bernhard Seefeld.
\newblock Prochlo: Strong privacy for analytics in the crowd.
\newblock In \emph{Proceedings of the 26th symposium on operating systems principles}, pages 441--459, 2017.

\bibitem[{Differential privacy team Apple}(2017)]{Apple_Privacy}
{Differential privacy team Apple}.
\newblock Learning with privacy at scale, 2017.

\bibitem[Ding et~al.(2017)Ding, Kulkarni, and Yekhanin]{ding2017collecting}
Bolin Ding, Janardhan Kulkarni, and Sergey Yekhanin.
\newblock Collecting telemetry data privately.
\newblock \emph{Advances in Neural Information Processing Systems}, 30, 2017.

\bibitem[Husain et~al.(2020)Husain, Balle, Cranko, and Nock]{husain2020local}
Hisham Husain, Borja Balle, Zac Cranko, and Richard Nock.
\newblock Local differential privacy for sampling.
\newblock In \emph{International Conference on Artificial Intelligence and Statistics}, pages 3404--3413. PMLR, 2020.

\bibitem[Kent et~al.(2024)Kent, Berrett, and Yu]{kent2024rate}
Alexander Kent, Thomas~B Berrett, and Yi~Yu.
\newblock Rate optimality and phase transition for user-level local differential privacy.
\newblock \emph{arXiv preprint arXiv:2405.11923}, 2024.

\bibitem[Mao et~al.(2024)Mao, Ye, Hu, Wang, and Huang]{mao2024privshape}
Yulian Mao, Qingqing Ye, Haibo Hu, Qi~Wang, and Kai Huang.
\newblock Privshape: Extracting shapes in time series under user-level local differential privacy.
\newblock In \emph{2024 IEEE 40th International Conference on Data Engineering (ICDE)}, pages 1739--1751. IEEE, 2024.

\bibitem[Bassily and Sun(2023)]{bassily2023user}
Raef Bassily and Ziteng Sun.
\newblock User-level private stochastic convex optimization with optimal rates.
\newblock In \emph{International Conference on Machine Learning}, pages 1838--1851. PMLR, 2023.

\bibitem[Acharya et~al.(2023)Acharya, Liu, and Sun]{acharya2023discrete}
Jayadev Acharya, Yuhan Liu, and Ziteng Sun.
\newblock Discrete distribution estimation under user-level local differential privacy.
\newblock In \emph{International Conference on Artificial Intelligence and Statistics}, pages 8561--8585. PMLR, 2023.

\bibitem[Huang et~al.(2023)Huang, Zhang, Melis, Shen, Hejazinia, and Yang]{huang2023federated}
Ruiquan Huang, Huanyu Zhang, Luca Melis, Milan Shen, Meisam Hejazinia, and Jing Yang.
\newblock Federated linear contextual bandits with user-level differential privacy.
\newblock In \emph{International Conference on Machine Learning}, pages 14060--14095. PMLR, 2023.

\bibitem[Girgis et~al.(2022)Girgis, Data, and Diggavi]{girgis2022distributed}
Antonious~M Girgis, Deepesh Data, and Suhas Diggavi.
\newblock Distributed user-level private mean estimation.
\newblock In \emph{2022 IEEE International Symposium on Information Theory (ISIT)}, pages 2196--2201. IEEE, 2022.

\bibitem[Levy et~al.(2021)Levy, Sun, Amin, Kale, Kulesza, Mohri, and Suresh]{levy2021learning}
Daniel Levy, Ziteng Sun, Kareem Amin, Satyen Kale, Alex Kulesza, Mehryar Mohri, and Ananda~Theertha Suresh.
\newblock Learning with user-level privacy.
\newblock \emph{Advances in Neural Information Processing Systems}, 34:\penalty0 12466--12479, 2021.

\bibitem[Ghazi et~al.(2021)Ghazi, Kumar, and Manurangsi]{ghazi2021user}
Badih Ghazi, Ravi Kumar, and Pasin Manurangsi.
\newblock User-level differentially private learning via correlated sampling.
\newblock \emph{Advances in Neural Information Processing Systems}, 34:\penalty0 20172--20184, 2021.

\bibitem[Zhao et~al.(2024)Zhao, Lai, Shen, Li, Wu, and Liu]{zhao2024huber}
Puning Zhao, Lifeng Lai, Li~Shen, Qingming Li, Jiafei Wu, and Zhe Liu.
\newblock A huber loss minimization approach to mean estimation under user-level differential privacy.
\newblock \emph{arXiv preprint arXiv:2405.13453}, 2024.

\bibitem[Park et~al.(2024)Park, Asoodeh, and Lee]{park2024exactly}
Hyun-Young Park, Shahab Asoodeh, and Si-Hyeon Lee.
\newblock Exactly minimax-optimal locally differentially private sampling.
\newblock \emph{Advances in Neural Information Processing Systems}, 37:\penalty0 10274--10319, 2024.

\bibitem[Ozkara et~al.(2023)Ozkara, Girgis, Data, and Diggavi]{ozkara2023a}
Kaan Ozkara, Antonious~M. Girgis, Deepesh Data, and Suhas Diggavi.
\newblock A statistical framework for personalized federated learning and estimation: Theory, algorithms, and privacy.
\newblock In \emph{The Eleventh International Conference on Learning Representations}, 2023.
\newblock URL \url{https://openreview.net/forum?id=FUiDMCr_W4o}.

\bibitem[Li et~al.(2021)Li, Hu, Beirami, and Smith]{Beirami_personalization}
Tian Li, Shengyuan Hu, Ahmad Beirami, and Virginia Smith.
\newblock Ditto: Fair and robust federated learning through personalization.
\newblock In Marina Meila and Tong Zhang, editors, \emph{Proceedings of the 38th International Conference on Machine Learning}, volume 139 of \emph{Proceedings of Machine Learning Research}, pages 6357--6368. PMLR, 18--24 Jul 2021.
\newblock URL \url{https://proceedings.mlr.press/v139/li21h.html}.

\bibitem[Zhang et~al.(2021)Zhang, Sapra, Fidler, Yeung, and Álvarez]{Personalization3}
Michael Zhang, Karan Sapra, Sanja Fidler, Serena Yeung, and José~M. Álvarez.
\newblock Personalized federated learning with first order model optimization.
\newblock In \emph{ICLR}. OpenReview.net, 2021.
\newblock URL \url{http://dblp.uni-trier.de/db/conf/iclr/iclr2021.html#ZhangSFYA21}.

\bibitem[Dong et~al.(2022)Dong, Roth, and Su]{dong2022gaussian}
Jinshuo Dong, Aaron Roth, and Weijie~J Su.
\newblock Gaussian differential privacy.
\newblock \emph{Journal of the Royal Statistical Society: Series B (Statistical Methodology)}, 84\penalty0 (1):\penalty0 3--37, 2022.

\bibitem[Zamanlooy et~al.(2024)Zamanlooy, Diaz, and Asoodeh]{zamanlooy2024locally}
Behnoosh Zamanlooy, Mario Diaz, and Shahab Asoodeh.
\newblock Locally private sampling with public data.
\newblock \emph{arXiv preprint arXiv:2411.08791}, 2024.

\bibitem[Raskhodnikova et~al.(2021{\natexlab{a}})Raskhodnikova, Sivakumar, Smith, and Swanberg]{NEURIPS2021_f2b5e92f}
Sofya Raskhodnikova, Satchit Sivakumar, Adam Smith, and Marika Swanberg.
\newblock Differentially private sampling from distributions.
\newblock In M.~Ranzato, A.~Beygelzimer, Y.~Dauphin, P.S. Liang, and J.~Wortman Vaughan, editors, \emph{Advances in Neural Information Processing Systems}, volume~34, pages 28983--28994. Curran Associates, Inc., 2021{\natexlab{a}}.
\newblock URL \url{https://proceedings.neurips.cc/paper_files/paper/2021/file/f2b5e92f61b6de923b063588ee6e7c48-Paper.pdf}.

\bibitem[Ghazi et~al.(2023{\natexlab{a}})Ghazi, Hu, Kumar, and Manurangsi]{NEURIPS2023_f4eaa4b8}
Badih Ghazi, Xiao Hu, Ravi Kumar, and Pasin Manurangsi.
\newblock On differentially private sampling from gaussian and product distributions.
\newblock In A.~Oh, T.~Naumann, A.~Globerson, K.~Saenko, M.~Hardt, and S.~Levine, editors, \emph{Advances in Neural Information Processing Systems}, volume~36, pages 77783--77809. Curran Associates, Inc., 2023{\natexlab{a}}.
\newblock URL \url{https://proceedings.neurips.cc/paper_files/paper/2023/file/f4eaa4b8f2d08edb3f0af990d56134ea-Paper-Conference.pdf}.

\bibitem[Iverson et~al.(2025)Iverson, Kamath, and Mouzakis]{kamath_sampling}
Valentio Iverson, Gautam Kamath, and Argyris Mouzakis.
\newblock Optimal differentially private sampling of unbounded gaussians.
\newblock \emph{arXiv preprint arXiv:2503.01766}, 2025.

\bibitem[Tasnim et~al.(2025)Tasnim, Gilani, Sankar, and Kosut]{tasnim2025reveal}
Naima Tasnim, Atefeh Gilani, Lalitha Sankar, and Oliver Kosut.
\newblock Reveal-or-obscure: A differentially private sampling algorithm for discrete distributions.
\newblock \emph{arXiv preprint arXiv:2504.14696}, 2025.

\bibitem[Cheu and Nayak(2024)]{MultiSampling}
Albert Cheu and Debanuj Nayak.
\newblock Differentially private multi-sampling from distributions, 2024.
\newblock URL \url{https://arxiv.org/abs/2412.10512}.

\bibitem[Flemings et~al.(2024{\natexlab{a}})Flemings, Razaviyayn, and Annavaram]{flemings2024differentially}
James Flemings, Meisam Razaviyayn, and Murali Annavaram.
\newblock Differentially private next-token prediction of large language models.
\newblock \emph{arXiv preprint arXiv:2403.15638}, 2024{\natexlab{a}}.

\bibitem[Flemings et~al.(2024{\natexlab{b}})Flemings, Razaviyayn, and Annavaram]{flemings2024adaptively}
James Flemings, Meisam Razaviyayn, and Murali Annavaram.
\newblock Adaptively private next-token prediction of large language models, 2024{\natexlab{b}}.
\newblock URL \url{https://openreview.net/forum?id=fGSEWgRHNZ}.

\bibitem[Raskhodnikova et~al.(2021{\natexlab{b}})Raskhodnikova, Sivakumar, Smith, and Swanberg]{raskhodnikova2021differentially}
Sofya Raskhodnikova, Satchit Sivakumar, Adam Smith, and Marika Swanberg.
\newblock Differentially private sampling from distributions.
\newblock \emph{Advances in Neural Information Processing Systems}, 34:\penalty0 28983--28994, 2021{\natexlab{b}}.

\bibitem[Ghazi et~al.(2023{\natexlab{b}})Ghazi, Hu, Kumar, and Manurangsi]{ghazi2023differentially}
Badih Ghazi, Xiao Hu, Ravi Kumar, and Pasin Manurangsi.
\newblock On differentially private sampling from gaussian and product distributions.
\newblock \emph{Advances in Neural Information Processing Systems}, 36:\penalty0 77783--77809, 2023{\natexlab{b}}.

\bibitem[Ebadi et~al.(2016)Ebadi, Antignac, and Sands]{ebadi2016sampling}
Hamid Ebadi, Thibaud Antignac, and David Sands.
\newblock Sampling and partitioning for differential privacy.
\newblock In \emph{2016 14th Annual Conference on Privacy, Security and Trust (PST)}, pages 664--673. IEEE, 2016.

\bibitem[Abay et~al.(2018)Abay, Zhou, Kantarcioglu, Thuraisingham, and Sweeney]{abay2018privacy}
Nazmiye~Ceren Abay, Yan Zhou, Murat Kantarcioglu, Bhavani Thuraisingham, and Latanya Sweeney.
\newblock Privacy preserving synthetic data release using deep learning.
\newblock In \emph{Joint European Conference on Machine Learning and Knowledge Discovery in Databases}, pages 510--526. Springer, 2018.

\bibitem[Xie et~al.(2018)Xie, Lin, Wang, Wang, and Zhou]{xie2018differentially}
Liyang Xie, Kaixiang Lin, Shu Wang, Fei Wang, and Jiayu Zhou.
\newblock Differentially private generative adversarial network.
\newblock \emph{arXiv preprint arXiv:1802.06739}, 2018.

\bibitem[Xin et~al.(2020)Xin, Yang, Geng, Chen, Wang, and Huang]{xin2020private}
Bangzhou Xin, Wei Yang, Yangyang Geng, Sheng Chen, Shaowei Wang, and Liusheng Huang.
\newblock Private fl-gan: Differential privacy synthetic data generation based on federated learning.
\newblock In \emph{Icassp 2020-2020 IEEE international conference on acoustics, speech and signal processing (ICASSP)}, pages 2927--2931. IEEE, 2020.

\bibitem[Torkzadehmahani et~al.(2019)Torkzadehmahani, Kairouz, and Paten]{torkzadehmahani2019dp}
Reihaneh Torkzadehmahani, Peter Kairouz, and Benedict Paten.
\newblock Dp-cgan: Differentially private synthetic data and label generation.
\newblock In \emph{Proceedings of the IEEE/CVF Conference on Computer Vision and Pattern Recognition Workshops}, pages 0--0, 2019.

\bibitem[Liu et~al.(2019)Liu, Peng, James, and Wu]{liu2019ppgan}
Yi~Liu, Jialiang Peng, JQ~James, and Yi~Wu.
\newblock Ppgan: Privacy-preserving generative adversarial network.
\newblock In \emph{2019 IEEE 25Th international conference on parallel and distributed systems (ICPADS)}, pages 985--989. IEEE, 2019.

\bibitem[Cunningham et~al.(2022)Cunningham, Klemmer, Wen, and Ferhatosmanoglu]{cunningham2022geopointgan}
Teddy Cunningham, Konstantin Klemmer, Hongkai Wen, and Hakan Ferhatosmanoglu.
\newblock Geopointgan: Synthetic spatial data with local label differential privacy.
\newblock \emph{arXiv preprint arXiv:2205.08886}, 2022.

\bibitem[Shibata et~al.(2023)Shibata, Hanaoka, Cao, Yoshikawa, Takenaga, Nomura, Hayashi, and Abe]{shibata2023local}
Hisaichi Shibata, Shouhei Hanaoka, Yang Cao, Masatoshi Yoshikawa, Tomomi Takenaga, Yukihiro Nomura, Naoto Hayashi, and Osamu Abe.
\newblock Local differential privacy image generation using flow-based deep generative models.
\newblock \emph{Applied Sciences}, 13\penalty0 (18):\penalty0 10132, 2023.

\bibitem[Zhang et~al.(2023)Zhang, Li, Huang, Zhang, Li, Jin, Gao, and Gao]{zhang2023publishing}
Hua Zhang, Kaixuan Li, Teng Huang, Xin Zhang, Wenmin Li, Zhengping Jin, Fei Gao, and Minghui Gao.
\newblock Publishing locally private high-dimensional synthetic data efficiently.
\newblock \emph{Information Sciences}, 633:\penalty0 343--356, 2023.

\bibitem[Gwon et~al.(2024)Gwon, Ahn, Kim, Kang, Seo, Choi, Cho, Kim, Han, Kee, et~al.]{gwon2024ldp}
Hansle Gwon, Imjin Ahn, Yunha Kim, Hee~Jun Kang, Hyeram Seo, Heejung Choi, Ha~Na Cho, Minkyoung Kim, JiYe Han, Gaeun Kee, et~al.
\newblock Ldp-gan: generative adversarial networks with local differential privacy for patient medical records synthesis.
\newblock \emph{Computers in Biology and Medicine}, 168:\penalty0 107738, 2024.

\bibitem[Imola et~al.(2024)Imola, Roy~Chowdhury, and Chaudhuri]{imola2024metric}
Jacob Imola, Amrita Roy~Chowdhury, and Kamalika Chaudhuri.
\newblock Metric differential privacy at the user-level via the earth-mover's distance.
\newblock In \emph{Proceedings of the 2024 on ACM SIGSAC Conference on Computer and Communications Security}, pages 348--362, 2024.

\bibitem[Chen and {\"O}zg{\"u}r(2024)]{chen2024lq}
Wei-Ning Chen and Ayfer {\"O}zg{\"u}r.
\newblock Lq lower bounds on distributed estimation via fisher information.
\newblock In \emph{2024 IEEE International Symposium on Information Theory (ISIT)}, pages 91--96. IEEE, 2024.

\bibitem[McMillan et~al.(2022)McMillan, Smith, and Ullman]{mcmillan2022instance}
Audra McMillan, Adam Smith, and Jon Ullman.
\newblock Instance-optimal differentially private estimation.
\newblock \emph{arXiv preprint arXiv:2210.15819}, 2022.

\bibitem[Duchi and Ruan(2024)]{Duchi_Complexity}
John~C. Duchi and Feng Ruan.
\newblock {The right complexity measure in locally private estimation: It is not the Fisher information}.
\newblock \emph{The Annals of Statistics}, 52\penalty0 (1):\penalty0 1 -- 51, 2024.
\newblock \doi{10.1214/22-AOS2227}.
\newblock URL \url{https://doi.org/10.1214/22-AOS2227}.

\bibitem[Asi and Duchi(2020)]{asi2020instance}
Hilal Asi and John~C Duchi.
\newblock Instance-optimality in differential privacy via approximate inverse sensitivity mechanisms.
\newblock \emph{Advances in neural information processing systems}, 33:\penalty0 14106--14117, 2020.

\bibitem[Feldman et~al.(2024)Feldman, McMillan, Sivakumar, and Talwar]{feldman2024instance}
Vitaly Feldman, Audra McMillan, Satchit Sivakumar, and Kunal Talwar.
\newblock Instance-optimal private density estimation in the wasserstein distance.
\newblock \emph{Advances in Neural Information Processing Systems}, 37:\penalty0 90061--90131, 2024.

\bibitem[Rohde and Steinberger(2020)]{rohde2020geometrizing}
Angelika Rohde and Lukas Steinberger.
\newblock Geometrizing rates of convergence under local differential privacy constraints.
\newblock \emph{The Annals of Statistics}, 48\penalty0 (5):\penalty0 2646--2670, 2020.

\bibitem[Wasserman and Zhou(2010)]{wasserman2010statistical}
Larry Wasserman and Shuheng Zhou.
\newblock A statistical framework for differential privacy.
\newblock \emph{Journal of the American Statistical Association}, 105\penalty0 (489):\penalty0 375--389, 2010.

\bibitem[Kairouz et~al.(2015)Kairouz, Oh, and Viswanath]{kairouz2015composition}
Peter Kairouz, Sewoong Oh, and Pramod Viswanath.
\newblock The composition theorem for differential privacy.
\newblock In \emph{International conference on machine learning}, pages 1376--1385. PMLR, 2015.

\bibitem[Lee et~al.(2023)Lee, Ahn, and Park]{lee2023minimax}
Bonwoo Lee, Jeongyoun Ahn, and Cheolwoo Park.
\newblock Minimax risks and optimal procedures for estimation under functional local differential privacy.
\newblock \emph{Advances in Neural Information Processing Systems}, 36:\penalty0 57964--57975, 2023.

\bibitem[Ali and Silvey(1966)]{ali1966general}
Syed~Mumtaz Ali and Samuel~D Silvey.
\newblock A general class of coefficients of divergence of one distribution from another.
\newblock \emph{Journal of the Royal Statistical Society: Series B (Methodological)}, 28\penalty0 (1):\penalty0 131--142, 1966.

\bibitem[Csisz{\'a}r(1967)]{csiszar1967information}
Imre Csisz{\'a}r.
\newblock On information-type measure of difference of probability distributions and indirect observations.
\newblock \emph{Studia Sci. Math. Hungar.}, 2:\penalty0 299--318, 1967.

\bibitem[Sharma and Warsi(2013)]{sharma2013fundamental}
Naresh Sharma and Naqueeb~Ahmad Warsi.
\newblock Fundamental bound on the reliability of quantum information transmission.
\newblock \emph{Physical review letters}, 110\penalty0 (8):\penalty0 080501, 2013.

\bibitem[Liu et~al.(2016)Liu, Cuff, and Verd{\'u}]{liu2016e_}
Jingbo Liu, Paul Cuff, and Sergio Verd{\'u}.
\newblock {$E_\gamma$}-resolvability.
\newblock \emph{IEEE Transactions on Information Theory}, 63\penalty0 (5):\penalty0 2629--2658, 2016.

\bibitem[Polyanskiy and Wu(2015)]{polyanskiy2015dissipation}
Yury Polyanskiy and Yihong Wu.
\newblock Dissipation of information in channels with input constraints.
\newblock \emph{IEEE Transactions on Information Theory}, 62\penalty0 (1):\penalty0 35--55, 2015.

\bibitem[Raginsky(2016)]{raginsky2016strong}
Maxim Raginsky.
\newblock Strong data processing inequalities and $\phi$-sobolev inequalities for discrete channels.
\newblock \emph{IEEE Transactions on Information Theory}, 62\penalty0 (6):\penalty0 3355--3389, 2016.

\bibitem[Zamanlooy and Asoodeh(2023)]{zamanlooy2023strong}
Behnoosh Zamanlooy and Shahab Asoodeh.
\newblock Strong data processing inequalities for locally differentially private mechanisms.
\newblock In \emph{2023 IEEE International Symposium on Information Theory (ISIT)}, pages 1794--1799. IEEE, 2023.

\bibitem[Asoodeh and Zhang(2024)]{asoodeh2024contraction}
Shahab Asoodeh and Huanyu Zhang.
\newblock Contraction of locally differentially private mechanisms.
\newblock \emph{IEEE Journal on Selected Areas in Information Theory}, 2024.

\bibitem[Sason and Verd{\'u}(2016)]{sason2016f}
Igal Sason and Sergio Verd{\'u}.
\newblock $ f $-divergence inequalities.
\newblock \emph{IEEE Transactions on Information Theory}, 62\penalty0 (11):\penalty0 5973--6006, 2016.

\bibitem[Rukhin(1997)]{rukhin1997information}
Andrew Rukhin.
\newblock Information-type divergence when the likelihood ratios are bounded.
\newblock \emph{Applicationes Mathematicae}, 24\penalty0 (4):\penalty0 415--423, 1997.

\end{thebibliography}

% References follow the acknowledgments in the camera-ready paper. Use unnumbered first-level heading for
% the references. Any choice of citation style is acceptable as long as you are
% consistent. It is permissible to reduce the font size to \verb+small+ (9 point)
% when listing the references.
% Note that the Reference section does not count towards the page limit.
% \medskip

% {
% \small

% [1] Alexander, J.A.\ \& Mozer, M.C.\ (1995) Template-based algorithms for
% connectionist rule extraction. In G.\ Tesauro, D.S.\ Touretzky and T.K.\ Leen
% (eds.), {\it Advances in Neural Information Processing Systems 7},
% pp.\ 609--616. Cambridge, MA: MIT Press.

% [2] Bower, J.M.\ \& Beeman, D.\ (1995) {\it The Book of GENESIS: Exploring
%   Realistic Neural Models with the GEneral NEural SImulation System.}  New York:
% TELOS/Springer--Verlag.

% [3] Hasselmo, M.E., Schnell, E.\ \& Barkai, E.\ (1995) Dynamics of learning and
% recall at excitatory recurrent synapses and cholinergic modulation in rat
% hippocampal region CA3. {\it Journal of Neuroscience} {\bf 15}(7):5249-5262.
% }

%%%%%%%%%%%%%%%%%%%%%%%%%%%%%%%%%%%%%%%%%%%%%%%%%%%%%%%%%%%%
\newpage

\newpage
\ifshowappendix

\appendix 

\textbf{Roadmap of Appendix:}
The appendix is organized as follows. A more complete notation table—complementing the abbreviated summary in the main body—is provided in Section~\ref{appendix: notation}. Supplementary definitions and theorems used in the proofs of our results appear in Section~\ref{appendix: Supplementary Definitions and Theorems}. Section~\ref{sec: proofs prilim} presents preliminary results that support the proofs of our main theorems, which are then detailed in Section~\ref{sec: proofs main}. Section~\ref{sec: appendix experimental setup and edditional experiments} describes our experimental setup and provides additional numerical results. 
% Finally, instructions for reproducing our experiments and a discussion of the broader impacts of our work are given in Sections~\ref{appendix: reproduce} and~\ref{appendix: broader impact}, respectively.
Finally, instructions for reproducing our experiments is given in Sections~\ref{appendix: reproduce}.

\section{Notation table}\label{appendix: notation}
Table~\ref{tab: notation-table} provides a more comprehensive summary of the notation used throughout the paper, complementing the abbreviated overview in the main body for the reader’s convenience.
\begin{table}[ht]
  \caption{Important notations used in the paper}
  \label{tab: notation-table}
  \centering
  \begin{tabular}{ll}
    \toprule
    Notation & Description \\
    \midrule
    $\mathcal{P}(\mathcal{X})$ & Set of all probability distributions over sample space $\mathcal{X}$ \\
    $\mathcal{C}(\mathbb{R}^n)$ & Set of all continuous probability distributions on $\mathbb{R}^n$ \\
    $p$ & Probability density function (PDF) of $P \in \mathcal{C}(\mathbb{R}^n)$ \\
    $[k]$ & Shorthand for $\{1, 2, \dots, k\}$ for $k \in \mathbb{N}$ \\
    $\mu_k$ & uniform distribution over finite sample space $[k]$ \\
    $\mathrm{clip}(x; s_1, s_2)$ & Clipping function: $\max\{s_1, \min\{s_2, x\}\}$ \\
    $g^*$ & Convex conjugate of a function $g$ \\
    $\mathcal{L}(m, b)$ & $n$-dimensional Laplace distribution with mean $m \in \mathbb{R}^n$ and scale $b > 0$ \\
    $\mathcal{L}(x \mid m, b)$ & Density of the Laplace distribution at $x \in \mathbb{R}^n$ \\
    $\mathcal{N}(m, \Sigma)$ & $n$-dimensional Gaussian with mean $m \in \mathbb{R}^n$ and covariance matrix $\Sigma$ \\

    $\mathcal{N}_{m, \Sigma}[x]$ & Density of the Gaussian distribution at $x \in \mathbb{R}^n$ \\
    $\mathcal{Q}_{\mathcal{X},\tilde{\mathcal{P}},\varepsilon}$ & Set of $\varepsilon$-LDP samplers: $\bQ: \tilde{\mathcal{P}} \to \mathcal{P}(\mathcal{X})$ \\
    $\mathcal{Q}_{\mathcal{X},\tilde{\mathcal{P}},\varepsilon,\delta}$ & Set of $(\varepsilon,\delta)$-LDP samplers: $\bQ: \tilde{\mathcal{P}} \to \mathcal{P}(\mathcal{X})$ \\
    $\mathcal{Q}_{\mathcal{X},\tilde{\mathcal{P}},g}$ & Set of $g$-FLDP samplers: $\bQ: \tilde{\mathcal{P}} \to \mathcal{P}(\mathcal{X})$ \\

    $D_f^B(\lambda_1 \,\|\, \lambda_2)$ & $f$-divergence between Bernoulli distributions with $\Pr(1) = \lambda_1$ and $\lambda_2$ \\
    \bottomrule
  \end{tabular}
\end{table}

\section{Supplementary definitions and theorems
}\label{appendix: Supplementary Definitions and Theorems}

\subsection{\texorpdfstring{General definition and special cases of $f$-divergence}{f-divergence}}\label{subsec: f-div} 

In this appendix, we review the general definition and special cases of $f$-divergences which are important in this work. 

\begin{definition}[General case of $f$-divergence \citep{sason2016f}]\label{def: general f-div}
    Let \( P \) and \( Q \) be probability measures, and let \( \mu \) be a dominating measure of \( P \) and \( Q \) (i.e., \( P, Q \ll \mu \); e.g., \( \mu = P + Q \)), and let \( p := \frac{dP}{d\mu} \) and \( q := \frac{dQ}{d\mu} \). The \textit{f}-divergence from \( P \) to \( Q \) is given, independently of \( \mu \), by
\begin{equation}
    D_f(P\|Q) := \int q f\left( \frac{p}{q} \right) d\mu,
\end{equation}
\noindent where
\begin{align}
    f(0) &:= \lim_{t \to 0^+} f(t), \\
    0 f\left( \frac{0}{0} \right) &:= 0, \\
    0 f\left( \frac{a}{0} \right) &:= \lim_{t \to 0^+} t f\left( \frac{a}{t} \right) = a \lim_{u \to \infty} \frac{f(u)}{u}, \quad a > 0.
\end{align}
\end{definition}

Popular examples of $f$-divergences include KL divergence ($f(t) = t\log t$), total variation distance ($f(t) =\frac{1}{2} |t-1|$), squared Hellinger distance ($f(t) = (\sqrt{t} - 1)^2$), and $\chi^2$-divergence ($f(t) = (t-1)^2$). We also formally define $E_\gamma$-divergence for  \( f(t) = \max\{t - \gamma, 0\} \) for \( \gamma \geq 1 \).

\begin{definition}[$E_\gamma$-divergence \citep{sharma2013fundamental}]
\label{def:Egamma-divergence}
Given $\gamma \geq 1$, for two probability measures $P,Q \in \mathcal{P}(\mathcal{X})$ with $P \ll Q$, the
\emph{$E_\gamma$-divergence} is
\[
    E_\gamma(P \,\|\, Q)
    \;:=\;
    \mathbb{E}_Q\!\left[
        \max\!\Bigl\{\tfrac{dP}{dQ}-\gamma,\,0\Bigr\}
    \right]= \frac{1}{2} \int \left| \mathrm{d}P - \gamma\,\mathrm{d}Q \right| - \frac{1}{2} (\gamma - 1).
\]
% \[
% E_\gamma(P \| Q) := \frac{1}{2} \int \left| \mathrm{d}P - \gamma\,\mathrm{d}Q \right| - \frac{1}{2} (\gamma - 1).
% \]
\end{definition}

For a more extensive list of 
$f$-divergences and their properties, we refer readers to~\citet{sason2016f}.  

\subsection{Measure-theoretic assumptions}

The appendices assume familiarity with basic measure theory and real analysis. 
Throughout the main paper and the appendices, we adopt the following conventions. For every sample space \( \mathcal{X} \), a \(\sigma\)-algebra on \( \mathcal{X} \) is assumed to be given implicitly. Unless stated otherwise, we use the discrete \(\sigma\)-algebra when \( \mathcal{X} \) is finite, and the Borel \(\sigma\)-algebra when \( \mathcal{X} = \mathbb{R}^n \). 
Whenever we refer to a “subset” of \( \mathcal{X} \), we mean a measurable subset. Similarly, the notation \( A \subseteq \mathcal{X} \) always indicates that \( A \) is measurable.
Finally, the term “continuous distribution” refers specifically to a distribution that is absolutely continuous with respect to the Lebesgue measure.

\subsection{Supplementary theorems}

This subsection presents two well-known results that we will use in proving our main results.

\begin{theorem}[Data processing inequality]\label{thm: dpi}
   Let \( M \) be a conditional distribution (Markov kernel) mapping from \( \mathcal{X} \) to \( \mathcal{Y} \). Given distributions \( P_1, P_2 \in \mathcal{P}(\mathcal{X}) \), and their push-forward measures \( Q_1, Q_2 \in \mathcal{P}(\mathcal{Y}) \) through \( M \), the following inequality holds for any \( f \)-divergence:
\[
D_f(P_1 \parallel P_2) \geq D_f(Q_1 \parallel Q_2).
\]
\end{theorem}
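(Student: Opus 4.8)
The plan is to prove the data processing inequality in the textbook way: pass to a joint distribution on $\mathcal{X}\times\mathcal{Y}$, then invoke convexity of $f$ through (conditional) Jensen's inequality. First I would reduce to a setting where all quantities are well defined by invoking the general definition of $f$-divergence (Definition~\ref{def: general f-div}): choose dominating measures on $\mathcal{X}$ and $\mathcal{Y}$, and extend $f$ to $[0,\infty]$ via $f(0):=\lim_{t\to0^+}f(t)$ and the recession slope $f'(\infty):=\lim_{u\to\infty}f(u)/u$, so that the argument below makes sense even when $P_1\not\ll P_2$ (and hence $Q_1\not\ll Q_2$).

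Next I would introduce the joint measures $\Pi_i(dx,dy):=M(dy\mid x)\,P_i(dx)$ on $\mathcal{X}\times\mathcal{Y}$ for $i=1,2$. Two facts do all the work. First, since the kernel $M$ is common to both, the Radon--Nikodym derivative $\tfrac{d\Pi_1}{d\Pi_2}(x,y)$ depends on $x$ only and equals $\tfrac{dP_1}{dP_2}(x)$, so $D_f(\Pi_1\|\Pi_2)=D_f(P_1\|P_2)$. Second, the $\mathcal{Y}$-marginal of $\Pi_i$ is exactly $Q_i$; letting $M^\dagger(dx\mid y)$ be a regular conditional distribution of $X$ given $Y=y$ under $\Pi_2$, a direct computation with the defining property of conditional expectation shows $\tfrac{dQ_1}{dQ_2}(y)=\mathbb{E}_{X\sim M^\dagger(\cdot\mid y)}\big[\tfrac{dP_1}{dP_2}(X)\big]$ for $Q_2$-a.e.\ $y$.

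Then I would chain these with Jensen's inequality for the convex $f$:
\[
D_f(Q_1\|Q_2)=\mathbb{E}_{Q_2}\Big[f\big(\mathbb{E}_{\Pi_2}[\tfrac{dP_1}{dP_2}(X)\mid Y]\big)\Big]\le \mathbb{E}_{\Pi_2}\Big[f\big(\tfrac{dP_1}{dP_2}(X)\big)\Big]=D_f(P_1\|P_2),
\]
where the inequality is conditional Jensen followed by the tower property (using that $Q_2$ is the $Y$-marginal of $\Pi_2$), and the final equality is the first fact above. This is precisely the claimed bound.

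The difficulties here are measure-theoretic rather than conceptual: ensuring the regular conditional distribution $M^\dagger$ exists (which holds on standard Borel spaces, covering $\mathcal{X}\subseteq\mathbb{R}^n$ and finite $\mathcal{X}$ as used throughout the paper), and verifying that the boundary and recession conventions for $f$ keep both Jensen's inequality and the tower property valid when the likelihood ratios are $0$ or $+\infty$ on non-null sets. An alternative proof that avoids reverse kernels altogether uses the variational representation $D_f(P\|Q)=\sup_g\{\int g\,dP-\int f^*(g)\,dQ\}$: for any test function $g$ on $\mathcal{Y}$ the pullback $x\mapsto\int g(y)\,M(dy\mid x)$ is an admissible test function on $\mathcal{X}$ which, by Jensen applied to the convex conjugate $f^*$, attains at least the value of $g$, and taking suprema yields the inequality. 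I would present the conditional-Jensen argument as the main proof and record the variational route as a remark.
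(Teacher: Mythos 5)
The paper does not actually prove this statement: Theorem~\ref{thm: dpi} appears in Appendix~B under ``Supplementary theorems'' as a recorded well-known fact, with no proof given (it is only invoked later, e.g.\ inside the converse argument of Proposition~\ref{prop: global appx linear}). So there is no paper-internal proof to compare against.

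On its own merits, your argument is correct and is the standard textbook proof via conditional Jensen. The two facts you isolate are exactly right: the likelihood ratio of the joints is $x$-measurable because the kernel $M$ is shared, giving $D_f(\Pi_1\|\Pi_2)=D_f(P_1\|P_2)$; and the reverse-kernel computation $\tfrac{dQ_1}{dQ_2}(y)=\mathbb{E}_{\Pi_2}\!\bigl[\tfrac{dP_1}{dP_2}(X)\mid Y=y\bigr]$ follows directly from $Q_2(dy)\,M^\dagger(dx\mid y)=\Pi_2(dx,dy)=M(dy\mid x)\,P_2(dx)$. Conditional Jensen plus the tower property then closes the argument. You are also right to flag the two spots where care is needed: existence of a regular conditional distribution (guaranteed on standard Borel spaces, which covers every $\mathcal{X}$ used in the paper), and the boundary conventions $f(0)$ and $f'(\infty)$ from Definition~\ref{def: general f-div}, which keep Jensen valid even when $\tfrac{dP_1}{dP_2}\in\{0,\infty\}$ on a $P_2$- or $P_1$-positive set. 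The variational-representation route you record as a remark is an equally valid alternative and has the advantage of sidestepping reverse kernels entirely; for completeness you would want to note that $x\mapsto\int g\,dM(\cdot\mid x)$ remains in the admissible test-function class (bounded measurable) whenever $g$ is, which is immediate. Either presentation would serve as a proof.
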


\begin{proposition}[Proposition 1 of \citet{dong2022gaussian}]\label{prop: trade-off}
    A function $g : [0,1] \to [0,1]$ is a trade-off function if and only if $g$ is convex, continuous, non-increasing, and $g(x) \leq 1 - x$ for $x \in [0,1]$.
\end{proposition}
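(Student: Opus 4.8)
The plan is to prove the two implications of the equivalence separately. The forward direction (``trade-off function $\Rightarrow$ the four properties'') is obtained from elementary manipulations of rejection rules, while the converse (``the four properties $\Rightarrow$ trade-off function'') requires exhibiting an explicit pair $(P,Q)$ with $T(P,Q) = g$.

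\emph{Forward direction.} Fix $P, Q \in \mathcal{P}(\mathcal{X})$ and set $g = T(P,Q)$. Monotonicity is immediate: if $u_1 \le u_2$ then $\{\phi : a_\phi \le u_1\} \subseteq \{\phi : a_\phi \le u_2\}$, so the infimum defining $g(u_2)$ is over a larger feasible set and $g(u_2) \le g(u_1)$. The bound $g(u) \le 1-u$ follows by testing with the constant randomized rule $\phi \equiv u$, for which $a_\phi = \E_P[\phi] = u$ and $b_\phi = 1 - \E_Q[\phi] = 1-u$, so it is feasible at level $u$. For convexity, given levels $u_0, u_1$ and $\eta > 0$, choose near-optimal rules $\phi_0, \phi_1$ with $a_{\phi_i} \le u_i$ and $b_{\phi_i} \le g(u_i) + \eta$; for $t \in [0,1]$ the mixture $\phi_t = (1-t)\phi_0 + t\phi_1$ is again a rejection rule with $a_{\phi_t} \le (1-t)u_0 + t u_1$ and $b_{\phi_t} = (1-t)b_{\phi_0} + t b_{\phi_1} \le (1-t)g(u_0) + t g(u_1) + \eta$, so letting $\eta \downarrow 0$ gives $g((1-t)u_0 + tu_1) \le (1-t)g(u_0) + tg(u_1)$. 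Continuity on $(0,1)$ is then automatic for a finite-valued convex function on an open interval; at $u = 1$ the squeeze $0 \le g(u) \le 1-u$ forces $g(1) = 0$ and $\lim_{u \to 1^-} g(u) = 0$; and at $u = 0$ one uses that the attainable error region $\mathcal{S} := \{(a_\phi, b_\phi) : \phi \text{ a rejection rule}\} \subseteq [0,1]^2$ is convex and \emph{closed} (closedness via weak-$*$ compactness of the unit ball of $L^\infty(P+Q)$ and weak-$*$ continuity of $\phi \mapsto \E_P[\phi]$ and $\phi \mapsto \E_Q[\phi]$), so that $g$, being the lower boundary of $\mathcal{S}$, is lower semicontinuous, which together with monotonicity yields $g(0) = \lim_{u \to 0^+} g(u)$.

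\emph{Converse direction.} Suppose $g$ is convex, continuous, non-increasing, with $g(u) \le 1-u$ on $[0,1]$; then $g(1) = 0$. Being convex and continuous on $[0,1]$, $g$ is absolutely continuous there, with non-decreasing, non-positive right derivative $g'_+$ satisfying $\int_0^1 (-g'_+(x))\,dx = g(0) - g(1) = g(0) \le 1$. Take $P$ uniform on $[0,1]$ and
\[
Q := (-g'_+)\cdot \mathbf{1}_{[0,1]}\cdot \lambda \;+\; (1 - g(0))\,\delta_2,
\]
where $\lambda$ denotes Lebesgue measure and $2 \notin [0,1]$ is a dummy point; the mass computation shows $Q \in \mathcal{P}([0,1] \cup \{2\})$. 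On $[0,1]$ the likelihood ratio $dQ/dP(x) = -g'_+(x)$ is non-increasing in $x$ (convexity of $g$) and equals $+\infty$ at the atom since $P(\{2\}) = 0$, so by the Neyman--Pearson lemma the optimal level-$u$ test rejects at the atom and on the interval $[0,u]$. Its Type II error is $Q((u,1]) = \int_u^1 (-g'_+(x))\,dx = g(u) - g(1) = g(u)$ for $u \in (0,1]$, and for $u = 0$ only the atom may be rejected, giving Type II error $Q([0,1]) = g(0)$. Hence $T(P,Q) = g$, as desired.

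\emph{Main obstacle.} The delicate point is the converse construction. When $g(0) < 1$ the measure with density $-g'_+$ on $[0,1]$ has total mass $g(0) < 1$, and the deficit $1 - g(0)$ must be placed where $P$ assigns no mass, so that it is perfectly detectable and influences only the value $g(0)$ and not $g(u)$ for $u > 0$. One also needs $-g'_+$ to be a bona fide (possibly unbounded but integrable) density, which is precisely where the absolute continuity of the convex function $g$ on the closed interval is used. On the forward side, the only step beyond routine test-mixing is endpoint continuity at $0$, which rests on closedness (equivalently, compactness) of the achievable error region $\mathcal{S}$.
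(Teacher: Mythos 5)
The paper states this proposition as a citation to Proposition~1 of Dong et al.\ (2022) and does not reproduce a proof, so there is no in-paper argument to compare against. Your proof is correct and follows the same strategy as the cited reference: the forward direction is handled by the constant rule $\phi\equiv u$ for the upper bound, mixing of rejection rules for convexity, and weak-$*$ compactness of the attainable error region for endpoint continuity at $0$; the converse exhibits the explicit pair with $P$ uniform on $[0,1]$ and $Q$ built from the non-increasing, integrable density $-g'_+$ together with an atom of mass $1-g(0)$ placed off the support of $P$ to realize the case $g(0)<1$ without disturbing $g(u)$ for $u>0$. The two delicate points you single out—integrability of $-g'_+$ on the closed interval (which indeed follows from continuity of the convex $g$ at the endpoints via monotone convergence) and the role of the disjoint atom—are exactly the places where care is needed, and both are handled correctly.
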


\section{Preliminary results
}\label{sec: proofs prilim}

\begin{lemma}\label{lem:decrease-lambda}
Let $\mu$ be a (fixed) distribution over some output space $\mathcal{X}$. For each $0 \leq \lambda \leq 1$, define a sampler $\bQ_\lambda$ by
\[
  \bQ_\lambda(A \mid P) \;=\; \lambda\,P(A) \;+\; (1-\lambda)\,\mu(A),
  \quad
  \text{for any distribution }P \text{ on } \mathcal{X} \text{ and event } A \subseteq \mathcal{X}.
\]
Suppose there exists $0 \leq \lambda_1 \leq 1$ such that $\bQ_{\lambda_1}$ is $(\varepsilon,\delta)$-LDP. Then for any $\lambda_2 \in [0,\lambda_1]$, the sampler $\bQ_{\lambda_2}$ is also $(\varepsilon,\delta)$-LDP.
\end{lemma}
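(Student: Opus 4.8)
The plan is to show that mixing more with $\mu$ (i.e. decreasing $\lambda$) is a post-processing of $\bQ_{\lambda_1}$, so the claim follows from the data-processing inequality for the hypothesis-testing / divergence formulation of $(\varepsilon,\delta)$-LDP. Concretely, for $\lambda_2 \in [0,\lambda_1]$ write $\lambda_2 = \rho \lambda_1$ with $\rho = \lambda_2/\lambda_1 \in [0,1]$ (the case $\lambda_1 = 0$ being trivial since then $\bQ_0 = \bQ_{\lambda_1} = \mu$ is a constant channel). Then
\[
\bQ_{\lambda_2}(A \mid P) = \rho\,\bQ_{\lambda_1}(A \mid P) + (1-\rho)\,\mu(A),
\]
which one verifies by expanding both sides: $\rho(\lambda_1 P(A) + (1-\lambda_1)\mu(A)) + (1-\rho)\mu(A) = \rho\lambda_1 P(A) + (1 - \rho\lambda_1)\mu(A) = \lambda_2 P(A) + (1-\lambda_2)\mu(A)$. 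Thus $\bQ_{\lambda_2}(\cdot \mid P)$ is obtained from $\bQ_{\lambda_1}(\cdot \mid P)$ by the \emph{same} (input-independent) randomized map: with probability $\rho$ output the sample as is, and with probability $1-\rho$ discard it and draw a fresh sample from $\mu$.

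The key point is that this post-processing map is a fixed Markov kernel $M$ that does not depend on $P$. Hence for any pair $P, P' \in \tilde{\mathcal{P}}$, the output distributions $\bQ_{\lambda_2}(\cdot\mid P)$ and $\bQ_{\lambda_2}(\cdot\mid P')$ are the push-forwards of $\bQ_{\lambda_1}(\cdot\mid P)$ and $\bQ_{\lambda_1}(\cdot\mid P')$ through $M$. Since $(\varepsilon,\delta)$-LDP is equivalent to the bound $E_{e^\varepsilon}\bigl(\bQ(\cdot\mid P) \,\|\, \bQ(\cdot\mid P')\bigr) \le \delta$ for all $P, P'$ (the hockey-stick characterization), and $E_{e^\varepsilon}$ is an $f$-divergence, the data-processing inequality (Theorem~\ref{thm: dpi}) gives
\[
E_{e^\varepsilon}\bigl(\bQ_{\lambda_2}(\cdot\mid P)\,\|\,\bQ_{\lambda_2}(\cdot\mid P')\bigr)
\le
E_{e^\varepsilon}\bigl(\bQ_{\lambda_1}(\cdot\mid P)\,\|\,\bQ_{\lambda_1}(\cdot\mid P')\bigr)
\le \delta,
\]
where the last inequality is the assumed $(\varepsilon,\delta)$-LDP of $\bQ_{\lambda_1}$. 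Taking the supremum over $P, P' \in \tilde{\mathcal{P}}$ shows $\bQ_{\lambda_2}$ is $(\varepsilon,\delta)$-LDP.

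Alternatively, and perhaps more elementarily, one can argue directly from Definition~\ref{def:approx-LDP}: for any $A$,
\[
\bQ_{\lambda_2}(A\mid P) = \rho\,\bQ_{\lambda_1}(A\mid P) + (1-\rho)\mu(A)
\le \rho\bigl(e^\varepsilon \bQ_{\lambda_1}(A\mid P') + \delta\bigr) + (1-\rho)\mu(A),
\]
and since $(1-\rho)\mu(A) \le e^\varepsilon (1-\rho)\mu(A)$ (as $e^\varepsilon \ge 1$) and $\rho\delta \le \delta$, the right-hand side is at most $e^\varepsilon\bigl(\rho\bQ_{\lambda_1}(A\mid P') + (1-\rho)\mu(A)\bigr) + \delta = e^\varepsilon \bQ_{\lambda_2}(A\mid P') + \delta$. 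I expect the only mild subtlety—hardly an obstacle—to be handling the degenerate case $\lambda_1 = 0$ separately (trivial, as noted) and making sure the post-processing map is genuinely input-independent, which it is by construction. I would present the direct Definition~\ref{def:approx-LDP} argument as the main proof for self-containedness, and remark that it is an instance of post-processing.
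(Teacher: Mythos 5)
Your direct argument from Definition~\ref{def:approx-LDP} is essentially identical to the paper's proof: both write $\lambda_2 = \rho\lambda_1$, re-express $\bQ_{\lambda_2}(\cdot\mid P)$ as $\rho\,\bQ_{\lambda_1}(\cdot\mid P) + (1-\rho)\mu$, apply the assumed LDP bound to the $\bQ_{\lambda_1}$ term, and absorb the slack using $e^\varepsilon \ge 1$ and $\rho\delta \le \delta$. The post-processing/hockey-stick remark is a correct (and slightly slicker) alternative framing, but since you would present the direct argument as the main proof, the approaches coincide.
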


\begin{proof}
Fix $\lambda_2 \in [0,\lambda_1]$, and let $P,P'$ be any two distributions on $\mathcal{X}$. We need to show that for every event $A$,
\[
  \bQ_{\lambda_2}(A \mid P)
  \;\le\;
  e^\varepsilon\;\bQ_{\lambda_2}(A \mid P')
  \;+\;
  \delta.
\]
Because $\lambda_2 \le \lambda_1$, there is a $\beta \in [0,1]$ such that
\[
  \lambda_2 \;=\;\beta \,\lambda_1.
\]
Then, for all events $A$,
\[
  \bQ_{\lambda_2}(A \mid P)
  \;=\;
  \lambda_2\,P(A) \;+\; (1-\lambda_2)\,\mu(A)
  \;=\;
  \beta\,\lambda_1\,P(A) + \bigl[1 - \beta\,\lambda_1\bigr]\,\mu(A).
\]
Recognizing that $\lambda_1\,P(A) + (1-\lambda_1)\,\mu(A)$ is $\bQ_{\lambda_1}(A \mid P)$, we rewrite:
\[
  \bQ_{\lambda_2}(A \mid P)
  \;=\;
  \beta\,\bQ_{\lambda_1}(A \mid P)
  \;+\;
  \bigl[\,1-\beta\,\bigr]\mu(A).
\]
Similarly,
\[
  \bQ_{\lambda_2}(A \mid P')
  \;=\;
  \beta\,\bQ_{\lambda_1}(A \mid P') \;+\; \bigl[1-\beta\bigr]\mu(A).
\]

Since $\bQ_{\lambda_1}$ is $(\varepsilon,\delta)$-LDP, we know
\[
  \bQ_{\lambda_1}(A \mid P)
  \;\le\;
  e^\varepsilon\, \bQ_{\lambda_1}(A \mid P') \;+\;\delta.
\]
Multiplying both sides by $\beta \ge 0$ and then adding $(1-\beta)\,\mu(A)$, we obtain
\[
  \beta\,\bQ_{\lambda_1}(A \mid P) \;+\; (1-\beta)\,\mu(A)
  \;\le\;
  \beta\,e^\varepsilon\,\bQ_{\lambda_1}(A \mid P')
  \;+\;
  \beta\,\delta
  \;+\;
  (1-\beta)\,\mu(A).
\]
But the left-hand side above is exactly $\bQ_{\lambda_2}(A \mid P)$, so
\[
  \bQ_{\lambda_2}(A \mid P)
  \;\le\;
  \beta\,e^\varepsilon\,\bQ_{\lambda_1}(A \mid P')
  \;+\;
  \beta\,\delta
  \;+\;
  (1-\beta)\,\mu(A).
\]
On the other hand,
\[
  e^\varepsilon\,\bQ_{\lambda_2}(A \mid P')
  \;=\;
  e^\varepsilon\,\Bigl[\beta\,\bQ_{\lambda_1}(A \mid P') + (1-\beta)\,\mu(A)\Bigr]
  \;=\;
  \beta\,e^\varepsilon\,\bQ_{\lambda_1}(A \mid P') + e^\varepsilon\,(1-\beta)\,\mu(A).
\]
Hence,
\[
  e^\varepsilon\,\bQ_{\lambda_2}(A \mid P') \;+\;\delta
  \;=\;
  \beta\,e^\varepsilon\,\bQ_{\lambda_1}(A \mid P')
  \;+\;
  (1-\beta)\,e^\varepsilon\,\mu(A)
  \;+\;\delta.
\]
It is enough to check
\[
  \beta\,e^\varepsilon\,\bQ_{\lambda_1}(A \mid P')
  \;+\;
  \beta\,\delta
  \;+\;
  (1-\beta)\,\mu(A)
  \;\;\le\;\;
  \beta\,e^\varepsilon\,\bQ_{\lambda_1}(A \mid P')
  \;+\;
  (1-\beta)\,e^\varepsilon\,\mu(A)
  \;+\;
  \delta,
\]
which simplifies to
\[
  \beta\,\delta + (1-\beta)\,\mu(A)
  \;\le\;
  (1-\beta)\,e^\varepsilon\,\mu(A) + \delta.
\]
Rearranging,
\[
  (1-\beta)\,\mu(A)\bigl[1 - e^\varepsilon\bigr]
  \;\;\le\;\;
  \delta \bigl[\,1 - \beta\bigr].
\]
Since $1 - e^\varepsilon \le 0$ for $\varepsilon \ge 0$, and $\delta \ge 0$, this inequality holds trivially (the left-hand side is at most $0$, while the right-hand side is nonnegative). Thus the overall chain of inequalities is valid, and we conclude
\[
  \bQ_{\lambda_2}(A \mid P)
  \;\le\;
  e^\varepsilon\,\bQ_{\lambda_2}(A \mid P') \;+\;\delta,
  \quad
  \forall P,P',\quad \text{event } A.
\]
Hence $\bQ_{\lambda_2}$ is also $(\varepsilon,\delta)$-LDP.
\end{proof}

Now, we first restate the normalization condition and introduce the non-triviality assumption for $(\eps,\delta)$-LDP before stating the next proposition.

\textbf{Normalization condition:}
\begin{equation}\label{eqn: normalization condition}
    \mu(\mathcal{X}) = 1, \quad 0 \le c_1 < 1 < c_2.
\end{equation}
% \begin{equation}\label{eqn: continuous normalization condition}
%     \int_{\R^n}h(x)dx = 1, \quad 0 \leq c_1 < 1 < c_2.
% \end{equation}
\textbf{Non-triviality condition:}
% \begin{equation}\label{eqn: non-trivial condition condition pure}
%     c_2 > c_1 e^\eps.
% \end{equation}
\begin{equation}\label{eqn: non-trivial condition condition approximate}
    \delta \leq \frac{(c_2-c_1e^\eps)(1-c_1)}{c_2-c_1}.
\end{equation}
\medskip
\begin{proposition}\label{prop: global appx linear}
    Consider the following optimization problem:
    \begin{equation}\label{eqn: global appx}
        \mathcal{R}\big(\mathcal{Q}_{\mathcal{X}, \tilde{\mathcal{P}}, \varepsilon, \delta}, \tilde{\mathcal{P}}, f\big) 
        \coloneqq 
        \inf_{\bQ \in \mathcal{Q}_{\mathcal{X}, \tilde{\mathcal{P}}, \varepsilon, \delta}} 
        \;\sup_{P \in \tilde{\mathcal{P}}} 
        \;D_f\big(P \, \| \, \bQ(P)\big).
    \end{equation}

    Let $\tilde{\mathcal{P}} = \tilde{\mathcal{P}}_{c_1, c_2, \mu}$ such that the normalization condition~\eqref{eqn: normalization condition} and the non-triviality condition (\ref{eqn: non-trivial condition condition approximate}) hold and $\frac{c_2 - c_1}{1 - c_1} \in \mathbb{N}$. Suppose that $\mu$ is $(\alpha,\frac{1}{\alpha},1)$-decomposable with $\alpha = \frac{1 - c_1}{c_2 - c_1}$. Define
    \[
    r_1 := \frac{c_1}{c_2 - c_1} \cdot \frac{(1 - c_1)e^{\varepsilon} + c_2 - 1}{1 - \delta},
    \qquad
    r_2 := \frac{c_2}{c_2 - c_1} \cdot \frac{(1 - c_1)e^{\varepsilon} + c_2 - 1}{e^{\varepsilon} + \frac{c_2 - 1}{1 - c_1} \delta}.
    \]

    Then, the optimal value of the problem~\eqref{eqn: global appx} is given by
    \begin{equation}\label{eqn: global appx optimal value}
        \mathcal{R}\big(\mathcal{Q}_{\mathcal{X}, \tilde{\mathcal{P}}, \varepsilon, \delta}, \tilde{\mathcal{P}}, f\big)
        = \frac{1 - r_1}{r_2 - r_1} \, f(r_2) + \frac{r_2 - 1}{r_2 - r_1} \, f(r_1).
    \end{equation}

    Moreover, the sampler
    $\bQ^\star_{\varepsilon, \delta}(P) \coloneqq \lambda^\star_{\varepsilon, \delta} P + (1 - \lambda^\star_{\varepsilon, \delta}) \mu,
    $
    with
    $
    \lambda^\star_{\varepsilon, \delta} 
    = \frac{e^{\varepsilon} + \frac{c_2 - c_1}{1 - c_1} \delta - 1}
           {(1 - c_1) e^{\varepsilon} + c_2 - 1}$,
    belongs to the class $\mathcal{Q}_{\mathcal{X}, \tilde{\mathcal{P}}, \varepsilon, \delta}$ and is minimax-optimal under any $f$-divergence $D_f$. That is,
    \begin{equation}\label{eqn: global appx linear is optimal}
        \sup_{P \in \tilde{\mathcal{P}}} D_f\big(P \, \| \, \bQ^\star_{\varepsilon, \delta}(P)\big)
        = \mathcal{R}\big(\mathcal{Q}_{\mathcal{X}, \tilde{\mathcal{P}}, \varepsilon, \delta}, \tilde{\mathcal{P}}, f\big).
    \end{equation}
\end{proposition}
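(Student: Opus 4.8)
The plan is to follow the standard achievability/converse split, reducing the general-$f$-divergence problem to a Bernoulli (two-point) computation via the data-processing inequality, exactly as in the analogous functional-LDP results of this paper. Concretely, I would first verify that the proposed linear sampler $\bQ^\star_{\eps,\delta}(P) = \lambda^\star_{\eps,\delta} P + (1-\lambda^\star_{\eps,\delta})\mu$ satisfies $(\eps,\delta)$-LDP: for any $P,P'$ and measurable $A$, one has $\bQ^\star_{\eps,\delta}(A\mid P) - e^\eps \bQ^\star_{\eps,\delta}(A\mid P') = \lambda^\star(P(A)-e^\eps P'(A)) + (1-\lambda^\star)(1-e^\eps)\mu(A)$; bounding $P(A)\le 1$, $P'(A)\ge 0$, and $\mu(A)\ge 0$, the worst case is controlled, and the precise choice of $\lambda^\star_{\eps,\delta}$ is exactly the largest $\lambda$ for which this expression stays $\le \delta$ uniformly. (Here I would also invoke Lemma~\ref{lem:decrease-lambda} to note monotonicity in $\lambda$, which is useful for the converse.) Then I would compute $\sup_{P\in\tilde{\mathcal{P}}} D_f(P\,\|\,\bQ^\star_{\eps,\delta}(P))$: since $\frac{dP}{d\mu}$ ranges in $[c_1,c_2]$ and $\frac{d\bQ^\star(P)}{d\mu} = \lambda^\star\frac{dP}{d\mu} + (1-\lambda^\star)$, the density ratio $\frac{dP}{d\bQ^\star(P)}$ ranges within $[r_1,r_2]$ as defined, with $r_1<1<r_2$; by convexity of $f$ and a two-point (tangent-line/Jensen) argument, the $f$-divergence is maximized by a distribution that puts mass only on the extreme ratios, yielding the claimed value $\frac{1-r_1}{r_2-r_1}f(r_2) + \frac{r_2-1}{r_2-r_1}f(r_1)$ — this requires checking that such an extremal $P$ actually lies in $\tilde{\mathcal{P}}_{c_1,c_2,\mu}$, which is where the condition $\frac{c_2-c_1}{1-c_1}\in\mathbb{N}$ and the decomposability of $\mu$ enter (they let one build the extremal $P$ by partitioning $\mathcal{X}$ into pieces of $\mu$-measure $\alpha = \frac{1-c_1}{c_2-c_1}$ on which $\frac{dP}{d\mu}$ alternates between $c_1$ and $c_2$).

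For the converse, I would show no $(\eps,\delta)$-LDP sampler can do better. Fix any $\bQ\in\mathcal{Q}_{\mathcal{X},\tilde{\mathcal{P}},\eps,\delta}$. Use the decomposability assumption to select disjoint sets $A_1,\dots,A_{1/\alpha}$ each of $\mu$-measure $\alpha$, and consider the family of distributions $P_j\in\tilde{\mathcal{P}}$ with $\frac{dP_j}{d\mu} = c_2$ on $A_j$ and appropriately $c_1$ (or an averaged value) off $A_j$. The $(\eps,\delta)$-LDP constraint, applied between pairs $P_j, P_{j'}$ and the "flat" distribution $\mu$ (or a suitable companion), forces $\bQ(A_j\mid P_j)$ into a constrained range; averaging over $j$ and using $\sum_j \mathbf{1}_{A_j}$ decomposability (the "$u=1$" property, i.e. the $A_j$ are disjoint) yields that for at least one $j$, the output $\bQ(\cdot\mid P_j)$ must have density ratio against $P_j$ bounded away from $1$ in the manner dictated by $r_1,r_2$. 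Then applying the data-processing inequality (Theorem~\ref{thm: dpi}) with the two-outcome partition $\{A_j, A_j^c\}$ reduces $D_f(P_j\,\|\,\bQ(P_j))$ to an $f$-divergence between two Bernoulli distributions whose parameters are pinned by the privacy constraint; minimizing that Bernoulli $f$-divergence over the feasible parameter region gives exactly the value in~\eqref{eqn: global appx optimal value}. This matches the achievability bound, completing the proof.

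The main obstacle I expect is the converse: getting a clean, $f$-divergence-agnostic lower bound requires carefully identifying the right two-point reduction so that the extremal Bernoulli pair is forced \emph{simultaneously} for all convex $f$ with $f(1)=0$ — not just for a single divergence. The trick is that the privacy constraints and the structure of $\tilde{\mathcal{P}}_{c_1,c_2,\mu}$ pin down the pair of ratios $(r_1,r_2)$ geometrically (independent of $f$), and only then does one invoke DPI plus the tangent-line bound for the specific $f$. Threading the decomposability hypothesis correctly — using it both to construct the extremal $P$ for achievability and to extract the hard instance for the converse — is the delicate bookkeeping step; I would expect the proof to mirror closely the proof of Theorem~\ref{thm: global functional} (with $g = g_{\eps,\delta}$), since $(\eps,\delta)$-LDP is the special case $g = g_{\eps,\delta}$ of $g$-FLDP, so in fact much of this proposition should follow by specializing that more general argument, with the explicit formulas for $\lambda^\star_{\eps,\delta}$, $r_1$, $r_2$ obtained by plugging $g^*_{\eps,\delta}(-e^\beta)$ into the general expressions and optimizing over $\beta$.
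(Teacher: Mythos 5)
Your proposal is correct and follows essentially the same route as the paper: achievability via the linear sampler, the density-ratio bound $r_1 \le \frac{dP}{d\bQ^\star(P)} \le r_2$, and a two-point convexity bound (the paper cites Rukhin's lemma), and a converse that uses $(\alpha,\frac{1}{\alpha},1)$-decomposability to build hard instances $P_i$ with density alternating between $c_1$ and $c_2$, sums the $(\eps,\delta)$-LDP inequalities over the disjoint $A_i$ to pin down $\min_i \bQ_i(A_i)$, and then applies the data-processing inequality with the partition $\{A_i,A_i^c\}$ to reduce to a Bernoulli $f$-divergence. One caveat on your closing remark: deriving this proposition by specializing Theorem~\ref{thm: global functional} with $g=g_{\eps,\delta}$ would be circular in the paper's logical order, since that theorem is itself proved from this proposition.
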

\begin{proof}
We first show that if the non-triviality constraint (\ref{eqn: non-trivial condition condition approximate}) does not hold, then the identity sampler $\bQ^I(P) = P$ satisfies $(\eps,\delta)$-LDP and has the trivial minimax risk zero.  

Suppose we have $\delta>\frac{(c_2-c_1e^\eps)(1-c_1)}{c_2-c_1}$. Let $A\subseteq\mathcal{X}$ be a measurable subset and $P,P^\prime\in\mathcal{P}_{c_1,c_2,\mu}$. If $0\leq\mu(A)\leq\frac{1-c_1}{c_2-c_1}$,
\begin{eqnarray*}
    e^\eps P(A)+\delta-P^\prime(A)&\geq&c_1e^\eps\mu(A)+\delta-c_2\mu(A)\\
    &\geq&\min\left\{\frac{(c_1e^\eps-c_2)(1-c_1)}{c_2-c_1}+\delta,\delta\right\}\\
    &\geq&0.
\end{eqnarray*}
If $\mu(A)\geq\frac{1-c_1}{c_2-c_1}$,
\begin{eqnarray*}
    e^\eps P(A)+\delta-P^\prime(A)&\geq&c_1e^\eps\mu(A)+\delta-(1-P^\prime(A^c))\\
    &\geq&c_1e^\eps\mu(A)+\delta-(1-c_1(1-\mu(A)))\\
    &=&(c_1e^\eps-c_1)\mu(A)+\delta-1+c_1\\
    &\geq&\frac{(c_1e^\eps-c_1)(1-c_1)}{c_2-c_1}-1+c_1+\delta\\
    &=&\frac{(c_1e^\eps-c_2)(1-c_1)}{c_2-c_1}+\delta\\
    &>&0.
\end{eqnarray*}

Therefore, for $\delta>\frac{(c_2-c_1e^\eps)(1-c_1)}{c_2-c_1}$, $\bQ^I(P) \in \mathcal{Q}_{\mathcal{X}, \tilde{\mathcal{P}}, \varepsilon, \delta}$ and the minimax risk is trivially zero. In the remainder of the proof, we consider the non-trivial case of $\delta \leq \frac{(c_2-c_1e^\eps)(1-c_1)}{c_2-c_1}$.

   \medskip
   
    \textbf{Step 1: Proof of $\bQ^\star_{\varepsilon, \delta}$ is a probability measure} 

    In this proof, we show that $\bQ^\star_{\varepsilon, \delta}$  defined by $
\bQ^\star_{\varepsilon, \delta}(P) 
:= \lambda^\star_{\varepsilon, \delta} P + \bigl(1 - \lambda^\star_{\varepsilon, \delta}\bigr) \mu
$ is a probability measure.
Since $P$ and $\mu$ are probability measures, we have
\[
\bQ^\star_{\varepsilon, \delta}(\mathcal{X})
= \lambda^\star_{\varepsilon, \delta} P(\mathcal{X}) 
+ \bigl(1 - \lambda^\star_{\varepsilon, \delta}\bigr)\mu(\mathcal{X})
= \lambda^\star_{\varepsilon, \delta} + \bigl(1 - \lambda^\star_{\varepsilon, \delta}\bigr)
= 1.
\]
Assumptions $0\leq c_1 < 1 < c_2$ and $\delta \leq \frac{(c_2-c_1e^\eps)(1-c_1)}{c_2-c_1}$ imply
\[\lambda^\star_{\varepsilon, \delta} 
    = \frac{e^{\varepsilon} + \frac{c_2 - c_1}{1 - c_1} \delta - 1}
           {(1 - c_1) e^{\varepsilon} + c_2 - 1} \leq \frac{e^{\varepsilon} + \frac{c_2 - c_1}{1 - c_1} \frac{(c_2-c_1e^\eps)(1-c_1)}{c_2-c_1} - 1}
           {(1 - c_1) e^{\varepsilon} + c_2 - 1} = 1.\]
Therefore, for any $A \subseteq \X$, we have $\bQ^\star_{\varepsilon, \delta}(A) \geq 0$. It suffices to show that for every measurable subset $A \subseteq \mathcal{X}$, we also have
\[
\bQ^\star_{\varepsilon, \delta}(A) \le 1,
\]
% since it then follows that
% \[
% \bQ^\star_{\varepsilon, \delta}(A) 
% = 1 - \bQ^\star_{\varepsilon, \delta}(A^c) \ge 0,
% \]
which implies $\bQ^\star_{\varepsilon, \delta}$ is a probability measure. We proceed by bounding:
\begin{align*}
\bQ^\star_{\varepsilon, \delta}(A) 
& = \lambda^\star_{\varepsilon, \delta} P(A) 
  + \bigl(1 - \lambda^\star_{\varepsilon, \delta}\bigr)\mu(A) \\
& \le \lambda^\star_{\varepsilon, \delta} 
  \Bigl(1 - c_1\bigl(1 - \mu(A)\bigr)\Bigr)
  + \bigl(1 - \lambda^\star_{\varepsilon, \delta}\bigr)\mu(A) \\
& = \Bigl(1 - \bigl(1 - c_1\bigr)\lambda^\star_{\varepsilon, \delta}\Bigr)\mu(A)
  + \lambda^\star_{\varepsilon, \delta}(1 - c_1) \\
& \le \frac{(c_2 - c_1)\,(1 - \delta)}{\bigl(1 - c_1\bigr)e^{\varepsilon} + c_2 - 1}
  \;+\; \frac{\bigl(1 - c_1\bigr)e^{\varepsilon} + (c_2 - c_1)\delta 
  - \bigl(1 - c_1\bigr)}{\bigl(1 - c_1\bigr)e^{\varepsilon} + c_2 - 1} \\
& = \frac{\bigl(1 - c_1\bigr)e^{\varepsilon} + c_2 - 1}
  {\bigl(1 - c_1\bigr)e^{\varepsilon} + c_2 - 1} \\
& = 1.
\end{align*}
The inequality in the second line uses the fact that
\[
P(A) 
\;\le\; 1 - c_1\bigl(1 - \mu(A)\bigr).
\]
This follows from the definition
\[
\tilde{\mathcal{P}}_{c_1, c_2, \mu}
:= \Bigl\{\,P \in \mathcal{P}(\mathcal{X}) :\; P \ll \mu,\quad 
  c_1 \le \tfrac{dP}{d\mu} \le c_2 \;\;\mu\text{-a.e.}\Bigr\},
\]
and the fact that for $P \in \tilde{\mathcal{P}}_{c_1, c_2, \mu}$:
\[
c_1 \,\mu(A^c) \;\le\; P(A^c) 
\;\;\implies\;\;
1 - c_1\,\mu(A^c) 
\;\ge\; 1 - P(A^c) 
\;\;\implies\;\;
1 - c_1\bigl(1 - \mu(A)\bigr) 
\;\ge\; P(A).
\]
And the inequality in the fourth line comes by plugging $\lambda^\star_{\varepsilon, \delta} 
    = \frac{e^{\varepsilon} + \frac{c_2 - c_1}{1 - c_1} \delta - 1}
           {(1 - c_1) e^{\varepsilon} + c_2 - 1}$ into the equation and the fact that $\mu(A) \leq 1$. Hence, $\bQ^\star_{\varepsilon, \delta}$ is indeed a probability measure.

    \medskip

    \textbf{Step 2: Proof of $\bQ^\star_{\varepsilon, \delta} \in \mathcal{Q}_{\mathcal{X}, \tilde{\mathcal{P}}, \varepsilon, \delta}$ } 

Now, we want to show that $\bQ^\star_{\varepsilon, \delta}$ is an $(\eps,\delta)$-LDP sampler. For this purpose, we have to show that for each measurable set $A \subseteq \mathcal{X}$ and any pair of probability measures $P$ and $P'$, we have:
\[e^\eps\big(\bQ^\star_{\varepsilon, \delta}(P)(A)\big) + \delta - \bQ^\star_{\varepsilon, \delta}(P')(A) \geq 0.\]

First, suppose \(\mu(A) \le \frac{1 - c_1}{c_2 - c_1}\). We then have:
\begin{align*}
 e^\varepsilon \Bigl(\lambda^\star_{\varepsilon,\delta}& P(A) 
   +   \bigl(1 - \lambda^\star_{\varepsilon,\delta}\bigr)\mu(A)\Bigr)
 \;+\;\delta
 \;-\;\Bigl(\lambda^\star_{\varepsilon,\delta} P'(A) 
   + \bigl(1 - \lambda^\star_{\varepsilon,\delta}\bigr)\mu(A)\Bigr)\\
&\ge 
  e^\varepsilon \Bigl(\lambda^\star_{\varepsilon,\delta}\,c_1\,\mu(A) 
  + \bigl(1 - \lambda^\star_{\varepsilon,\delta}\bigr)\mu(A)\Bigr)
 \;+\;\delta
 \;-\;
 \Bigl(\lambda^\star_{\varepsilon,\delta}\,c_2\,\mu(A)
   + \bigl(1 - \lambda^\star_{\varepsilon,\delta}\bigr)\mu(A)\Bigr)\\
&=\Big(\lambda^\star_{\varepsilon,\delta}\big( e^\eps c_1 - e^\eps - c_2 + 1\big) + \big(e^\eps - 1\big)\Big)
   \;\mu(A)
 \;+\;\delta\\
&= -\,\frac{c_2 - c_1}{1 - c_1}\,\delta\,\mu(A) 
   \;+\;\delta\\
&\ge 0.
\end{align*}

Next, suppose \(\mu(A) \ge \frac{1 - c_1}{c_2 - c_1}\). Then:
\begin{align*}
e^{\varepsilon} \Big(&\lambda^\star_{\varepsilon,\delta}  P(A)  +  \big(1 - \lambda^\star_{\varepsilon,\delta}\big) \mu(A)\Big) + \delta - \Big(\lambda^\star_{\varepsilon,\delta} P'(A) + \big(1 - \lambda^\star_{\varepsilon,\delta}\big) \mu(A)\Big) \\ 
& \geq e^{\varepsilon} \Big(\lambda^\star_{\varepsilon,\delta} c_1 \mu(A) + \big(1 - \lambda^\star_{\varepsilon,\delta}\big) \mu(A)\Big) + \delta - \Big(\lambda^\star_{\varepsilon,\delta} \big(1 - c_1 + c_1 \mu(A)\big) + \big(1 - \lambda^\star_{\varepsilon,\delta}\big) \mu(A)\Big) \\
& = \Big(e^{\varepsilon} \big(1 - (1 - c_1) \lambda^\star_{\varepsilon,\delta}\big) - \big(1 - (1 - c_1) \lambda^\star_{\varepsilon,\delta}\big)\Big) \mu(A) - \lambda^\star_{\varepsilon,\delta}(1 - c_1) + \delta \\
& = (e^{\varepsilon} - 1)\big(1 - (1 - c_1) \lambda^\star_{\varepsilon,\delta}\big) \mu(A) - \lambda^\star_{\varepsilon,\delta}(1 - c_1) + \delta \\
& \geq (e^{\varepsilon} - 1)\big(1 - (1 - c_1) \lambda^\star_{\varepsilon,\delta}\big) \Big(\frac{1 - c_1}{c_2 - c_1}\Big) - \lambda^\star_{\varepsilon,\delta}(1 - c_1) + \delta \\
& = \frac{(e^{\varepsilon} - 1)(1 - \delta)(1 - c_1)}{(1 - c_1)e^{\varepsilon} + c_2 - 1} - \frac{(1 - c_1)e^{\varepsilon} + (c_2 - c_1)\delta - (1 - c_1)}{(1 - c_1)e^{\varepsilon} + c_2 - 1} + \delta \\
& = \frac{-\delta(1 - c_1)(e^{\varepsilon} - 1) - (c_2 - c_1)\delta}{(1 - c_1)e^{\varepsilon} + c_2 - 1} + \delta \\
& = 0.
\end{align*}

Therefore, in both cases:
\begin{align*}
e^\varepsilon \Bigl(\lambda^\star_{\varepsilon,\delta} P(A) 
   + \bigl(1 - \lambda^\star_{\varepsilon,\delta}\bigr)\mu(A)\Bigr)
\;+\;\delta
\;-\;
\Bigl(\lambda^\star_{\varepsilon,\delta} P'(A) 
   + \bigl(1 - \lambda^\star_{\varepsilon,\delta}\bigr)\mu(A)\Bigr)
\;\;\ge\;0,
\end{align*}

which establishes the \((\varepsilon,\delta)\)-LDP criterion for every measurable set \(A\). Hence the sampler is \((\varepsilon,\delta)\)-LDP.

    \textbf{Step 3: Proof of optimality, lower bound}

    In this part, we show that:
    \[\frac{1 - r_1}{r_2 - r_1} \, f(r_2) + \frac{r_2 - 1}{r_2 - r_1} \, f(r_1) \leq \mathcal{R}\big(\mathcal{Q}_{\mathcal{X}, \tilde{\mathcal{P}}, \varepsilon, \delta}, \tilde{\mathcal{P}}, f\big).\]
     To this end, we need to show that for each $\bQ \in \mathcal{Q}_{\mathcal{X}, \tilde{\mathcal{P}}, \varepsilon, \delta}$, we have
     \[\frac{1 - r_1}{r_2 - r_1} \, f(r_2) + \frac{r_2 - 1}{r_2 - r_1} \, f(r_1) \le \sup_{P \in \tilde{\mathcal{P}}} 
        \;D_f\big(P \, \| \, \bQ(P)\big)\]
       
When $\frac{c_2 - c_1}{1 - c_1} \in \mathbb{N}$, let $\bQ$ be an $(\varepsilon, \delta)$-LDP sampler, i.e, $\bQ \in \mathcal{Q}_{\mathcal{X}, \tilde{\mathcal{P}}, \varepsilon, \delta}$ Define
\[
\alpha := \frac{1 - c_1}{c_2 - c_1}
\quad ,
\quad t := \alpha^{-1} \in \mathbb{N}.
\]
Since $\mu$ is $(\alpha,\frac{1}{\alpha},1)$-decomposable with $\frac{1}{\alpha} \in \mathbb{N}$, we can find disjoint sets $A_1, \ldots, A_t \subseteq \mathcal{X}$ with $\mu(A_i) = \alpha$. Define $P_i$ as a probability measure by $\frac{dP_i}{d\mu} = p_i$ defined as:
\[
p_i(x) = c_2 \mathbf{1}_{A_i}(x) + c_1 \mathbf{1}_{A_i^c}(x),
\]
where \[
\mathbf{1}_{A_i}(x) =
\begin{cases}
1 & \text{if } x \in A_i, \\
0 & \text{otherwise}.
\end{cases}
\]
Let $\bQ_i := \bQ(P_i)$. We aim to show:
\[
\min_i \bQ_i(A_i) \leq \frac{e^\varepsilon + (t-1)\delta}{e^\varepsilon + t - 1}.
\]

By the $(\varepsilon, \delta)$-LDP property, for every $i > 1$, we have
\[
e^\varepsilon \bQ_1(A_i) + \delta \geq \bQ_i(A_i).
\]
Thus,
\begin{equation*}
    \sum_{i=2}^t \Big( e^\varepsilon \bQ_1(A_i) + \delta \Big) \geq \sum_{i=2}^t \bQ_i(A_i).
\end{equation*}

Note that
\begin{align*}
    \sum_{i=2}^t \Big( e^\varepsilon \bQ_1(A_i) + \delta\Big)
& = e^\varepsilon \bQ_1\left(\bigcup_{i=2}^t A_i\right) + (t-1)\delta \\ &
\leq e^\varepsilon \left(1 - \bQ_1(A_1)\right) + (t-1)\delta
\\ & \leq e^\varepsilon \left(1 - \min_i \bQ_i(A_i)\right) + (t-1)\delta,
\end{align*}
and
\begin{equation*}
    \sum_{i=2}^t \bQ_i(A_i) \geq (t-1) \min_i \bQ_i(A_i).
\end{equation*}

Therefore,
\[
\min_i \bQ_i(A_i) \leq \frac{e^\varepsilon + (t-1)\delta}{e^\varepsilon + t-1}.
\]

Rewriting in terms of $\alpha$, we have
\begin{equation}\label{eqn: min Q_i}
\min_i \bQ_i(A_i) \leq \frac{\alpha e^\varepsilon + (1-\alpha)\delta}{\alpha e^\varepsilon + 1-\alpha}.    
\end{equation}

\medskip

Next, we lower bound the worst-case $f$-divergence:
\[
\sup_i D_f(P_i \,\|\, \bQ(P_i))
\geq \sup_i D_f^B\left(c_2 \alpha \,\|\, \bQ_i(A_i)\right),
\]
where $D_f^B(\lambda_1 \,\|\, \lambda_2)$ denotes the $f$-divergence between Bernoulli distributions with $\Pr(1) = \lambda_1$ and $\lambda_2$:
\[
D^B_f(\lambda_1 \parallel \lambda_2) = \lambda_2 f \left( \frac{\lambda_1}{\lambda_2} \right) + (1 - \lambda_2) f \left( \frac{1 - \lambda_1}{1 - \lambda_2} \right).
\]

For each $A_i$, the push-forward measures of $P_i$ and $\bQ_i$ by the indicator function $\mathbf{1}_{A_i}$ are Bernoulli distributions with $\Pr(1) = c_2 \alpha$ and $\bQ_i(A_i)$, respectively. By the data processing inequality (Theorem \ref{thm: dpi}), we have:
\[
D_f ( P_i\| \bQ_i) ) \geq D_f^B \left( c_2 \alpha \| \bQ_i(A_i) \right).
\]

Therefore, we have:
\begin{align*}
    \sup_{P \in \tilde{\mathcal{P}}} 
        \;D_f\big(P \, \| \, \bQ(P)\big) & \geq \sup_{i \in [t]} 
        \; D_f ( P_i \, \| \, \bQ_i) \\ & \geq  \sup_{i \in [t]} 
        \; D_f^B \left( c_2 \alpha \, \| \, \bQ_i(A_i) \right) \\ & \geq 
        D_f^B \left( c_2 \alpha \, \| \, \frac{\alpha e^\varepsilon + (1-\alpha)\delta}{\alpha e^\varepsilon + 1-\alpha} \right) 
\end{align*}

where the inequality in the last line follows from (\ref{eqn: min Q_i}) and the fact that if $\delta \leq \frac{(c_2-c_1e^\eps)(1-c_1)}{c_2-c_1}$, then $ \frac{\alpha e^\varepsilon + (1-\alpha)\delta}{\alpha e^\varepsilon + 1-\alpha}  \leq c_2 \alpha$ and $D_f^{\mathrm{B}}(\lambda_1 \,\|\, \lambda_2)$ is decreasaing in $\lambda_2 \in [0 , \lambda_1]$.

Here is the proof that if $\delta \leq \frac{(c_2-c_1e^\eps)(1-c_1)}{c_2-c_1}$, then $ \frac{\alpha e^\varepsilon + (1-\alpha)\delta}{\alpha e^\varepsilon + 1-\alpha}  \leq c_2 \alpha$.

\begin{eqnarray*}
    \frac{\alpha e^\eps+(1-\alpha)\delta}{\alpha e^\eps+1-\alpha}&=&\frac{(1-c_1) e^\eps+(c_2-1)\delta}{(1-c_1) e^\eps+c_2-1}\\
    &\leq&\frac{(1-c_1) e^\eps+(c_2-1)\left(\frac{(c_2-c_1e^\eps)(1-c_1)}{c_2-c_1}\right)}{(1-c_1) e^\eps+c_2-1}\\
    &=&\frac{1-c_1}{c_2-c_1}\left(\frac{(c_2-c_1)e^\eps+(c_2-1)(c_2-c_1e^\eps)}{(1-c_1)e^\eps+c_2-1}\right)\\
    &=&\frac{c_2(1-c_1)}{c_2-c_1}\\
    &=&c_2\alpha.
\end{eqnarray*}

Therefore, it suffices to compute $D_f^B \left( c_2 \alpha \, \| \, \frac{\alpha e^\varepsilon + (1-\alpha)\delta}{\alpha e^\varepsilon + 1-\alpha} \right)$. It can be shown that
\begin{align*}
  D_f^B\left(c_2 \alpha \,\Big\|\, \frac{\alpha e^\varepsilon + (1-\alpha)\delta}{\alpha e^\varepsilon + 1-\alpha}\right) &
  = \frac{\alpha e^\varepsilon + (1-\alpha)\delta}{\alpha e^\varepsilon + 1-\alpha}
f\left( c_2 \alpha \frac{\alpha e^\varepsilon + 1-\alpha}{\alpha e^\varepsilon + (1-\alpha)\delta} \right) \\
& + \frac{(1-\alpha)(1-\delta)}{\alpha e^\varepsilon + 1-\alpha}
f\left( c_1 (1-\alpha) \frac{\alpha e^\varepsilon + 1-\alpha}{(1-\alpha)(1-\delta)} \right) \\
&= \frac{(1-c_1)e^\varepsilon + (c_2-1)\delta}{(1-c_1)e^\varepsilon + c_2-1}
f\left( \frac{c_2}{c_2-c_1} \frac{(1-c_1)e^\varepsilon + c_2-1}{e^\varepsilon + \frac{c_2-1}{1-c_1}\delta} \right) \\
&+ \frac{(c_2-1)(1-\delta)}{(1-c_1)e^\varepsilon + c_2-1}
f\left( \frac{c_1}{c_2-c_1} \frac{(1-c_1)e^\varepsilon + c_2-1}{1-\delta} \right).
\end{align*}

% Thus,
% \begin{align*}
% \sup_i D_f(P_i \,\|\, \bQ(P_i))
% &\geq D_f^B\left(c_2\alpha \,\Big\|\, \frac{\alpha e^\varepsilon + (1-\alpha)\delta}{\alpha e^\varepsilon + 1 - \alpha}\right) \\
% &= \frac{\alpha e^\varepsilon + (1-\alpha)\delta}{\alpha e^\varepsilon + 1-\alpha} f\left(c_2\alpha \left( \frac{\alpha e^\varepsilon + 1-\alpha}{\alpha e^\varepsilon + (1-\alpha)\delta} \right)\right) \\
% &\quad + \frac{(1-\alpha)(1-\delta)}{\alpha e^\varepsilon + 1-\alpha} f\left(c_1(1-\alpha) \left( \frac{\alpha e^\varepsilon + 1-\alpha}{(1-\alpha)(1-\delta)} \right)\right) \\
% &= \frac{(1-c_1)e^\varepsilon + (c_2-1)\delta}{(1-c_1)e^\varepsilon + c_2-1} f\left( \frac{c_2}{c_2-c_1} \frac{(1-c_1)e^\varepsilon + c_2-1}{e^\varepsilon + \frac{c_2-1}{1-c_1}\delta} \right) \\
% &\quad + \frac{(c_2-1)(1-\delta)}{(1-c_1)e^\varepsilon + c_2-1} f\left( \frac{c_1}{c_2-c_1} \frac{(1-c_1)e^\varepsilon + c_2-1}{1-\delta} \right).
% \end{align*}
% The first inequality 

Defining
\[
    r_1 := \frac{c_1}{c_2 - c_1} \cdot \frac{(1 - c_1)e^{\varepsilon} + c_2 - 1}{1 - \delta},
    \qquad
    r_2 := \frac{c_2}{c_2 - c_1} \cdot \frac{(1 - c_1)e^{\varepsilon} + c_2 - 1}{e^{\varepsilon} + \frac{c_2 - 1}{1 - c_1} \delta},
    \]
 we have   
\begin{align*}
1 - r_1 &= \frac{(1-c_1)\left(c_2 - c_1 e^\varepsilon - \frac{c_2-c_1}{1-c_1}\delta\right)}{(c_2-c_1)(1-\delta)}, \\
r_2 - 1 &= \frac{(c_2-1)\left(c_2 - c_1 e^\varepsilon - \frac{c_2-c_1}{1-c_1}\delta\right)}{(c_2-c_1)\left(e^\varepsilon + \frac{c_2-1}{1-c_1}\delta\right)}.
\end{align*}
Thus
\begin{align*}
\frac{1 - r_1}{r_2 - r_1} &= \frac{(1-c_1)e^\varepsilon + (c_2-1)\delta}{(1-c_1)e^\varepsilon + c_2-1}, \\
\frac{r_2 - 1}{r_2 - r_1} &= \frac{(c_2-1)(1-\delta)}{(1-c_1)e^\varepsilon + c_2-1}.
\end{align*}
Hence, for each $\bQ \in \mathcal{Q}_{\mathcal{X}, \tilde{\mathcal{P}}, \varepsilon, \delta}$, we have
     \[\frac{1 - r_1}{r_2 - r_1} \, f(r_2) + \frac{r_2 - 1}{r_2 - r_1} \, f(r_1) \le \sup_{P \in \tilde{\mathcal{P}}} 
        \;D_f\big(P \, \| \, \bQ(P)\big).\]
        
    \textbf{Step 4: Proof of optimality, upper bound (achievability part)}

    In this part, we show that:
    \[\mathcal{R}\big(\mathcal{Q}_{\mathcal{X}, \tilde{\mathcal{P}}, \varepsilon, \delta}, \tilde{\mathcal{P}}, f\big)
       \le \frac{1 - r_1}{r_2 - r_1} \, f(r_2) + \frac{r_2 - 1}{r_2 - r_1} \, f(r_1).\]
        To this end, we need to show that for each $P \in  \tilde{\mathcal{P}}$, we have:
        \[D_f\big(P \, \| \, \bQ^\star_{\varepsilon, \delta}(P)\big) \le \frac{1 - r_1}{r_2 - r_1} \, f(r_2) + \frac{r_2 - 1}{r_2 - r_1} \, f(r_1).\]

    Fix \( P \in \tilde{\mathcal{P}} \). Let \( p = \frac{dP}{d\mu} \) and \( q = \frac{d\bQ^\star_{\eps,\delta}(P)}{d\mu} \). We first claim that
\[
r_1 \leq \frac{p(x)}{q(x)} \leq r_2
\]
for \(\mu\)-almost every \( x \in \mathcal{X} \). Indeed, we have
\[
\frac{p(x)}{q(x)}
= \frac{p(x)}{\lambda^\star_{\varepsilon,\delta} p(x) + (1 - \lambda^\star_{\varepsilon,\delta})}
,\]
which is an increasing function of \( p(x) \geq 0 \) as long as $\lambda^\star_{\eps,\delta} \leq 1$. Now we show that if $\delta \leq \frac{(c_2-c_1e^\eps)(1-c_1)}{c_2-c_1}$, then $\lambda^\star_{\varepsilon, \delta} 
    = \frac{e^{\varepsilon} + \frac{c_2 - c_1}{1 - c_1} \delta - 1}
           {(1 - c_1) e^{\varepsilon} + c_2 - 1} \leq 1$. 
           \begin{align*}
               \lambda^\star_{\varepsilon, \delta} 
   & = \frac{e^{\varepsilon} + \frac{c_2 - c_1}{1 - c_1} \delta - 1}
           {(1 - c_1) e^{\varepsilon} + c_2 - 1} \\
           &\leq  \frac{e^{\varepsilon} + \frac{c_2 - c_1}{1 - c_1} \Big( \frac{(c_2-c_1e^\eps)(1-c_1)}{c_2-c_1}\Big) - 1}
           {(1 - c_1) e^{\varepsilon} + c_2 - 1}.\\
           & = \frac{e^{\varepsilon} +  c_2-c_1e^\eps - 1}
           {(1 - c_1) e^{\varepsilon} + c_2 - 1} \\
           & = 1.
           \end{align*}
           Since \( p(x) \) is bounded between \( c_1 \) and \( c_2 \), it follows that
\begin{equation}\label{eqn: p/q ratio}
    \frac{c_1}{\lambda^\star_{\varepsilon,\delta} c_1 + (1 - \lambda^\star_{\varepsilon,\delta})}
\leq \frac{p(x)}{q(x)}
\leq \frac{c_2}{\lambda^\star_{\varepsilon,\delta} c_2 + (1 - \lambda^\star_{\varepsilon,\delta})}
\end{equation}
for \(\mu\)-almost every \( x \in \mathcal{X} \).
From the premise of the proposition, we have
\begin{equation*}
    \lambda^\star_{\varepsilon, \delta} 
    = \frac{e^{\varepsilon} + \frac{c_2 - c_1}{1 - c_1} \delta - 1}
           {(1 - c_1) e^{\varepsilon} + c_2 - 1}
\end{equation*}
It can be shown that:
\[
\frac{c_1}{c_2 -c_1}\frac{  (1-c_1) e^{\varepsilon} + c_2 - 1 }{(1-\delta)}
\leq \frac{p(x)}{q(x)}
\leq \frac{c_2}{c_2 - c_1} \cdot \frac{(1 - c_1)e^{\varepsilon} + c_2 - 1}{e^{\varepsilon} + \frac{c_2 - 1}{1 - c_1} \delta}
\]
which concludes
\[
r_1 \leq \frac{p(x)}{q(x)} \leq r_2
\]
for \(\mu\)-almost every \( x \in \mathcal{X} \).
To finalize the proof of the achievability part, we need the following lemma.

\begin{lemma}[{[Theorem 2.1]{\citep{rukhin1997information}}}]\label{lem: bounded f-div}
Let \( P, Q \in \mathcal{P}(\mathcal{X}) \). Suppose that \( P \) and \( Q \) are both absolutely continuous with respect to a reference measure \( \mu \) on \( \mathcal{X} \). Assume that there exist \( r_1, r_2 \in \mathbb{R} \) with \( 0 \leq r_1 < 1 < r_2 \) such that the corresponding densities \( p = \frac{dP}{d\mu} \) and \( q = \frac{dQ}{d\mu} \) satisfy \( q(x) > 0 \) and
\[
r_1 \leq \frac{p(x)}{q(x)} \leq r_2
\quad \text{for } \mu\text{-almost every } x \in \mathcal{X}.
\]
Then, for any \( f \)-divergence \( D_f \), it holds that
\[
D_f(P\|Q) \leq \frac{1 - r_1}{r_2 - r_1} f(r_2) + \frac{r_2 - 1}{r_2 - r_1} f(r_1).
\]
\end{lemma}

Lemma \ref{lem: bounded f-div}, directly shows that for each $P \in  \tilde{\mathcal{P}}$, we have:
        \[D_f\big(P \, \| \, \bQ^\star_{\varepsilon, \delta}(P)\big) \le \frac{1 - r_1}{r_2 - r_1} \, f(r_2) + \frac{r_2 - 1}{r_2 - r_1} \, f(r_1)\]
for
\[
    r_1 = \frac{c_1}{c_2 - c_1} \cdot \frac{(1 - c_1)e^{\varepsilon} + c_2 - 1}{1 - \delta},
    \qquad
    r_2 = \frac{c_2}{c_2 - c_1} \cdot \frac{(1 - c_1)e^{\varepsilon} + c_2 - 1}{e^{\varepsilon} + \frac{c_2 - 1}{1 - c_1} \delta}.
    \]
\end{proof}

\begin{corollary}[$g_{\eps,\delta}$-FLDP as a special case]\label{cor: special case of f-LDP approximate}

Let $\tilde{\mathcal{P}} = \tilde{\mathcal{P}}_{c_1, c_2, \mu}$. Under Assumption \ref{assumption: norm}, suppose the reference measure $\mu$ is $(\alpha,\frac{1}{\alpha},1)$-decomposable with $\alpha = \frac{1 - c_1}{c_2 - c_1}$. Then the sampler
\begin{equation}
    \bQ^\star_{c_1,c_2,\mu,g_{\varepsilon,\delta}}(P) \coloneqq \lambda^\star_{c_1,c_2,g_{\varepsilon,\delta}} P + (1 - \lambda^\star_{c_1,c_2,g_{\varepsilon,\delta}})\mu
\end{equation}
belongs to the class $\mathcal{Q}_{\mathcal{X}, \tilde{\mathcal{P}}, g_{\varepsilon,\delta}}$ and is minimax-optimal with respect to any $f$-divergence $D_f$, that is,
\begin{equation}
    \sup_{P \in \tilde{\mathcal{P}}} D_f\big(P \, \| \, \bQ^\star_{c_1,c_2,\mu,g_{\varepsilon,\delta}}(P)\big)
    = \mathcal{R}\big(\mathcal{Q}_{\mathcal{X}, \tilde{\mathcal{P}}, g_{\varepsilon,\delta}}, \tilde{\mathcal{P}}, f\big),
\end{equation}
where 
\[
\lambda^\star_{c_1,c_2,g_{\varepsilon,\delta}} = \frac{e^{\varepsilon} + \frac{c_2 - c_1}{1 - c_1} \, \delta - 1}{(1 - c_1) e^{\varepsilon} + c_2 - 1}.
\]

\end{corollary}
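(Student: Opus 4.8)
The backbone is the identity $\mathcal{Q}_{\mathcal{X},\tilde{\mathcal{P}},g_{\varepsilon,\delta}}=\mathcal{Q}_{\mathcal{X},\tilde{\mathcal{P}},\varepsilon,\delta}$: by \citep[Proposition~3]{dong2022gaussian}, a sampler satisfies $g_{\varepsilon,\delta}$-FLDP if and only if it satisfies $(\varepsilon,\delta)$-LDP, so the minimax risk in \eqref{eqn: global appx} equals its $g_{\varepsilon,\delta}$-FLDP analogue. Proposition~\ref{prop: global appx linear} already does the heavy lifting on the $(\varepsilon,\delta)$-LDP side: under exactly the hypotheses of the present corollary (Assumption~\ref{assumption: norm} and $(\alpha,\tfrac1\alpha,1)$-decomposability of $\mu$ with $\alpha=\tfrac{1-c_1}{c_2-c_1}$), it exhibits the linear sampler $P\mapsto\lambda^\star_{\varepsilon,\delta}P+(1-\lambda^\star_{\varepsilon,\delta})\mu$, with $\lambda^\star_{\varepsilon,\delta}=\tfrac{e^{\varepsilon}+\frac{c_2-c_1}{1-c_1}\delta-1}{(1-c_1)e^{\varepsilon}+c_2-1}$, as a member of $\mathcal{Q}_{\mathcal{X},\tilde{\mathcal{P}},\varepsilon,\delta}$ that is minimax-optimal for every $f$-divergence. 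So it suffices to check that this sampler is the one named $\bQ^\star_{c_1,c_2,\mu,g_{\varepsilon,\delta}}$ in the corollary, i.e. that $\lambda^\star_{c_1,c_2,g_{\varepsilon,\delta}}$ — which Theorem~\ref{thm: continuous global functional} defines as $\inf_{\beta\ge0}\tfrac{e^{\beta}+\frac{c_2-c_1}{1-c_1}(1+g^*(-e^{\beta}))-1}{(1-c_1)e^{\beta}+c_2-1}$ for $g=g_{\varepsilon,\delta}$ — equals $\lambda^\star_{\varepsilon,\delta}$; transporting the conclusion of Proposition~\ref{prop: global appx linear} back through the class identity then proves the corollary. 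Taking $\delta=0$ recovers Corollary~\ref{cor: special case of f-LDP}.

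The substance is thus the evaluation of that infimum, which starts from the convex conjugate of $g_{\varepsilon,\delta}$. The function $g_{\varepsilon,\delta}$ is piecewise affine: equal to $1-\delta-e^{\varepsilon}\theta$ on $[0,\theta^\star]$, to $e^{-\varepsilon}(1-\delta-\theta)$ on $[\theta^\star,1-\delta]$, and to $0$ on $[1-\delta,1]$, where $\theta^\star=\tfrac{1-\delta}{e^{\varepsilon}+1}$. Maximizing $\theta\mapsto-e^{\beta}\theta-g_{\varepsilon,\delta}(\theta)$ over $[0,1]$ — the maximum is attained at $\theta=0$ or at $\theta=\theta^\star$, the corners $\theta=1-\delta$ and $\theta=1$ being dominated for $\beta\ge0$ — gives $1+g_{\varepsilon,\delta}^*(-e^{\beta})=\delta$ for $\beta\ge\varepsilon$ and $1+g_{\varepsilon,\delta}^*(-e^{\beta})=\tfrac{e^{\varepsilon}-e^{\beta}(1-\delta)+\delta}{e^{\varepsilon}+1}$ for $0\le\beta\le\varepsilon$. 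Substituting into the objective $\Lambda(\beta)\coloneqq\tfrac{e^{\beta}+\frac{c_2-c_1}{1-c_1}(1+g_{\varepsilon,\delta}^*(-e^{\beta}))-1}{(1-c_1)e^{\beta}+c_2-1}$, a one-line derivative check (using $\delta\le1$) shows $\Lambda$ is non-decreasing on $[\varepsilon,\infty)$, where it coincides with $\tfrac{e^{\beta}+\frac{c_2-c_1}{1-c_1}\delta-1}{(1-c_1)e^{\beta}+c_2-1}$; on $[0,\varepsilon]$, $\Lambda$ is a M\"obius function of $u=e^{\beta}$ whose monotonicity reduces to the sign of $2-c_1-c_2$.

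This is the single point where Assumption~\ref{assumption: norm} is essential: $\tfrac{c_2-c_1}{1-c_1}\in\mathbb N$ together with $0\le c_1<1<c_2$ forces this integer to be at least $2$ (it cannot equal $1$, as that would force $c_2=1$), hence $c_2-c_1\ge2(1-c_1)$, i.e. $c_1+c_2\ge2$; thus $2-c_1-c_2\le0$ and $\Lambda$ is non-increasing on $[0,\varepsilon]$. Combining the two halves, the infimum is attained at $\beta=\varepsilon$, so $\lambda^\star_{c_1,c_2,g_{\varepsilon,\delta}}=\Lambda(\varepsilon)=\tfrac{e^{\varepsilon}+\frac{c_2-c_1}{1-c_1}\delta-1}{(1-c_1)e^{\varepsilon}+c_2-1}=\lambda^\star_{\varepsilon,\delta}$, as needed. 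As a byproduct, the standing non-triviality condition \eqref{eqn: non-trivial condition condition functional} — which the text notes is equivalent to $\lambda^\star_{c_1,c_2,g}\le1$ — becomes $\lambda^\star_{\varepsilon,\delta}\le1$, i.e. $\delta\le\tfrac{(c_2-c_1e^{\varepsilon})(1-c_1)}{c_2-c_1}$, which is exactly the non-triviality hypothesis \eqref{eqn: non-trivial condition condition approximate} of Proposition~\ref{prop: global appx linear}, so its hypotheses are indeed met. I expect the monotonicity bookkeeping for $\Lambda$ on $[0,\varepsilon]$ — and in particular noticing that Assumption~\ref{assumption: norm} is precisely what excludes the degenerate regime where the infimum would move off $\beta=\varepsilon$ — to be the main obstacle; the rest is routine.
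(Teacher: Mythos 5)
Your proposal is correct and follows essentially the same route as the paper's proof: compute the convex conjugate $g_{\varepsilon,\delta}^*$, evaluate the infimum defining $\lambda^\star_{c_1,c_2,g_{\varepsilon,\delta}}$ by splitting at $\beta=\varepsilon$ and using that Assumption~\ref{assumption: norm} forces $c_1+c_2\ge 2$, and then transfer optimality from the $(\varepsilon,\delta)$-LDP result (the paper routes this through Theorem~\ref{thm: global functional}, which is itself built on Proposition~\ref{prop: global appx linear}, so the two framings coincide). Your monotonicity analysis of $\Lambda$ on $[0,\varepsilon]$ is a slightly more careful version of the paper's endpoint comparison, but it is the same computation in substance.
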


% \subsection{Proof of Corollary \ref{cor: special case of f-LDP approximate}}
% \HG{Add the proof of the 
% parts corresponding to the optimal value}
\begin{proof}
% \HG{to be checked}
From Theorem \ref{thm: global functional}, we know that the sampler \[\bQ^\star_{c_1,c_2,\mu,g_{\varepsilon,\delta}}(P) \coloneqq \lambda^\star_{c_1,c_2,g_{\varepsilon,\delta}} P + (1 - \lambda^\star_{c_1,c_2,g_{\varepsilon,\delta}})\mu\]  for $\lambda^\star_{c_1,c_2,g_{\varepsilon,\delta}}$ defined as \[
    \lambda^\star_{c_1,c_2,g_{\varepsilon,\delta}} 
    \coloneq \inf_{\beta \geq 0} 
    \frac{e^\beta + \frac{c_2 - c_1}{1 - c_1} \big( 1 + g_{\varepsilon,\delta}^*(-e^\beta) \big) - 1}
         {(1 - c_1)e^\beta + c_2 - 1}.
    \] 
    belongs to the class $\mathcal{Q}_{\mathcal{X}, \tilde{\mathcal{P}}, g_{\varepsilon,\delta}}$ and is minimax-optimal with respect to any $f$-divergence $D_f$. Therefore, it suffices to compute $g_{\varepsilon,\delta}^*(-e^\varepsilon)$ and then solve the minimization problem.

A direct comparison of the three affine pieces shows

\begin{equation}
    g_{\varepsilon,\delta}(\theta)=
\begin{cases}
1-\delta-e^{\varepsilon}\theta, & 0\le\theta\le \frac{1-\delta}{e^{\varepsilon}+1},\\[4pt]
e^{-\varepsilon}(1-\delta-\theta), & \frac{1-\delta}{e^{\varepsilon}+1}\le\theta\le 1-\delta,\\[4pt]
0, & 1-\delta\le\theta\le 1,\\[4pt]
+\infty, & \text{otherwise.}
\end{cases}\label{eq: piecewise-f}
\end{equation}

For any \(y\in\mathbb{R}\),
\(
g_{\varepsilon,\delta}^{*}(y)
   =\sup\limits_{0\le\theta\le1}(\theta y-g_{\varepsilon,\delta}(\theta))
\)
splits naturally into the supremum over the three intervals of~\eqref{eq: piecewise-f}.
Denote the corresponding optimized values by
\[
\begin{aligned}
S_1(y)&:=\sup_{0\le\theta\le  \frac{1-\delta}{e^{\varepsilon}+1}}\bigl\{\theta y-(1-\delta-e^{\varepsilon}\theta)\bigr\},\\
S_2(y)&:=\sup_{ \frac{1-\delta}{e^{\varepsilon}+1}\le\theta\le 1 - \delta}\bigl\{\theta y-e^{-\varepsilon}(1-\delta-\theta)\bigr\},\\
S_3(y)&:=\sup_{1 - \delta\le\theta\le 1}\bigl\{\theta y\bigr\}.
\end{aligned}
\]

\vspace{4pt}
\noindent
We now optimize over each sub-interval.

\emph{(i) Interval \(0\le\theta\le  \frac{1-\delta}{e^{\varepsilon}+1}\).}  
Writing the objective as \((y+e^{\varepsilon})\theta-(1-\delta)\), it is linear in \(\theta\).
Hence  
\[
S_1(y)=
\begin{cases}
-(1-\delta), & y\le -e^{\varepsilon},\\[4pt]
\bigl(y+e^{\varepsilon}\bigr) \frac{1-\delta}{e^{\varepsilon}+1}-(1-\delta)
      =\dfrac{1-\delta}{e^{\varepsilon}+1}(y-1),
      & y\ge -e^{\varepsilon}.
\end{cases}
\]

\emph{(ii) Interval \(\frac{1-\delta}{e^{\varepsilon}+1}\le\theta\le 1-\delta\).}  
Here the objective equals \((y+e^{-\varepsilon})\theta-e^{-\varepsilon}(1-\delta)\).
Thus  
\[
S_2(y)=
\begin{cases}
\dfrac{1-\delta}{e^{\varepsilon}+1}(y-1), & y\le -e^{-\varepsilon},\\[6pt]
(1-\delta)y, & y\ge -e^{-\varepsilon}.
\end{cases}
\]

\emph{(iii) Interval \(1 - \delta\le\theta\le 1\).}  
The objective is simply \(y\theta\), so  
\[
S_3(y)=
\begin{cases}
(1-\delta)\,y, & y< 0,\\[4pt]
y, & y\ge 0.
\end{cases}
\]

We now take the overall supremum.

Comparing \(S_1,S_2,S_3\) on the four regimes  
\(\bigl(-\infty,-e^{\varepsilon}\bigr)\),
\(\bigl[-e^{\varepsilon},-e^{-\varepsilon}\bigr]\),
\(\bigl[-e^{-\varepsilon},0\bigr]\),
\([0,\infty)\)
gives
\[
g_{\varepsilon,\delta}^{*}(y)
   =\max\{S_1(y),S_2(y),S_3(y)\}
   =
\begin{cases}
-(1-\delta),
   & y< -e^{\varepsilon},\\[6pt]
\dfrac{1-\delta}{e^{\varepsilon}+1}(y-1),
   & -e^{\varepsilon}\le y\le -e^{-\varepsilon},\\[10pt]
(1-\delta)\,y,
   & -e^{-\varepsilon}\le y\le 0,\\[6pt]
y,
   & y\ge 0.
\end{cases}
\]

%     \[g_{\varepsilon,\delta}^*(\theta)=
% \begin{cases}
% -(1-\delta), 
%     & \text{if }\theta \le -e^{\varepsilon}, \\[6pt]
% \displaystyle
% (1-\delta)\,
% \frac{(\theta-1)}{e^{\varepsilon}+1},
%     & \text{if } -e^{\varepsilon}\le \theta \le -e^{-\varepsilon},\\[6pt]
% (1-\delta)\,\theta,
%     & \text{if } -e^{-\varepsilon}\le \theta \le 0,\\[6pt]
% \theta,
%     & \text{if } \theta \ge 0.
% \end{cases}\]

\textbf{Step 2: Solve the infimum to obtain $\lambda^\star_{c_1,c_2,g_{\varepsilon,\delta}}$}

 From   \[
    \lambda^\star_{c_1,c_2,g_{\varepsilon,\delta}} 
    = \inf_{\beta \geq 0} 
    \frac{e^\beta + \frac{c_2 - c_1}{1 - c_1} \big( 1 + g_{\eps,\delta}^*(-e^\beta) \big) - 1}
         {(1 - c_1)e^\beta + c_2 - 1}
    \] 
    Hence, we only need to know the value of $g_{\eps,\delta}^*(-e^\beta )$. We know $-e^{\beta} \le -1$ for $\beta \geq 0$. Therefore, $-e^{\beta} \leq - e^{-\eps}$ for given $\eps \geq 0$. i.e.,
    \[g_{\varepsilon,\delta}^{*}(-e^\beta) = \begin{cases}
        -(1-\delta), &  -e^\beta \le -e^\eps,\\[6pt]
        \dfrac{1-\delta}{e^{\varepsilon}+1}(-e^\beta-1), & -e^\eps \le -e^\beta \leq -1.
        
    \end{cases}\]

It follows that:
\begin{align*}\lambda^\star_{c_1,c_2,g_{\varepsilon,\delta}} & =  
    % \min\Bigg\{ \inf_{\beta \geq \eps} 
    % \frac{e^\beta + \frac{c_2 - c_1}{1 - c_1} \big( 1 - (1 - \delta) \big) - 1}
    %      {(1 - c_1)e^\beta + c_2 - 1}  , \inf_{\beta \in [0,\eps]} 
    % \frac{e^\beta + \frac{c_2 - c_1}{1 - c_1} \Big( 1 - \frac{1-\delta}{e^{\varepsilon}+1}(e^\beta+1) \Big) - 1}
    %      {(1 - c_1)e^\beta + c_2 - 1}\Bigg\} \\ & = 
         \min\Bigg\{ \inf_{\beta \geq \eps} 
    \frac{e^\beta + \frac{c_2 - c_1}{1 - c_1}  \delta  - 1}
         {(1 - c_1)e^\beta + c_2 - 1} \, , \, \inf_{\beta \in [0,\eps]} 
    \frac{e^\beta + \frac{c_2 - c_1}{1 - c_1} \bigg( 1 - \frac{e^\beta+1}{e^{\varepsilon}+1}(1 - \delta) \bigg) - 1}
         {(1 - c_1)e^\beta + c_2 - 1}\Bigg\} \\
         & = \min\bigg\{\frac{e^\eps + \frac{c_2 - c_1}{1 - c_1}\delta - 1}{(1 - c_1)e^\eps + c_2 - 1} \, , \, \frac{1 - (1-\delta)\frac{2}{e^\eps + 1}}{1 - c_1}\bigg\}.
\end{align*}
Since $\frac{c_2 - c_1}{1 - c_1} \in \mathbb{N}$ and $c_2 > 1$, it follows that $\frac{c_2 - c_1}{1 - c_1} \geq 2$, which in turn implies  $c_1 + c_2 \geq 2$. Now we want to show that if $c_1 + c_2 \geq 2$, then:
\[\frac{e^\eps + \frac{c_2 - c_1}{1 - c_1}\delta - 1}{(1 - c_1)e^\eps + c_2 - 1} \leq \frac{1 - (1-\delta)\frac{2}{e^\eps + 1}}{1 - c_1}.\]

Because $c_1 < 1 < c_2$ and $e^\varepsilon \ge 1$, the denominator
\(
(1-c_1)e^\varepsilon + c_2 - 1
\)
is strictly positive, so we can multiply both sides of the
inequality by it without changing the sign.

Set
\[
A \coloneqq (1 - c_1)e^\varepsilon + c_2 - 1 \;>\; 0.
\]
The claim is equivalent to
\[
(1 - c_1)(e^\varepsilon - 1) + (c_2 - c_1)\,\delta
\;\le\;
\Bigl(1 - (1-\delta)\,\tfrac{2}{e^\varepsilon + 1}\Bigr)A .
\] 
Bring the left–hand side to the right and factor out $(1-\delta)$:  
\begin{align*}
0
&\;\le\;
A - (1-\delta)\,\frac{2A}{e^\varepsilon + 1}
      -\bigl[(1 - c_1)(e^\varepsilon - 1) + (c_2 - c_1)\delta\bigr]\\[2pt]
&\;=\;
(c_2 - c_1)(1-\delta) - (1-\delta)\,\frac{2A}{e^\varepsilon + 1}\\[2pt]
&\;=\;
(1-\delta)\Bigl[(c_2 - c_1) - \frac{2A}{e^\varepsilon + 1}\Bigr].
\end{align*}

Since $1-\delta \ge 0$, we only need to prove  
\[
(c_2 - c_1)(e^\varepsilon + 1) \;\ge\; 2A .
\]

Substituting $A$ gives
\[
(c_2 - c_1)e^\varepsilon + (c_2 - c_1)
\;\ge\;
2(1 - c_1)e^\varepsilon + 2(c_2 - 1).
\]
Rearranging, we have:
\[
\bigl(c_2 + c_1 - 2\bigr)(e^\varepsilon - 1)\;\ge\;0.
\]

Because $e^\varepsilon - 1 \ge 0$ for every $\varepsilon \ge 0$,
the last inequality holds precisely when $c_1 + c_2 - 2 \ge 0$,
i.e.\ when $c_1 + c_2 \ge 2$.  This is exactly the hypothesis,
so the desired inequality is proved and we have:
\[\lambda^\star_{c_1,c_2,g_{\varepsilon,\delta}} = \frac{e^{\varepsilon} + \frac{c_2 - c_1}{1 - c_1} \, \delta - 1}{(1 - c_1) e^{\varepsilon} + c_2 - 1}.\]

\textbf{Step 3: Optimality proof}

Following Theorem \ref{thm: global functional}, for the obtained value of $\lambda_{c_1,c_2,g_{\eps,\delta}}$, 
the sampler \begin{equation*}
    \bQ^\star_{c_1,c_2,\mu,g_{\varepsilon,\delta}}(P) \coloneqq \lambda^\star_{c_1,c_2,g_{\varepsilon,\delta}} P + (1 - \lambda^\star_{c_1,c_2,g_{\varepsilon,\delta}})\mu
\end{equation*}
belongs to the class $\mathcal{Q}_{\mathcal{X}, \tilde{\mathcal{P}}, g_{\varepsilon,\delta}}$ and is minimax-optimal with respect to any $f$-divergence $D_f$, that is,
\begin{equation*}
    \sup_{P \in \tilde{\mathcal{P}}} D_f\big(P \, \| \, \bQ^\star_{c_1,c_2,\mu,g_{\varepsilon,\delta}}(P)\big)
    = \mathcal{R}\big(\mathcal{Q}_{\mathcal{X}, \tilde{\mathcal{P}}, g_{\varepsilon,\delta}}, \tilde{\mathcal{P}}, f\big).
\end{equation*}
\end{proof}

\section{Proofs of the main results
}\label{sec: proofs main}

\subsection{Proof of Proposition \ref{prop: triviality functional}}\label{appendix: proof of triviality functional}
\begin{proof}
    Based on Proposition~6 in~\citet{dong2022gaussian}, for a  trade-off function $g$, a sampler is \emph{$g$-FLDP} if and only if it is $\bigl(\varepsilon,\,1 + g^*(-\,e^\varepsilon)\bigr)$-LDP for all $\varepsilon \ge 0$. In the proof of Proposition \ref{prop: global appx linear}, we showed that if $\delta > \frac{(c_2-c_1e^\eps)(1-c_1)}{c_2-c_1}$, then the identity sampler $\bQ^I(P) = P$ satisfies $(\eps,\delta)$-LDP and has the trivial minimax risk zero.

Suppose the non-triviality condition (\ref{eqn: non-trivial condition condition functional}) does not hold, i.e., for any $\eps \geq 0$, we have:
\[1 + g^*(-e^\eps) > \frac{(c_2-c_1e^\eps)(1-c_1)}{c_2-c_1}.\]

In this case, the trivial sampler $\bQ^I(P) = P$ satisfies $(\eps,1 + g^*(-e^\eps))$-LDP for any $\eps \geq 0$ and therefore satisfies $g$-FLDP and has the trivial minimax risk zero. 
\end{proof}

\subsection{Proof of Theorem \ref{thm: global functional}}\label{proof: theorem global functional}
% \HG{Explain the technical reason for $ratio \in \N$}

\begin{proof}
In this proof, we assume that the non-triviality condition (\ref{eqn: non-trivial condition condition functional}) is satisfied.

The proof of this theorem is directly based on Proposition~\ref{prop: global appx linear}. Accordingly, the global minimax-optimal sampler in Theorem~\ref{thm: global functional} builds upon the minimax-optimal sampler proposed in Proposition~\ref{prop: global appx linear}. The lower bound in the optimality proof of Proposition~\ref{prop: global appx linear} relies on the existence of disjoint sets \( A_1, \ldots, A_t \), each with measure \( \mu(A_i) = \alpha \), where \( t = \frac{1}{\alpha} \). In fact, the converse part of the optimality proof hinges on the \( (\alpha, \frac{1}{\alpha}, 1) \)-decomposability of \( \mu \), where \( \frac{1}{\alpha} = \frac{c_2 - c_1}{1 - c_1} \in \mathbb{N} \). This technical requirement motivates the assumption \( \frac{1}{\alpha} = \frac{c_2 - c_1}{1 - c_1} \in \mathbb{N} \) in Assumption~\ref{assumption: norm}.

\textbf{Step 1: To propose a $g$-FLDP sampler $\bQ^\star_{c_1,c_2,\mu,g}$} 

    From Proposition~\ref{prop: global appx linear}, we know that for each pair $(\varepsilon, \delta)$, the sampler
\[
\bQ^\star_{\varepsilon, \delta}(P) 
\;=\; 
\lambda^\star_{\varepsilon, \delta}\,P \;+\; \bigl(1 \;-\; \lambda^\star_{\varepsilon, \delta}\bigr)\mu,
\]
with 
\[
\lambda^\star_{\varepsilon, \delta} 
\;=\;
\frac{e^{\varepsilon} \;+\; \tfrac{c_2 - c_1}{1 - c_1}\,\delta \;-\; 1}
     {\,(1 - c_1)\,e^{\varepsilon} \;+\; c_2 \;-\; 1},
\]
is minimax-optimal in the class $\mathcal{Q}_{\mathcal{X}, \tilde{\mathcal{P}}, \varepsilon, \delta}$.

Next, based on Proposition~6 in~\citet{dong2022gaussian}, for a trade-off function $g$, a sampler is \emph{$g$-FLDP} if and only if it is $\bigl(\varepsilon,\,1 + g^*(-\,e^\varepsilon)\bigr)$-LDP for all $\varepsilon \ge 0$. Consequently, if we set $\delta(\varepsilon) = 1 + g^*\!\bigl(-\,e^\varepsilon\bigr)$ and define 
\[
\bQ^\star_{\varepsilon,\,\delta(\varepsilon)}(P) 
\;\coloneqq\; 
\lambda^\star_{\varepsilon,\,\delta(\varepsilon)}\,P \;+\; \Bigl(1 - \lambda^\star_{\varepsilon,\,\delta(\varepsilon)}\Bigr)\mu,
\]
then by Proposition~\ref{prop: global appx linear}, we have $\bQ^\star_{\varepsilon,\,\delta(\varepsilon)} \in \mathcal{Q}_{\mathcal{X}, \tilde{\mathcal{P}}, \varepsilon, \,\delta(\varepsilon)}$. 
% Together with~\citet{dong2022gaussian}, this implies $\bQ^\star_{\varepsilon,\,\delta(\varepsilon)}$ is $g$-FLDP if and only if it is $\bigl(\varepsilon, \,\delta(\varepsilon)\bigr)$-LDP for all $\varepsilon \ge 0$.

We then use Lemma~\ref{lem:decrease-lambda}, which states that if $\bQ_{\lambda_1}(P) = \lambda_1\,P + (1 - \lambda_1)\mu$ is $\bigl(\varepsilon,\delta\bigr)$-LDP for $0 \leq \lambda_1 \leq 1$, then for \emph{any} $\lambda_2 \in [0,\,\lambda_1]$, the sampler $\bQ_{\lambda_2}(P) = \lambda_2\,P + (1 - \lambda_2)\mu$ is also $\bigl(\varepsilon,\delta\bigr)$-LDP.  

Therefore, if we define 
\[
\lambda^\star_{{c_1,c_2,g}} 
\;=\;
\inf_{\beta \,\ge\, 0}\;
\frac{\;e^{\beta } \;+\; \tfrac{c_2 - c_1}{\,1 - c_1\,}\,\Bigl[\,1 + g^*(-\,e^\beta )\Bigr] \;-\; 1}
     {\,(1 - c_1)\,e^\beta  + \bigl(c_2 - 1\bigr)},
\]
this guarantees that the sampler
\[
\bQ^\star_{{c_1,c_2,\mu,g}}(P) 
\;\coloneqq\; 
\lambda^\star_{c_1,c_2,g}\,P \;+\; \bigl(1 - \lambda^\star_{c_1,c_2,g}\bigr)\,\mu
\]
is $\bigl(\varepsilon,\,1 + g^*(-e^\varepsilon)\bigr)$-LDP for \emph{all} $\varepsilon \ge 0$. Note that the non-triviality constraint (\ref{eqn: non-trivial condition condition functional}) guarantees that there exists an $\eps \geq 0$ for which $1 + g^*(-e^\eps) \leq \frac{(c_2-c_1e^\eps)(1-c_1)}{c_2-c_1}$ and therefore $\lambda^\star_{c_1,c_2,g} \leq 1$. By Proposition~6 of~\citet{dong2022gaussian}, being $\bigl(\varepsilon,\,1 + g^*(-e^\varepsilon)\bigr)$-LDP for all $\varepsilon \ge 0$ is exactly the definition of being $g$-FLDP. 
Hence, $\bQ^\star_{c_1,c_2,\mu,g}$ is indeed a $g$-FLDP sampler and therefore belongs to $\mathcal{Q}_{\mathcal{X},\,\tilde{\mathcal{P}},\,g}$.  This completes the proof of the first step.

           \textbf{Step 2: To prove the optimality of $\bQ^\star_{c_1,c_2,\mu,g}$ } 

           Proposition~6 of~\citet{dong2022gaussian} tells us that if we set
\[
\delta(\varepsilon) \;=\; 1 \;+\; g^*\!\bigl(-\,e^\varepsilon\bigr),
\]
then for every $\varepsilon \geq 0$ we have
\begin{equation}\label{eqn: functional lower bounded by appx}
    \inf_{\bQ \,\in\, \mathcal{Q}_{\mathcal{X}, \tilde{\mathcal{P}}, g}} 
    \;\sup_{P \,\in\, \tilde{\mathcal{P}}}
    \,D_f\bigl(P \,\|\, \bQ(P)\bigr) 
    \;\;\ge\;\; 
    \inf_{\bQ \,\in\, \mathcal{Q}_{\mathcal{X}, \tilde{\mathcal{P}}, \varepsilon, \delta(\varepsilon)}} 
    \;\sup_{P \,\in\, \tilde{\mathcal{P}}}
    \,D_f\bigl(P \,\|\, \bQ(P)\bigr).
\end{equation}
On the other hand, define the sampler 
\[
\bQ^\star_{c_1,c_2,\mu,g}(P) 
\;\coloneqq\; 
\lambda^\star_{c_1,c_2,g}\,P \;+\; \bigl(1 - \lambda^\star_{c_1,c_2,g}\bigr)\mu,
\]
where
\[
\lambda^\star_{c_1,c_2,g} 
\;=\;
\inf_{\eps  \,\ge\, 0}
\;\;
\frac{\,e^\eps  \;+\; \tfrac{c_2 - c_1}{1 - c_1}\,\Bigl(1 + g^*\!\bigl(-\,e^\eps \bigr)\Bigr) \;-\; 1}
     {\,(1 - c_1)\,e^\eps  + \bigl(c_2 - 1\bigr)}.
\]
Let $\varepsilon^\star_g$ be the value of $\varepsilon$ that achieves this infimum. The infimum is attained at a finite value of $\varepsilon$ because, if we define
\[
h_g(\varepsilon) \coloneqq \frac{\,e^\varepsilon \;+\; \tfrac{c_2 - c_1}{1 - c_1}\,\Bigl(1 + g^*\!\bigl(-\,e^\varepsilon\bigr)\Bigr) \;-\; 1}
{(1 - c_1)\,e^\varepsilon + (c_2 - 1)},
\]
then we have
\[
h_g(0) \leq \lim_{\varepsilon \to \infty} h_g(\varepsilon).
\]
Together with the continuity of $h_g(\varepsilon)$, this implies that the infimum is achieved at some finite $\varepsilon_g^\star$.

To establish the continuity of $h_g(\varepsilon)$, we note that both the numerator and denominator are continuous functions of $\varepsilon$, and the denominator is never zero. To verify that the numerator is continuous, it suffices to show that $g^*$ is continuous. This follows directly from the definition of the convex conjugate and the fact that the original function $g$ is continuous on the compact interval $[0,1]$ and takes values in $[0,1]$ (see Proposition \ref{prop: trade-off}). Hence,
\[
\lambda^\star_{c_1,c_2,g}
\;=\;
\frac{\,e^{\varepsilon^\star_g} \;+\; \tfrac{c_2 - c_1}{1 - c_1}\,\Bigl(1 + g^*\!\bigl(-\,e^{\varepsilon^\star_g}\bigr)\Bigr) \;-\; 1}
     {\,(1 - c_1)\,e^{\varepsilon^\star_g} + \bigl(c_2 - 1\bigr)}.
\]
By construction, $\bQ^\star_{c_1,c_2,\mu,g}$ belongs to the set 
\(\mathcal{Q}_{\mathcal{X}, \tilde{\mathcal{P}}, \varepsilon^\star_g, \delta(\varepsilon^\star_g)}\).  
From Proposition~\ref{prop: global appx linear}, we know $\bQ^\star_{c_1,c_2,\mu,g}$ is minimax-optimal in 
\(\mathcal{Q}_{\mathcal{X}, \tilde{\mathcal{P}}, \varepsilon^\star_g, \delta(\varepsilon^\star_g)}\).  
Hence,
\begin{equation}\label{eqn: functional coincides appx optimal}
    \sup_{P \,\in\, \tilde{\mathcal{P}}}
    \,D_f\Bigl(P \,\bigl\|\, \bQ^\star_{c_1,c_2,\mu,g}(P)\Bigr)
    \;=\;
    \inf_{\bQ \,\in\, \mathcal{Q}_{\mathcal{X}, \tilde{\mathcal{P}}, \varepsilon^\star_g, \delta(\varepsilon^\star_g)}}
    \;\sup_{P \,\in\, \tilde{\mathcal{P}}}
    \,D_f\bigl(P \,\|\, \bQ(P)\bigr).
\end{equation}
Finally, combining \eqref{eqn: functional lower bounded by appx}, \eqref{eqn: functional coincides appx optimal}, and the trivial inequality 
\[
\inf_{\bQ \,\in\, \mathcal{Q}_{\mathcal{X}, \tilde{\mathcal{P}}, g}}
\;\sup_{P \,\in\, \tilde{\mathcal{P}}}
\,D_f\bigl(P \,\|\, \bQ(P)\bigr)
\;\;\le\;\;
\sup_{P \,\in\, \tilde{\mathcal{P}}}
\,D_f\Bigl(P \,\bigl\|\, \bQ^\star_{c_1,c_2,\mu,g}(P)\Bigr),
\]
we conclude:
\begin{align*}
    \sup_{P \,\in\, \tilde{\mathcal{P}}}
\,D_f\Bigl(P \,\bigl\|\, \bQ^\star_{c_1,c_2,\mu,g}(P)\Bigr) &\;=\;
    \inf_{\bQ \,\in\, \mathcal{Q}_{\mathcal{X}, \tilde{\mathcal{P}}, \varepsilon^\star_g, \delta(\varepsilon^\star_g)}}
    \;\sup_{P \,\in\, \tilde{\mathcal{P}}}
    \,D_f\bigl(P \,\|\, \bQ(P)\bigr) 
\\ & \;=\;
    \mathcal{R}\Bigl(\mathcal{Q}_{\mathcal{X}, \tilde{\mathcal{P}}, \varepsilon^\star_g, \delta(\varepsilon^\star_g)}, 
                    \tilde{\mathcal{P}}, f\Bigr)
                      \\
    & \;=\; \inf_{\bQ \,\in\, \mathcal{Q}_{\mathcal{X}, \tilde{\mathcal{P}}, g}}
    \;\sup_{P \,\in\, \tilde{\mathcal{P}}}
    \,D_f\bigl(P \,\|\, \bQ(P)\bigr) \\
    & \;=\; \mathcal{R}\big(\mathcal{Q}_{\mathcal{X}, \tilde{\mathcal{P}}, g}, \tilde{\mathcal{P}}, f\big),
\end{align*}
which completes the proof of the second step.

\textbf{Step 3: Computing the optimal value of $\mathcal{R}\big(\mathcal{Q}_{\mathcal{X}, \tilde{\mathcal{P}}, g}, \tilde{\mathcal{P}}, f\big)$}

It follows from the previous step that in order to compute $\mathcal{R}\big(\mathcal{Q}_{\mathcal{X}, \tilde{\mathcal{P}}, g}, \tilde{\mathcal{P}}, f\big)$, it suffices to compute $\mathcal{R}\Bigl(\mathcal{Q}_{\mathcal{X}, \tilde{\mathcal{P}}, \varepsilon^\star_g, \delta(\varepsilon^\star_g)}, 
                    \tilde{\mathcal{P}}, f\Bigr)$.

                    From Proposition \ref{prop: global appx linear}, we know that
\[ \mathcal{R}\Bigl(\mathcal{Q}_{\mathcal{X}, \tilde{\mathcal{P}}, \varepsilon^\star_g, \delta(\varepsilon^\star_g)}, 
                    \tilde{\mathcal{P}}, f\Bigr) = \frac{1 - r_1}{r_2 - r_1} \, f(r_2) + \frac{r_2 - 1}{r_2 - r_1} \, f(r_1)\]
for
\[
    r_1 = \frac{c_1}{c_2 - c_1} \cdot \frac{(1 - c_1)e^{\varepsilon^\star_g} + c_2 - 1}{1 - \delta(\varepsilon^\star_g)},
    \qquad
    r_2 = \frac{c_2}{c_2 - c_1} \cdot \frac{(1 - c_1)e^{\varepsilon^\star_g} + c_2 - 1}{e^{\varepsilon^\star_g} + \frac{c_2 - 1}{1 - c_1} \delta(\varepsilon^\star_g)}.
    \]

    We now compute $r_1$ and $r_2$ in terms of $\lambda_{c_1,c_2,g}^\star$. For \( \delta(\varepsilon^\star_g) = 1 + g^*(-e^{\varepsilon^\star_g}) \), we have
\[
r_1 = \frac{c_1}{c_2 - c_1} \cdot 
      \frac{(1 - c_1)e^{\varepsilon^\star_g} + c_2 - 1}
           {-g^*(-e^{\varepsilon^\star_g})},
\quad
r_2 = \frac{c_2}{c_2 - c_1} \cdot 
      \frac{(1 - c_1)e^{\varepsilon^\star_g} + c_2 - 1}
           {e^{\varepsilon^\star_g} + \left(\frac{c_2 - 1}{1 - c_1}\right)\left(1 + g^*(-e^{\varepsilon^\star_g})\right)}.
\]
Moreover, we have
\[
\lambda_{c_1,c_2,g}^\star 
\;=\;
\frac{\,e^{\varepsilon^\star_g} \;+\; \tfrac{c_2 - c_1}{1 - c_1}\,\Bigl(1 + g^*\!\bigl(-\,e^{\varepsilon^\star_g}\bigr)\Bigr) \;-\; 1}
     {\,(1 - c_1)\,e^{\varepsilon^\star_g} + \bigl(c_2 - 1\bigr)}.
\]
Let $\theta \coloneqq e^{\varepsilon_g^\star},
\quad
\delta(\varepsilon_g^\star) \coloneqq 1 + g^{\!*}\!\bigl(-\theta\bigr)$, we have
\begin{align}\label{eqn: lambda_g}
&\lambda_{c_1,c_2,g}^\star
      = \frac{\theta
              + \dfrac{c_2-c_1}{1-c_1}\,\delta(\varepsilon_g^\star)
              - 1}
             {(1-c_1)\theta + (c_2-1)}.
\end{align}

Therefore,
\[
r_1
  = \frac{c_1}{c_2-c_1}
    \frac{(1-c_1)\theta + c_2 - 1}{1-\delta(\varepsilon_g^\star)}.
\]
From \eqref{eqn: lambda_g},
\begin{align*}
1-(1-c_1)\lambda_{c_1,c_2,g}^\star
  &=\frac{(c_2-c_1)\bigl[1-\delta(\varepsilon_g^\star)\bigr]}
          {(1-c_1)\theta + (c_2-1)} .
\end{align*}
Hence
\[
r_1
 = \frac{c_1}{1-(1-c_1)\lambda_{c_1,c_2,g}^\star} .
\]

Similarly,
\[
r_2
  = \frac{c_2}{c_2-c_1}\,
    \frac{(1-c_1)\theta + c_2 - 1}
         {\theta + \dfrac{c_2-1}{1-c_1}\,\delta(\varepsilon_g^\star)} .
\]
Using \eqref{eqn: lambda_g} again,
\begin{align*}
(c_2-1)\lambda_{c_1,c_2,g}^\star + 1
  &= \frac{c_2-c_1}{(1-c_1)\theta + (c_2-1)}
     \Bigl[\theta + \tfrac{c_2-1}{1-c_1}\,\delta(\varepsilon_g^\star)\Bigr].
\end{align*}
Thus
\[
r_2
 = \frac{c_2}{(c_2-1)\lambda_{c_1,c_2,g}^\star + 1 }.
\]
In conclusion,
\[\mathcal{R}\Bigl(\mathcal{Q}_{\mathcal{X}, \tilde{\mathcal{P}}, \varepsilon^\star_g, \delta(\varepsilon^\star_g)}, 
                    \tilde{\mathcal{P}}, f\Bigr) = \mathcal{R}\big(\mathcal{Q}_{\mathcal{X}, \tilde{\mathcal{P}}, g}, \tilde{\mathcal{P}}, f\big) = \frac{1 - r_1}{r_2 - r_1} f(r_2) + \frac{r_2 - 1}{r_2 - r_1} f(r_1),
\]

for\[r_1=\frac{c_1}{1-(1-c_1)\lambda_{c_1,c_2,g}^\star} \quad \text{and } \quad r_2=\frac{c_2}{(c_2-1)\lambda_{c_1,c_2,g}^\star+1}.\]
\end{proof}

\subsection{Proof of Theorem \ref{thm: continuous global functional}}

\begin{proof}
  
Based on Theorem~\ref{thm: global functional}, we define \( \tilde{\mathcal{P}} = \tilde{\mathcal{P}}_{c_1, c_2, \mu} \) under Assumption~\ref{assumption: norm}. Moreover, suppose \( \mu \) is \( \left(\alpha, \frac{1}{\alpha}, 1\right) \)-decomposable with \( \alpha = \frac{1 - c_1}{c_2 - c_1} \). Then, the sampler defined as
\[
\bQ^\star_{c_1,c_2,\mu,g}(P) = \lambda^\star_{c_1,c_2,g} P + \left(1 - \lambda^\star_{c_1,c_2,g}\right) \mu
\]
satisfies $g$-FLDP and is minimax-optimal with respect to any $f$-divergence, where \( \lambda_{c_1,c_2,g}^\star \) is defined as
\[
\lambda^\star_{c_1,c_2,g} = \inf_{\beta \geq 0} 
\tfrac{e^\beta + \frac{c_2 - c_1}{1 - c_1} ( 1 + g^*(-e^\beta)) - 1}
     {(1 - c_1)e^\beta + c_2 - 1}.
\]

Theorem~\ref{thm: continuous global functional} is a special case of Theorem~\ref{thm: global functional}. Below, we demonstrate this reduction explicitly.

Recall that in the setup of Theorem~\ref{thm: global functional}, for a general sample space \( \mathcal{X} \), the universe \( \tilde{\mathcal{P}} \) is defined as
\[
\tilde{\mathcal{P}}_{c_1, c_2, \mu}
:= \left\{P\in \mathcal{P}(\mathcal{X}) : P \ll \mu,\ c_1 \leq \frac{dP}{d\mu} \leq c_2,\ \mu\text{-a.e.} \right\}.
\]

In the setup of Theorem~\ref{thm: continuous global functional}, \( \mathcal{X} = \mathbb{R}^n \), and \( \tilde{\mathcal{P}} \) is defined as
\[
\tilde{\mathcal{P}}_{c_1, c_2, h} := \left\{ P \in \mathcal{C}(\mathbb{R}^n) : c_1 h(x) \leq p(x) \leq c_2 h(x),\quad \forall x \in \mathbb{R}^n \right\}.
\]

% Hence, it suffices to show that for \( \mathcal{X} = \mathbb{R}^n \) and \( \mu = h \), we have \( \tilde{\mathcal{P}}_{c_1, c_2, \mu} = \tilde{\mathcal{P}}_{c_1, c_2, h} \), and \( \mu \) is \( \left(\alpha, \frac{1}{\alpha}, 1\right) \)-decomposable for \( \alpha = \frac{1 - c_1}{c_2 - c_1} \).

Let \( \lambda \) denote the Lebesgue measure on \( \mathbb{R}^n \). We have \( \mathcal{X} = \mathbb{R}^n \), and \( \tilde{\mathcal{P}} = \tilde{\mathcal{P}}_{c_1, c_2, h} = \tilde{\mathcal{P}}_{c_1, c_2, \mu} \), where \( \mu \ll \lambda \) and \( \frac{d\mu}{d\lambda} = h \). Therefore, \( \bQ^\star_{c_1,c_2,\mu,g} = \bQ^\star_{c_1,c_2,h,g} \).

This is because for each \( P \in \tilde{\mathcal{P}} \) with corresponding PDF \( p(x) \), the chain rule of the Radon-Nikodym derivative gives:
\[
p(x) = \frac{dP}{d\lambda} = \frac{dP}{d\mu}(x) \cdot \frac{d\mu}{d\lambda}(x) = \frac{dP}{d\mu}(x) h(x).
\]

It remains to show that \( \mu \) is \( \left(\alpha, \frac{1}{\alpha}, 1\right) \)-decomposable for \( \alpha = \frac{1 - c_1}{c_2 - c_1} \). From Assumption~\ref{assumption: norm}, we know that \( t = \frac{c_2 - c_1}{1 - c_1} \in \mathbb{N} \).  Therefore we need to show that  \( \mu \) is \( \left(\frac{1}{t}, t, 1\right) \)-decomposable.

Since \( \mu \ll \lambda \), the function \( s \mapsto \mu((-\infty, s] \times \mathbb{R}^{n-1}) \) is continuous. As \( s \to -\infty \), this measure tends to \( 0 \), and as \( s \to \infty \), it tends to \( 1 \). By the intermediate value theorem, for each \( i \in [t] \), there exists a threshold \( s_i \in \mathbb{R} \) such that
\[
\mu((-\infty, s_i] \times \mathbb{R}^{n-1}) = \alpha i.
\]
We then define the sets \( A_1 = (-\infty, s_1] \times \mathbb{R}^{n-1} \), and for \( i \geq 2 \), set \( A_i = (s_{i-1}, s_i] \times \mathbb{R}^{n-1} \). These sets satisfy the requirements of \( \left(\frac{1}{t}, t, 1\right) \)-decomposability. Therefore, \( \mu \) is indeed \( \left(\frac{1}{t}, t, 1\right) \)-decomposable, completing the proof.

Therefore, Theorem~\ref{thm: global functional} contains Theorem~\ref{thm: continuous global functional} as a special case. Under Assumption~\ref{assumption: norm}, the sampler \( \mathbf{Q}_{c_1,c_2,h,g}^\star \), defined as a continuous distribution whose density is given by
\begin{equation}
   q^\star_g(P)(x) \coloneqq \lambda^\star_{c_1,c_2,g} p(x) + \left(1 - \lambda^\star_{c_1,c_2,g}\right) h(x), \quad \lambda^\star_{c_1,c_2,g} = \inf_{\beta \geq 0} 
\tfrac{e^\beta + \frac{c_2 - c_1}{1 - c_1} ( 1 + g^*(-e^\beta)) - 1}
     {(1 - c_1)e^\beta + c_2 - 1},
\end{equation}
satisfies \( g \)-FLDP and is minimax-optimal under any \( f \)-divergence. That is,
\begin{equation*}
    \sup_{P \in \tilde{\mathcal{P}}} D_f\big(P \,\|\, \bQ^\star_{c_1,c_2,h,g}(P)\big)
    = \mathcal{R}\big(\mathcal{Q}_{\mathbb{R}^n, \tilde{\mathcal{P}}, g}, \tilde{\mathcal{P}}, f\big)
    = \frac{1 - r_1}{r_2 - r_1} f(r_2) + \frac{r_2 - 1}{r_2 - r_1} f(r_1),
\end{equation*}
for $r_1=\frac{c_1}{1-(1-c_1)\lambda_{c_1,c_2,g}^\star}$ and $r_2=\frac{c_2}{(c_2-1)\lambda_{c_1,c_2,g}^\star+1}$.
\end{proof}
\subsection{Proof of Theorem \ref{thm: discrete global functional}}
\begin{proof}
    From Theorem \ref{thm: global functional}, we know that if $\tilde{\mathcal{P}} = \tilde{\mathcal{P}}_{c_1, c_2, \mu}$ is defined under Assumption \ref{assumption: norm} and  $\mu$ is $(\alpha,\frac{1}{\alpha},1)$-decomposable with $\alpha = \frac{1 - c_1}{c_2 - c_1}$, then the sampler
    $\bQ^\star_{c_1,c_2,\mu,g}(P) \coloneqq \lambda^\star_{c_1,c_2,g} P + (1 - \lambda^\star_{c_1,c_2,g})\mu$
    belongs to the class $\mathcal{Q}_{\mathcal{X}, \tilde{\mathcal{P}}, g}$ and is minimax-optimal under any $f$-divergence $D_f$; that is,
    \begin{equation*}
        \sup_{P \in \tilde{\mathcal{P}}} D_f\big(P \, \| \, \bQ^\star_{c_1,c_2,\mu,g}(P)\big)
        = \mathcal{R}\big(\mathcal{Q}_{\mathcal{X}, \tilde{\mathcal{P}}, g}, \tilde{\mathcal{P}}, f\big),
    \end{equation*}
In this case, 
    \[
    \lambda^\star_{c_1,c_2,g} 
    = \inf_{\beta \geq 0} 
    \frac{e^\beta + \frac{c_2 - c_1}{1 - c_1} \big( 1 + g^*(-e^\beta) \big) - 1}
         {(1 - c_1)e^\beta + c_2 - 1}.
    \]

 It can be shown that:
\[\mathcal{P}([k]) = \tilde{\mathcal{P}}_{0, k, \mu_k}
:= \left\{ P \in \mathcal{P}([k]) : \quad P \ll \mu_k,\quad 0 \leq \frac{dP}{d\mu_k} \leq k \quad \mu_k \text{-a.e.} \right\}
\]
Consider the following setting:  
\[
\tilde{\mathcal{P}} = \mathcal{P}([k]), \quad \mathcal{X} = [k], \quad c_1 = 0, \quad c_2 = k, \quad \mu = \mu_k,
\]
where \( \mu_k \) is the uniform distribution on \([k]\). In this case, we have \( \frac{c_2 - c_1}{1 - c_1} = k \in \mathbb{N} \), and \( \mu_k \) is \( \left(\alpha, \frac{1}{\alpha}, 1 \right) \)-decomposable with \( \alpha = \frac{1}{k} \). 
Moreover, for this specific instantiation, it can be shown that for all \( \varepsilon \geq 0 \),
\[
1 + g^*(-e^\varepsilon) \leq \frac{(c_2 - c_1 e^\varepsilon)(1 - c_1)}{c_2 - c_1} 
= \frac{(k)(1)}{k} = 1.
\]
This comes from the definition of the convex conjugate and the properties of the function \( g \):
\[
\forall \varepsilon \geq 0: \quad -1 \leq g^*(-e^\varepsilon) \leq 0,
\]
and hence,
\[
1 + g^*(-e^\varepsilon) \leq 1.
\]
Therefore, all conditions of Theorem \ref{thm: global functional} are satisfied and  Theorem \ref{thm: discrete global functional} is a special case of the more general case Theorem \ref{thm: global functional}. Hence, the sampler $
    \bQ^\star_{k,g}(P) \coloneqq \lambda^\star_{k,g} P + (1 - \lambda^\star_{k,g})\mu_k$
    belongs to the class $\mathcal{Q}_{[k], \tilde{\mathcal{P}}, g}$ and is minimax-optimal under any $f$-divergence $D_f$; that is,
    \begin{equation*}
        \sup_{P \in \tilde{\mathcal{P}}} D_f\big(P \, \| \, \bQ^\star_{k,g}(P)\big)
        = \mathcal{R}\big(\mathcal{Q}_{[k], \tilde{\mathcal{P}}, g}, \tilde{\mathcal{P}}, f\big),
    \end{equation*}
    where $\lambda^\star_{k,g}$ is the optimal solution to the following optimization problem:
    \[
    \lambda^\star_{k,g} 
    = \inf_{\beta \geq 0} 
    \frac{e^\beta + k \big( 1 + g^*(-e^\beta) \big) - 1}
         {e^\beta + k - 1}.
    \]
\end{proof}

\subsection{Proof of Corollary \ref{cor: special case of f-LDP}}\label{appendix: proof of g_eps}
\begin{proof}

Corollary~\ref{cor: special case of f-LDP} follows immediately from Corollary~\ref{cor: special case of f-LDP approximate} by setting \(\delta = 0\). With this choice, we have \(g_\varepsilon = g_{\varepsilon,0}\), thus the proof is identical.  Moreover, as in the proof of Theorem~\ref{thm: continuous global functional}, the continuous case of Corollary~\ref{cor: special case of f-LDP} can be shown to be a special instance of the more general result stated in Corollary~\ref{cor: special case of f-LDP approximate}. For completeness, a brief proof sketch is provided below.

\textbf{Step 1: Find $g^*_{\eps}$}
\begin{equation}
    g^*_\varepsilon(y) =
    \begin{cases}
        -1, & \text{if } y < -e^{\varepsilon}, \\[6pt]
        \dfrac{y - 1}{e^{\varepsilon} + 1}, & \text{if } -e^{\varepsilon} \leq y \leq -e^{-\varepsilon}, \\[6pt]
        y, & \text{if } y > -e^{-\varepsilon}.
    \end{cases}
\end{equation}

\textbf{Step 2: Solve the infimum to obtain $\lambda^\star_{c_1,c_2,g_\eps}$}

\begin{equation}
    \lambda^\star_{c_1,c_2,g_\eps} =
    \begin{cases}
        \frac{e^\eps - 1}{(e^\eps + 1)(1 - c_1)}, & \text{if } c_1 + c_2 < 2, \\[6pt]
        \frac{e^\eps - 1}{(1 - c_1)e^\eps + c_2 - 1}, & \text{if } c_1 + c_2 \geq 2.
    \end{cases}
\end{equation}

Since $\frac{c_2 - c_1}{1 - c_1} \in \mathbb{N}$ and $c_2 > 1$, it follows that $\frac{c_2 - c_1}{1 - c_1} \geq 2$, which in turn implies  $c_1 + c_2 \geq 2$. Hence:

\[\lambda^\star_{c_1,c_2,g_\eps} = \frac{e^\eps - 1}{(1 - c_1)e^\eps + c_2 - 1}.\]

\textbf{Step 3: Optimality proof}

Following Theorem \ref{thm: global functional}, for the obtained value of $\lambda_{c_1,c_2,g_{\eps}}$, 
the sampler \begin{equation*}
    \bQ^\star_{c_1,c_2,\mu,g_{\varepsilon}}(P) \coloneqq \lambda^\star_{c_1,c_2,g_{\varepsilon}} P + (1 - \lambda^\star_{c_1,c_2,g_{\varepsilon}})\mu
\end{equation*}
belongs to the class $\mathcal{Q}_{\mathcal{X}, \tilde{\mathcal{P}}, g_{\varepsilon,\delta}}$ and is minimax-optimal with respect to any $f$-divergence $D_f$, that is,
\begin{equation*}
    \sup_{P \in \tilde{\mathcal{P}}} D_f\big(P \, \| \, \bQ^\star_{c_1,c_2,\mu,g_{\varepsilon}}(P)\big)
    = \mathcal{R}\big(\mathcal{Q}_{\mathcal{X}, \tilde{\mathcal{P}}, g_{\varepsilon}}, \tilde{\mathcal{P}}, f\big).
\end{equation*}
Therefore, we have:
\begin{equation*}
    \sup_{P \in \tilde{\mathcal{P}}} D_f\big(P \,\|\, \bQ^\star_{c_1,c_2,h,g_\eps}(P)\big)
    = \mathcal{R}\big(\mathcal{Q}_{\mathbb{R}^n, \tilde{\mathcal{P}}, g_\eps}, \tilde{\mathcal{P}}, f\big)
    = \frac{1 - r_1}{r_2 - r_1} f(r_2) + \frac{r_2 - 1}{r_2 - r_1} f(r_1),
\end{equation*}
for $
r_1 = c_1 \cdot \frac{(1 - c_1)e^{\eps} + c_2 - 1}{c_2 - c_1 }$, and $
r_2 = \frac{c_2}{c_2 - c_1} \cdot \frac{(1 - c_1)e^{\eps} + c_2 - 1}{e^{\eps}}$.

The optimal value of 
\(\mathcal{R}\bigl(\mathcal{Q}_{\mathbb{R}^n,\tilde{\mathcal{P}},g_\varepsilon},
                    \tilde{\mathcal{P}},f\bigr)\)
is obtained directly from
Theorem~\ref{thm: continuous global functional}
by substituting
\(
\lambda_{c_1,c_2,g_\varepsilon}^{\star}
      =\dfrac{e^{\varepsilon}-1}{(1-c_1)e^{\varepsilon}+c_2-1}.
\)

\end{proof}

\subsection{Proof of Corollary \ref{cor: special case GDP}}\label{proof: GDP}

\begin{proof}
    From Theorem \ref{thm: global functional} we know that for
    \[
    \lambda^\star_{G_\nu} 
    = \inf_{\beta \geq 0} 
    \frac{e^\beta + \frac{c_2 - c_1}{1 - c_1} \big( 1 + G_\nu^*(-e^\beta) \big) - 1}
         {(1 - c_1)e^\beta + c_2 - 1}.
    \]
   the sampler  \eqref{eqn: continuous global functional opt mech} for $g=G_\nu$ belongs to $\mathcal{Q}_{\R^n, \tilde{\mathcal{P}}, G_\nu}$ and is minimax-optimal with respect to any $f$-divergence, that is,
\begin{equation*}
    \sup_{P \in \tilde{\mathcal{P}}} D_f\big(P \, \| \, \bQ^\star_{G_\nu}(P)\big)
    = \mathcal{R}\big(\mathcal{Q}_{\R^n, \tilde{\mathcal{P}},G_\nu}, \tilde{\mathcal{P}}, f\big).
\end{equation*}
    Now, we compute $G_\nu^*(-e^\beta)$. It is shown in Corollary 1 of \citet{dong2022gaussian} that 
\[
G_\nu^*(y)
  = y\,\Phi\Bigl(-\frac{\nu}{2}-\frac{1}{\nu}\log(-y)\Bigr)\;-\;
  \Phi\Bigl(-\frac{\nu}{2}+\frac{1}{\nu}\log(-y)\Bigr).
\]
When $y = -e^\beta$,
\[G^*_{\nu}\bigl(-e^{\beta}\bigr)
  \;=\;
  -\,e^{\beta}\,
    \Phi\Bigl(-\frac{\nu}{2}-\frac{\beta}{\nu}\Bigr)
  \;-\;
    \Phi\Bigl(-\frac{\nu}{2}+\frac{\beta}{\nu}\Bigr)\]
Therefore, we have:
\begin{align*}
\lambda^\star_{G_\nu} & =\inf_{\beta \geq 0} 
    \frac{e^\beta + \Big(\frac{c_2 - c_1}{1 - c_1}\Big) \bigg( 1 -e^\beta \Phi\left(-\frac{\nu}{2} - \frac{\beta}{\nu}\right) - \Phi\left(-\frac{\nu}{2} + \frac{\beta}{\nu}\right) \bigg) - 1}
         {(1 - c_1)e^\beta + c_2 - 1}\\
         & = \inf_{\beta \geq 0} 
    \frac{e^\beta + \Big(\frac{c_2 - c_1}{1 - c_1}\Big) \bigg( \Phi\left(\frac{\nu}{2} - \frac{\beta}{\nu}\right)  -e^\beta \Phi\left(-\frac{\nu}{2} - \frac{\beta}{\nu}\right)  \bigg) - 1}
         {(1 - c_1)e^\beta + c_2 - 1}.    
\end{align*}

\end{proof}

\subsection{Proof of Theorem \ref{thm: local functional}}
% We first assume $\eps \le 2\log(\gamma)$ and provide the proof of the optimal sampler and optimal value. Then in the end we also consider the case that $2 \log(\gamma) > \eps$.
 
\begin{proof}
    \textbf{Step 1: Lower bound}

In the first step of the proof, we establish a lower bound for the local minimax objective function 
\(\mathcal{R}\big(\mathcal{Q}_{\mathcal{X},\tilde{\mathcal{P}},g},N_\gamma(P_0),f\big)\) as follows:
\begin{align} \label{eqn: lower bound local functional}
\mathcal{R}\big(\mathcal{Q}_{\mathcal{X},\tilde{\mathcal{P}},g},N_\gamma(P_0),f\big) 
&= \inf_{\bQ \in \mathcal{Q}_{\mathcal{X},\tilde{\mathcal{P}},g}} \, 
\sup_{P \in N_\gamma(P_0)} D_f\big(P \, \| \, \bQ(P)\big) \nonumber \\
&\geq \inf_{\bQ \in \mathcal{Q}_{\mathcal{X},N_\gamma(P_0),g}} \, 
\sup_{P \in N_\gamma(P_0)} D_f\big(P \, \| \, \bQ(P)\big) \nonumber \\
&= \mathcal{R}\big(\mathcal{Q}_{\mathcal{X},N_\gamma(P_0),g},N_\gamma(P_0),f\big).
\end{align}

The inequality in the second line follows from the fact that any sampler 
\(\bQ : \tilde{\mathcal{P}} \to \mathcal{P}(\mathcal{X})\) in 
\(\mathcal{Q}_{\mathcal{X}, \tilde{\mathcal{P}}, g}\) also belongs to 
\(\mathcal{Q}_{\mathcal{X}, N_\gamma(P_0), g}\). 
The third line is by definition of the global minimax risk in~\eqref{eqn: global minimax functional problem} when the universe $\mathcal{\tilde{P}}$ is restricted to \(N_\gamma(P_0)\).

\textbf{Step 2: Upper bound}

We now construct a sampler that achieves the lower bound. Specifically, we aim to find a sampler 
\(\bQ^\star_{g, N_\gamma(P_0)} \in \mathcal{Q}_{\mathcal{X}, \tilde{\mathcal{P}}, g}\) such that
\[
\sup_{P \in N_\gamma(P_0)} D_f\big(P \, \| \, \bQ^\star_{g, N_\gamma(P_0)}(P)\big) 
= \mathcal{R}\big(\mathcal{Q}_{\mathcal{X}, N_\gamma(P_0), g}, N_\gamma(P_0), f\big).
\]

Observe that the neighborhood \(N_\gamma(P_0)\) aligns with the class 
\(\tilde{\mathcal{P}} = \tilde{\mathcal{P}}_{c_1, c_2, \mu}\) defined in Theorem~\ref{thm: global functional}:
\[
\tilde{\mathcal{P}}_{c_1, c_2, \mu}
:= \left\{ P \in \mathcal{P}(\mathcal{X}) : \quad P \ll \mu, \quad 
c_1 \leq \frac{dP}{d\mu} \leq c_2 \quad \mu\text{-a.e.} \right\},
\]
\[
N_\gamma(P_0) := \left\{ P \in \mathcal{P}(\mathcal{X}) : \quad 
\frac{1}{\gamma} \leq \frac{dP}{dP_0}(x) \leq \gamma \quad \forall x \in \mathcal{X} \right\}.
\]

By substituting \(c_1 = \tfrac{1}{\gamma}\), \(c_2 = \gamma\), and \(\mu = P_0\), we obtain 
\(\tilde{\mathcal{P}}_{c_1, c_2, \mu} = N_\gamma(P_0)\).

Additionally, since \(\gamma \in \mathbb{N}\), it follows that 
\(\tfrac{c_2 - c_1}{1 - c_1} = \tfrac{\gamma - \frac{1}{\gamma}}{1 - \frac{1}{\gamma}} \in \mathbb{N}\), 
satisfying the integer condition in Theorem~\ref{thm: global functional}. The \((\alpha,\frac{1}{\alpha},1)\)-decomposability 
of \(P_0\) with \(\alpha = \tfrac{1}{\gamma + 1}\) also matches the requirement that $\mu$ is $(\alpha,\frac{1}{\alpha},1)$-decomposable with  
\(\alpha = \tfrac{1 - c_1}{c_2 - c_1}\).

Therefore, all the conditions in Theorem~\ref{thm: global functional} are satisfied. Thus, the optimal 
sampler \(\bQ^\star_g\) corresponding to 
\((c_1, c_2, \mu) = \left(\tfrac{1}{\gamma}, \gamma, P_0\right)\) achieves
\[
\sup_{P \in N_\gamma(P_0)} D_f\big(P \, \| \, \bQ^\star_g(P)\big)
= \mathcal{R}\big(\mathcal{Q}_{\mathcal{X}, N_\gamma(P_0), g}, N_\gamma(P_0), f\big).
\]

Since \(\bQ^\star_g\) is defined only on \(N_\gamma(P_0)\), we extend it to the larger domain 
\(\tilde{\mathcal{P}}\) by defining:
\[
\bQ^\star_{g, N_\gamma(P_0)}(P) := 
\begin{cases}
\bQ^\star_g(P), & \text{if } P \in N_\gamma(P_0), \\[6pt]
\bQ^\star_g(\hat{P}), & \text{otherwise},
\end{cases}
\]
where \( \hat{P} \in N_\gamma(P_0) \) is a distribution that minimizes \(  D_f(P \,\|\, P') \) over all $P' \in N_\gamma(P_0)$.

By construction, \(\bQ^\star_{g, N_\gamma(P_0)} \in \mathcal{Q}_{\mathcal{X}, \tilde{\mathcal{P}}, g}\), 
and for all \(P \in N_\gamma(P_0)\), we have:
\begin{align*}
\sup_{P \in N_\gamma(P_0)} D_f\big(P \, \| \, \bQ^\star_{g, N_\gamma(P_0)}(P)\big) 
&= \sup_{P \in N_\gamma(P_0)} D_f\big(P \, \| \, \bQ^\star_g(P)\big) \\
&= \mathcal{R}\big(\mathcal{Q}_{\mathcal{X}, N_\gamma(P_0), g}, N_\gamma(P_0), f\big).
\end{align*}

Hence, we obtain the upper bound:
\begin{align} \label{eqn: upper bound local functional}
\mathcal{R}\big(\mathcal{Q}_{\mathcal{X}, \tilde{\mathcal{P}}, g}, N_\gamma(P_0), f\big)
&= \inf_{\bQ \in \mathcal{Q}_{\mathcal{X}, \tilde{\mathcal{P}}, g}} \,
\sup_{P \in N_\gamma(P_0)} D_f\big(P \, \| \, \bQ(P)\big) \nonumber \\ 
&\leq \sup_{P \in N_\gamma(P_0)} D_f\big(P \, \| \, \bQ^\star_{g, N_\gamma(P_0)}(P)\big)  \nonumber \\ & = \mathcal{R}\big(\mathcal{Q}_{\mathcal{X}, N_\gamma(P_0), g}, N_\gamma(P_0), f\big).
\end{align}

Combining equations~\eqref{eqn: lower bound local functional} and~\eqref{eqn: upper bound local functional}, 
we conclude that
\[
\mathcal{R}\Big(\mathcal{Q}_{\mathcal{X}, \tilde{\mathcal{P}}, g}, N_\gamma(P_0), f\Big) 
= \mathcal{R}\Big(\mathcal{Q}_{\mathcal{X}, N_\gamma(P_0), g}, N_\gamma(P_0), f\Big).
\]
Note that this proof is stated for a general space \(\mathcal{X}\) and a general universe \(\tilde{\mathcal{P}} = \Ptm\). It extends directly to the continuous case with \(\mathcal{X} = \mathbb{R}^n\) and \(\tilde{\mathcal{P}} = \Pth\), following the same reduction argument used in the proof of Theorem~\ref{thm: continuous global functional}.
% If $2 \log(\gamma) > \eps$, then for any $P_1, P_2 \in N_{\gamma}(P_0)$, we have 
% $p_1(x)/p_2(x) \leq \frac{\gamma}{\frac{1}{\gamma}} \leq e^{\varepsilon}$, hence we can easily observe that the sampler 
% $\mathbf{Q}_f$ defined as $\mathbf{Q}_f(P) = P$ for all $P \in N_{\gamma}(P_0)$ satisfies 
% $\eps$-LDP and  $\mathcal{R}\big(\mathcal{Q}_{\mathcal{X},N_\gamma(P_0),g},N_\gamma(P_0),f\big) = 0$. Since \(\bQ^I_f\) is defined only on \(N_\gamma(P_0)\), we extend it to the larger domain 
% \(\tilde{\mathcal{P}}\) by defining:
% \[
% \bQ^I_{f, N_\gamma(P_0)}(P) := 
% \begin{cases}
% \bQ^I_f(P), & \text{if } P \in N_\gamma(P_0), \\[6pt]
% \bQ^I_f(\hat{P}), & \text{otherwise},
% \end{cases}
% \]
% where \(\hat{P} \in N_\gamma(P_0)\) minimizes 
% \(\inf_{P' \in N_\gamma(P_0)} D_f(P \, \| \, P')\). With similar argument as the case of $2 \log(\gamma) \leq \eps$, we can conclude $\mathcal{R}\big(\mathcal{Q}_{\mathcal{X},\tilde{\mathcal{P}},g},N_\gamma(P_0),f\big) = 0$.
\end{proof}

\subsection{Proof of Theorem \ref{thm: local pure}} \label{proof: local pure}

\begin{proof}

let the global minimax formulation under $\eps$-LDP be defined as: 
\begin{equation}\label{eqn: global minimax pure problem}\mathcal{R}\big(\mathcal{Q}_{\mathcal{X},\tilde{\mathcal{P}},\eps},\tilde{\mathcal{P}},f\big) \coloneqq \inf_{\bQ \in \mathcal{Q}_{\mathcal{X},\tilde{\mathcal{P}},\eps}} \hspace{0.1cm} \sup_{P \in \tilde{\mathcal{P}}}\hspace{0.1cm} D_f\big(P \, \| \, \bQ(P)\big)
\end{equation}
Under certain assumptions, formally defined in Section~\ref{sec: global functional}, \citet{park2024exactly} derives the optimal sampler and optimal value for the optimization problem~\eqref{eqn: global minimax pure problem}.

\textbf{Step 1: Lower bound} 

In the first step of the proof, we establish a lower bound for the local minimax objective function 
\(\mathcal{R}\big(\mathcal{Q}_{\mathcal{X},\tilde{\mathcal{P}},\eps},N_\gamma(P_0),f\big)\) as follows:

\begin{align} \label{eqn: lower bound local pure}
\mathcal{R}\big(\mathcal{Q}_{\mathcal{X},\tilde{\mathcal{P}},\eps},N_\gamma(P_0),f\big) 
&= \inf_{\bQ \in \mathcal{Q}_{\mathcal{X},\tilde{\mathcal{P}},\eps}} \, 
\sup_{P \in N_\gamma(P_0)} D_f\big(P \, \| \, \bQ(P)\big) \nonumber \\
&\geq \inf_{\bQ \in \mathcal{Q}_{\mathcal{X},N_\gamma(P_0),\eps}} \, 
\sup_{P \in N_\gamma(P_0)} D_f\big(P \, \| \, \bQ(P)\big) \nonumber \\
&= \mathcal{R}\big(\mathcal{Q}_{\mathcal{X},N_\gamma(P_0),\eps},N_\gamma(P_0),f\big).
\end{align}

The inequality in the second line follows from the fact that any sampler 
\(\bQ : \tilde{\mathcal{P}} \to \mathcal{P}(\mathcal{X})\) in 
\(\mathcal{Q}_{\mathcal{X}, \tilde{\mathcal{P}}, \eps}\) also belongs to 
\(\mathcal{Q}_{\mathcal{X}, N_\gamma(P_0), \eps}\). 
The third line is by definition of the global minimax risk in~\eqref{eqn: global minimax pure problem} when the universe $\mathcal{\tilde{P}}$ is restricted to \(N_\gamma(P_0)\).

\textbf{Step 2: Upper bound} 

We now construct a sampler that achieves the lower bound. Specifically, we aim to find a sampler 
\(\bQ^\star_{\eps, N_\gamma(P_0)} \in \mathcal{Q}_{\mathcal{X}, \tilde{\mathcal{P}}, \eps}\) such that
\[
\sup_{P \in N_\gamma(P_0)} D_f\big(P \, \| \, \bQ^\star_{\eps, N_\gamma(P_0)}(P)\big) 
= \mathcal{R}\big(\mathcal{Q}_{\mathcal{X}, N_\gamma(P_0), \eps}, N_\gamma(P_0), f\big).
\]

% Observe that the neighborhood \(N_\gamma(P_0)\) aligns with the class 
% \(\tilde{\mathcal{P}} = \tilde{\mathcal{P}}_{c_1, c_2, \mu}\) defined in Theorem~\ref{thm: global functional}:
% \[
% \tilde{\mathcal{P}}_{c_1, c_2, \mu}
% := \left\{ P \in \mathcal{P}(\mathcal{X}) : \quad P \ll \mu, \quad 
% c_1 \leq \frac{dP}{d\mu} \leq c_2 \quad \mu\text{-a.e.} \right\},
% \]
% \[
% N_\gamma(P_0) := \left\{ P \in \mathcal{P}(\mathcal{X}) : \quad 
% \frac{1}{\gamma} \leq \frac{dP}{dP_0}(x) \leq \gamma \quad \forall x \in \mathcal{X} \right\}.
% \]

% By substituting \(c_1 = \tfrac{1}{\gamma}\), \(c_2 = \gamma\), and \(\mu = P_0\), we obtain 
% \(\tilde{\mathcal{P}}_{c_1, c_2, \mu} = N_\gamma(P_0)\).

% Additionally, since \(\gamma \in \mathbb{N}\), it follows that 
% \(\tfrac{c_2 - c_1}{1 - c_1} = \tfrac{\gamma - \frac{1}{\gamma}}{1 - \frac{1}{\gamma}} \in \mathbb{N}\), 
% satisfying the integer condition in Theorem~\ref{thm: global functional}. The \((\alpha,\frac{1}{\alpha},1)\)-decomposability 
% of \(P_0\) with \(\alpha = \tfrac{1}{\gamma + 1}\) also matches the requirement that $\mu$ is $(\alpha,\frac{1}{\alpha},1)$-decomposable with  
% \(\alpha = \tfrac{1 - c_1}{c_2 - c_1}\).

% Therefore, all the conditions in Theorem~\ref{thm: global functional} are satisfied. Thus, the optimal 
Let sampler \(\bQ^\star_\eps: \ngb \to \mathcal{P}(\mathcal{X})\) be an optimal sampler for the global minimax problem (\ref{eqn: global minimax pure problem}) where the universe $\mathcal{\tilde{P}}$ is restricted to \(N_\gamma(P_0)\). 
In other words, $\bQ^\star_\eps$ achieves
\[
\sup_{P \in N_\gamma(P_0)} D_f\big(P \, \| \, \bQ^\star_\eps(P)\big)
= \mathcal{R}\Big(\mathcal{Q}_{\mathcal{X}, N_\gamma(P_0), \eps}, N_\gamma(P_0), f\Big).
\]

Since \(\bQ^\star_\eps\) is defined only on \(N_\gamma(P_0)\), we extend it to the larger domain 
\(\tilde{\mathcal{P}}\) by defining:
\begin{equation}\label{eqn: local pure optimal mech}
    \bQ^\star_{\eps, N_\gamma(P_0)}(P) := 
\begin{cases}
\bQ^\star_\eps(P), & \text{if } P \in N_\gamma(P_0), \\[6pt]
\bQ^\star_\eps(\hat{P}), & \text{otherwise},
\end{cases}    
\end{equation}

where \( \hat{P} \in N_\gamma(P_0) \) is a distribution that minimizes \(  D_f(P \,\|\, P') \) over all $P' \in N_\gamma(P_0)$.

By construction, \(\bQ^\star_{\eps, N_\gamma(P_0)} \in \mathcal{Q}_{\mathcal{X}, \tilde{\mathcal{P}}, \eps}\), 
and for all \(P \in N_\gamma(P_0)\), we have:
\begin{align*}
\sup_{P \in N_\gamma(P_0)} D_f\big(P \, \| \, \bQ^\star_{\eps, N_\gamma(P_0)}(P)\big) 
&= \sup_{P \in N_\gamma(P_0)} D_f\big(P \, \| \, \bQ^\star_\eps(P)\big) \\
&= \mathcal{R}\big(\mathcal{Q}_{\mathcal{X}, N_\gamma(P_0), \eps}, N_\gamma(P_0), f\big).
\end{align*}

Hence, we obtain the upper bound:
\begin{align} \label{eqn: upper bound local pure}
\mathcal{R}\big(\mathcal{Q}_{\mathcal{X}, \tilde{\mathcal{P}}, \eps}, N_\gamma(P_0), f\big)
&= \inf_{\bQ \in \mathcal{Q}_{\mathcal{X}, \tilde{\mathcal{P}}, \eps}} \,
\sup_{P \in N_\gamma(P_0)} D_f\big(P \, \| \, \bQ(P)\big) \nonumber \\ 
&\leq \sup_{P \in N_\gamma(P_0)} D_f\big(P \, \| \, \bQ^\star_{\eps, N_\gamma(P_0)}(P)\big)  \nonumber \\ & = \mathcal{R}\big(\mathcal{Q}_{\mathcal{X}, N_\gamma(P_0), \eps}, N_\gamma(P_0), f\big).
\end{align}

Combining equations~\eqref{eqn: lower bound local pure} and~\eqref{eqn: upper bound local pure}, 
we conclude that
\[
\mathcal{R}\Big(\mathcal{Q}_{\mathcal{X}, \tilde{\mathcal{P}}, \eps}, N_\gamma(P_0), f\Big) 
= \mathcal{R}\Big(\mathcal{Q}_{\mathcal{X}, N_\gamma(P_0), \eps}, N_\gamma(P_0), f\Big).
\] 

\textbf{Step 3: Local minimax risk in a closed form}

Under the same non-triviality assumption as \citet{park2024exactly}, for the pure LDP case we assume \(\varepsilon < 2\log(\gamma)\); otherwise, the identity sampler becomes the trivial local minimax-optimal sampler with zero minimax risk.

Consider the case where $2 \log(\gamma) \le \eps$. In this case, for any $P_1, P_2 \in N_{\gamma}(P_0)$, we have 
$p_1(x)/p_2(x) \leq \frac{\gamma}{\frac{1}{\gamma}} < e^{\varepsilon}$, hence we can easily observe that the sampler 
$\mathbf{Q}^I_\eps$ defined as $\mathbf{Q}^I_\eps(P) = P$ for all $P \in N_{\gamma}(P_0)$ satisfies 
$\eps$-LDP and  $\mathcal{R}\big(\mathcal{Q}_{\mathcal{X},N_\gamma(P_0),\eps},N_\gamma(P_0),f\big) = 0$. Since \(\bQ^I_\eps\) is defined only on \(N_\gamma(P_0)\), we extend it to the larger domain 
\(\tilde{\mathcal{P}}\) by defining:
\[
\bQ^I_{\eps, N_\gamma(P_0)}(P) := 
\begin{cases}
\bQ^I_\eps(P), & \text{if } P \in N_\gamma(P_0), \\[6pt]
\bQ^I_\eps(\hat{P}), & \text{otherwise},
\end{cases}
\]
where \( \hat{P} \in N_\gamma(P_0) \) is a distribution that minimizes \(  D_f(P \,\|\, P') \) over all $P' \in N_\gamma(P_0)$.

We can conclude $\mathcal{R}\big(\mathcal{Q}_{\mathcal{X},\tilde{\mathcal{P}},\eps},N_\gamma(P_0),f\big) = 0$. Therefore, to exclude trivial samplers we assume $\eps < 2\log(\gamma)$.

In order to obtain the local minimax risk under $\varepsilon$-LDP, we refer to Theorem C.4 in \citet{park2024exactly}.

\begin{theorem}[Theorem C.4 of \citet{park2024exactly}]\label{thm: global pure}
Let $\tilde{\mathcal{P}} = \tilde{\mathcal{P}}_{c_1, c_2, \mu}$ be a set of distributions such that the normalization condition~\eqref{eqn: normalization condition} holds, and suppose $\mu$ is $(\alpha,\frac{1}{\alpha},1)$-decomposable with $\alpha = \frac{1 - c_1}{c_2 - c_1}$. Define
\[
b \coloneqq \frac{c_2 - c_1}{(e^\varepsilon - 1)(1 - c_1) + c_2 - c_1}, \quad
r_1 \coloneqq \frac{c_1}{b}, \quad
r_2 \coloneqq \frac{c_2}{b e^\varepsilon}.
\]
Then, the optimal value of the problem~\eqref{eqn: global minimax pure problem} is given by
\[
\mathcal{R}\big(\mathcal{Q}_{\mathcal{X}, \tilde{\mathcal{P}}, \varepsilon}, \tilde{\mathcal{P}}, f\big)
= \frac{1 - r_1}{r_2 - r_1} \, f(r_2) + \frac{r_2 - 1}{r_2 - r_1} \, f(r_1).
\]

Furthermore, the sampler $\bQ^\star_{c_1, c_2, \mu, \varepsilon}$ defined below is an optimal solution to problem~\eqref{eqn: global minimax pure problem} under any $f$-divergence $D_f$. For each \(P \in \tilde{\mathcal{P}}\), let \(\bQ^\star_{c_1, c_2, \mu, \varepsilon}(P) = Q\) be a probability measure $Q \ll \mu$, such that
\[
\frac{dQ}{d\mu}(x) 
= 
\mathrm{clip}\left(
\frac{1}{r_P} \frac{dP}{d\mu}(x)
\;;\;
b,
\; b e^\varepsilon
\right),
\]
where \(r_P\) is the normalizing constant ensuring that \(\int \frac{dQ}{d\mu} \, d\mu(x) = 1\).
\end{theorem}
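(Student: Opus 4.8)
The plan is to prove Theorem~\ref{thm: global pure} by exhibiting matching upper (achievability) and lower (converse) bounds on the minimax risk, inheriting the lower bound from the already-established Proposition~\ref{prop: global appx linear}. As in that proof, first dispose of the degenerate regime: if $c_2 < c_1 e^\varepsilon$, then for any $P,P' \in \tilde{\mathcal{P}}_{c_1,c_2,\mu}$ one has $\frac{dP}{dP'} \le c_2/c_1 < e^\varepsilon$, so the identity sampler is $\varepsilon$-LDP and the risk is $0$; henceforth assume $c_2 \ge c_1 e^\varepsilon$. A short algebraic check rewrites $b = \tfrac{c_2-c_1}{(1-c_1)e^\varepsilon + c_2 - 1}$, so $\tfrac1b = \tfrac{(1-c_1)e^\varepsilon + c_2 - 1}{c_2-c_1}$, and shows $c_1 \le b \le be^\varepsilon \le c_2$, hence $r_1 \le 1 \le r_2$. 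Since $\mathcal{Q}_{\mathcal{X},\tilde{\mathcal{P}},\varepsilon} = \mathcal{Q}_{\mathcal{X},\tilde{\mathcal{P}},\varepsilon,0}$, Proposition~\ref{prop: global appx linear} applied with $\delta = 0$ gives $\mathcal{R}(\mathcal{Q}_{\mathcal{X},\tilde{\mathcal{P}},\varepsilon},\tilde{\mathcal{P}},f) = \tfrac{1-r_1}{r_2-r_1}f(r_2) + \tfrac{r_2-1}{r_2-r_1}f(r_1)$, with parameters that coincide with the $r_1,r_2$ in the statement. It therefore remains to prove that $\bQ^\star_{c_1,c_2,\mu,\varepsilon}$ is an admissible $\varepsilon$-LDP sampler whose worst-case $f$-divergence does not exceed this value, for every $f$.

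Fix $P \in \tilde{\mathcal{P}}_{c_1,c_2,\mu}$, write $p = \tfrac{dP}{d\mu}$, and for $r > 0$ set $g(r) := \int \mathrm{clip}(p/r;\,b,\,be^\varepsilon)\,d\mu$. By dominated convergence $g$ is continuous and non-increasing. The crucial claim is $g(r_1) \ge 1 \ge g(r_2)$ (with $g(r_1)$ read as $\lim_{r \downarrow r_1} g(r)$ when $c_1 = 0$); by the intermediate value theorem the closed set $\{r : g(r) = 1\}$ then meets $[r_1,r_2]$, and we fix a normalizer $r_P \in [r_1,r_2]$ with $g(r_P) = 1$, so that $Q := \bQ^\star_{c_1,c_2,\mu,\varepsilon}(P)$, with density $q = \mathrm{clip}(p/r_P;\,b,\,be^\varepsilon) \in [b,be^\varepsilon]$, is a genuine probability measure $Q \ll \mu$. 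To see $g(r_1) \ge 1$: since $p \ge c_1$ the lower clip is inactive at $r_1 = c_1/b$, so $g(r_1) = b\int \min\{p/c_1,\,e^\varepsilon\}\,d\mu$, and minimizing the right side over $P \in \tilde{\mathcal{P}}_{c_1,c_2,\mu}$ reduces to a two-variable optimization in $\big(\mu\{p > c_1 e^\varepsilon\},\ \int_{\{p > c_1 e^\varepsilon\}}p\,d\mu\big)$ whose minimum is attained by the two-level densities $p \equiv c_2$ on a set of measure $\alpha = \tfrac{1-c_1}{c_2-c_1}$ and $p \equiv c_1$ elsewhere, for which $g(r_1) = b(\alpha e^\varepsilon + 1 - \alpha) = 1$; hence $g(r_1) \ge 1$ for all $P$. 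The bound $g(r_2) \le 1$ is symmetric (the upper clip is inactive at $r_2 = c_2/(be^\varepsilon)$ since $p \le c_2$, and the same two-level distributions are the maximizers).

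With $r_P \in [r_1,r_2]$ in hand, admissibility and the utility bound are routine. For the $\varepsilon$-LDP property: for every input, the density of $\bQ^\star_{c_1,c_2,\mu,\varepsilon}(\cdot)$ lies in $[b,be^\varepsilon]$ $\mu$-a.e.\ by construction, so for any $P,P'$ and measurable $A$, $\bQ^\star_{c_1,c_2,\mu,\varepsilon}(P)(A) \le be^\varepsilon\,\mu(A) = e^\varepsilon\cdot b\,\mu(A) \le e^\varepsilon\,\bQ^\star_{c_1,c_2,\mu,\varepsilon}(P')(A)$. For the bound on $D_f(P\|Q)$: a three-way case split on whether $p(x)/r_P$ falls below $b$, inside $[b,be^\varepsilon]$, or above $be^\varepsilon$ shows $\tfrac{p(x)}{q(x)} \in \big[\,\min\{r_1,r_P\},\ \max\{r_P,r_2\}\,\big] = [r_1,r_2]$ for $\mu$-a.e.\ $x$, where the last equality uses $r_1 \le r_P \le r_2$. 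Lemma~\ref{lem: bounded f-div} then gives $D_f(P\|Q) \le \tfrac{1-r_1}{r_2-r_1}f(r_2) + \tfrac{r_2-1}{r_2-r_1}f(r_1)$ for every $P$ and every $f$. Taking the supremum over $P$ and combining with the converse proves both the closed form for $\mathcal{R}(\mathcal{Q}_{\mathcal{X},\tilde{\mathcal{P}},\varepsilon},\tilde{\mathcal{P}},f)$ and the $f$-divergence-independent optimality of $\bQ^\star_{c_1,c_2,\mu,\varepsilon}$.

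The main obstacle is the claim $g(r_1) \ge 1 \ge g(r_2)$, equivalently that the clipping normalizer $r_P$ can always be chosen inside $[r_1,r_2]$. This is exactly what keeps the pointwise likelihood ratio $p/q$ from ever leaving $[r_1,r_2]$; without it Lemma~\ref{lem: bounded f-div} would only deliver the strictly weaker chord bound through $r_P$, which need not match the converse. Establishing it requires identifying that the extremal two-level distributions — the same ones that furnish the lower bound in Proposition~\ref{prop: global appx linear} — are the minimizers of $g(r_1)$ and maximizers of $g(r_2)$ over the class; the remaining steps ($\varepsilon$-LDP, the case analysis, and the reduction to the earlier proposition) are mechanical.
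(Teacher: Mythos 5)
The paper does not supply a proof of this theorem: it is quoted as Theorem~C.4 of \citet{park2024exactly} and invoked as a black box inside the proof of Theorem~\ref{thm: local pure}, so there is no in-paper argument to compare against. Your proof is correct and follows the route one would expect: the converse is inherited from Proposition~\ref{prop: global appx linear} at $\delta=0$ (a nice synthesis with the paper's own machinery), and achievability reduces to showing the clipping normalizer $r_P$ can always be chosen in $[r_1,r_2]$ so that $p/q\in[r_1,r_2]$ $\mu$-a.e.\ and Lemma~\ref{lem: bounded f-div} delivers the matching chord bound. You correctly flag $g(r_1)\ge 1\ge g(r_2)$ as the load-bearing step; for the record it drops out from chord bounds without any appeal to extreme points or a two-variable reduction: in the non-degenerate regime $c_2\ge c_1e^\varepsilon$, the map $p\mapsto\min\{p/c_1,e^\varepsilon\}$ is concave on $[c_1,c_2]$, hence lies above the affine interpolant through $(c_1,1)$ and $(c_2,e^\varepsilon)$, whose $\mu$-integral is $\alpha e^\varepsilon+1-\alpha=1/b$, giving $g(r_1)\ge 1$ for every $P$; symmetrically, $p\mapsto\max\{pe^\varepsilon/c_2,1\}$ is convex and lies below the same interpolant, giving $g(r_2)\le 1$.
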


Observe that the neighborhood \(N_\gamma(P_0)\) aligns with the class 
\(\tilde{\mathcal{P}} = \tilde{\mathcal{P}}_{c_1, c_2, \mu}\) defined in Theorem~\ref{thm: global functional}:
\[
\tilde{\mathcal{P}}_{c_1, c_2, \mu}
:= \left\{ P \in \mathcal{P}(\mathcal{X}) : \quad P \ll \mu, \quad 
c_1 \leq \frac{dP}{d\mu} \leq c_2 \quad \mu\text{-a.e.} \right\},
\]
\[
N_\gamma(P_0) := \left\{ P \in \mathcal{P}(\mathcal{X}) : \quad 
\frac{1}{\gamma} \leq \frac{dP}{dP_0}(x) \leq \gamma \quad \forall x \in \mathcal{X} \right\}.
\]

By substituting \(c_1 = \tfrac{1}{\gamma}\), \(c_2 = \gamma\), and \(\mu = P_0\), we obtain 
\(\tilde{\mathcal{P}}_{c_1, c_2, \mu} = N_\gamma(P_0)\).

Additionally, the \((\alpha,\frac{1}{\alpha},1)\)-decomposability 
of \(P_0\) with \(\alpha = \tfrac{1}{\gamma + 1}\) also matches the requirement that $\mu$ is $(\alpha,\frac{1}{\alpha},1)$-decomposable with  
\(\alpha = \tfrac{1 - c_1}{c_2 - c_1}\). We now aim to compute the closed-form expression for
\[
\mathcal{R}\Big(\mathcal{Q}_{\mathcal{X}, N_\gamma(P_0), \varepsilon}, N_\gamma(P_0), f\Big).
\]
By the equivalence \(\tilde{\mathcal{P}}_{c_1, c_2, \mu} = N_\gamma(P_0)\) and Theorem~\ref{thm: global pure} applied with \((c_1, c_2, \mu) = \left(\tfrac{1}{\gamma}, \gamma, P_0\right)\), we have:
\[
\mathcal{R}\Big(\mathcal{Q}_{\mathcal{X}, N_\gamma(P_0), \varepsilon}, N_\gamma(P_0), f\Big)
= \frac{1 - r_1}{r_2 - r_1} \, f(r_2) + \frac{r_2 - 1}{r_2 - r_1} \, f(r_1),
\]
where
\[
b \coloneqq \frac{\gamma - \tfrac{1}{\gamma}}{(e^\varepsilon - 1)\left(1 - \tfrac{1}{\gamma}\right) + \gamma - \tfrac{1}{\gamma}} 
= \frac{\gamma + 1}{\gamma + e^\varepsilon}, \quad
r_1 \coloneqq \frac{1}{\gamma b} 
= \frac{e^\varepsilon + \gamma}{\gamma(\gamma + 1)}, \quad
r_2 \coloneqq \frac{\gamma}{b e^\varepsilon} 
= \frac{\gamma(e^\varepsilon + \gamma)}{e^\varepsilon(\gamma + 1)}.
\]
Since
\[
\mathcal{R}\Big(\mathcal{Q}_{\mathcal{X}, \tilde{\mathcal{P}}, \eps}, N_\gamma(P_0), f\Big) 
= \mathcal{R}\Big(\mathcal{Q}_{\mathcal{X}, N_\gamma(P_0), \eps}, N_\gamma(P_0), f\Big),
\] the optimal minimax risk is obtained.

\medskip

\textbf{Step 4: Optimal sampler} 

From the upper bound proof of the theorem, we know that the sampler 
\(\bQ^\star_{\varepsilon, N_\gamma(P_0)}\), defined in \eqref{eqn: local pure optimal mech}, is an optimal sampler, provided that
\[
\sup_{P \in N_\gamma(P_0)} D_f\big(P \, \| \, \bQ^\star_\varepsilon(P)\big)
= \mathcal{R}\Big(\mathcal{Q}_{\mathcal{X}, N_\gamma(P_0), \varepsilon}, N_\gamma(P_0), f\Big).
\]
Therefore, it suffices to construct a sampler 
\(\bQ^\star_\varepsilon : N_\gamma(P_0) \to \mathcal{P}(\mathcal{X})\) that belongs to $\mathcal{Q}_{\mathcal{X}, \ngb, \varepsilon}$
and satisfies the above equality.

By applying Theorem~\ref{thm: global pure} with 
\((c_1, c_2, \mu) = \left(\tfrac{1}{\gamma}, \gamma, P_0\right)\), 
we obtain the following definition for such a sampler. For each \(P \in N_\gamma(P_0)\), define \(\bQ^\star_\varepsilon(P) = Q\), where \(Q\) is a probability measure satisfying $Q \ll P_0$ and given by
\[
\frac{dQ}{dP_0}(x) 
= 
\mathrm{clip}\left(
\frac{1}{r_P} \frac{dP}{dP_0}(x)
\;;\;
\frac{\gamma + 1}{\gamma + e^\varepsilon},
\; \frac{\gamma + 1}{\gamma + e^\varepsilon} e^\varepsilon
\right),
\]
with \(r_P\) denoting the normalizing constant ensuring that 
\(\int \frac{dQ}{dP_0} \, dP_0(x) = 1\).

We extend this sampler to a function over all of \(\mathcal{P}(\mathcal{X})\) by defining
\[
\bQ^\star_{\varepsilon, N_\gamma(P_0)}(P) \coloneqq 
\begin{cases}
\bQ^\star_\varepsilon(P), & \text{if } P \in N_\gamma(P_0), \\[6pt]
\bQ^\star_\varepsilon(\hat{P}), & \text{otherwise},
\end{cases}
\]
where \( \hat{P} \in N_\gamma(P_0) \) is a distribution that minimizes \(  D_f(P \,\|\, P') \) over all $P' \in N_\gamma(P_0)$.

Then \(\bQ^\star_{\varepsilon, N_\gamma(P_0)} \in \mathcal{Q}_{\mathcal{X}, \tilde{\mathcal{P}}, \varepsilon}\), and the following holds:
\begin{equation*}
\sup_{P \in N_\gamma(P_0)} 
D_f\big(P \,\|\, \bQ^\star_{\varepsilon, N_\gamma(P_0)}(P)\big)
= 
\mathcal{R}\Big(\mathcal{Q}_{\mathcal{X}, \tilde{\mathcal{P}}, \varepsilon}, N_\gamma(P_0), f\Big).
\end{equation*}

Note that this proof is stated for a general space \(\mathcal{X}\) and a general universe \(\tilde{\mathcal{P}} = \Ptm\). It extends directly to the continuous case with \(\mathcal{X} = \mathbb{R}^n\) and \(\tilde{\mathcal{P}} = \Pth\), following the same reduction argument used in the proof of Theorem~\ref{thm: continuous global functional}.
\end{proof}

\subsection{Proof of Proposition \ref{prop: pointwise}}\label{appendix: proof pointwise}
\begin{proof}
    To prove this result, we need to define preliminary samplers and the universe set. Let
\[
\tilde{\mathcal{P}}_{c_1, c_2, \mu}
:= \left\{ P \in \mathcal{P}(\mathcal{X}) : P \ll \mu, \quad c_1 \leq \frac{dP}{d\mu} \leq c_2 \quad \mu\text{-a.e.} \right\},
\]
Define a sampler $\mathbf{Q}^\star_{c_1,c_2,\mu,\eps} \in \mathcal{Q}_{\mathcal{X}, \tilde{\mathcal{P}}, \eps}$ as follows:

For each  $P \in \tilde{\mathcal{P}}$, $\mathbf{Q}^\star_{c_1,c_2,\mu,\eps}(P) \coloneqq Q$ is defined as a probability measure such that $Q \ll \mu$  and
\[
\frac{dQ}{d\mu}(x) = \textnormal{clip} \left( \frac{1}{r_P} \frac{dP}{d\mu}(x) ; b, b e^{\eps} \right),
\]
where  $r_P > 0$ is a constant depending on  $P$ so that  $\int \frac{dQ}{d\mu} d\mu(x) = 1$.

\begin{proposition}[Proposition C.8. of \citet{park2024exactly}]\label{prop: pointwise optimal Park et al.}
For any $P \in \Ptm$ and any $f$-divergences $D_f$, we have
    \[
D_f\left(P \,\|\, \mathbf{Q}^\star_{c_1,c_2,\mu,\eps}(P)\right)
= \inf_{\bQ \in \Pt_{b,be^\eps,\mu}} D_f(P \,\|\, \bQ),
\]
for \[
b = \frac{c_2 - c_1}{(e^{\eps} - 1)(1 - c_1) + c_2 - c_1}.
\]
\end{proposition}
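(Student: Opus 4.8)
The plan is to show that $Q^\star\coloneqq\mathbf{Q}^\star_{c_1,c_2,\mu,\eps}(P)$ is the $D_f$-projection of $P$ onto the mollifier $\mathcal{M}\coloneqq\Pt_{b,\,be^\eps,\,\mu}$, and — crucially — that the \emph{same} $Q^\star$ achieves this for \emph{every} $f$-divergence. Write $p\coloneqq\frac{dP}{d\mu}$, let $\bQ\in\mathcal{M}$ be an arbitrary competitor with density $q\coloneqq\frac{d\bQ}{d\mu}$, and $q^\star\coloneqq\frac{dQ^\star}{d\mu}=\mathrm{clip}\bigl(\tfrac1{r_P}p;\,b,\,be^\eps\bigr)$. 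First I would verify well-posedness: $b\le1$ since $(e^\eps-1)(1-c_1)\ge0$, and $be^\eps\ge1$ since this is equivalent to $c_2\ge1$, which holds under Assumption~\ref{assumption: norm}; moreover $r\mapsto\int\mathrm{clip}(p/r;\,b,\,be^\eps)\,d\mu$ is continuous and non-increasing with limits $be^\eps\ge1$ as $r\downarrow0$ and $b\le1$ as $r\to\infty$, so a normalizing constant $r_P>0$ exists and $q^\star$ has values in $[b,be^\eps]$ with $\int q^\star\,d\mu=1$, i.e.\ $Q^\star\in\mathcal{M}$. Since every $\bQ\in\mathcal{M}$ has density $\ge b>0$ and $p\le c_2<\infty$, the boundary conventions of Definition~\ref{def: general f-div} play no role and the argument below is valid in the extended reals, so the case $D_f(P\|Q^\star)=+\infty$ (which, as will be apparent, forces $D_f(P\|\bQ)=+\infty$ for all $\bQ\in\mathcal{M}$) requires no separate treatment.

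The core is a first-order optimality computation uniform in $f$. For a fixed value $p>0$ put $\Psi_p(s)\coloneqq s\,f(p/s)$, which is convex in $s>0$ with one-sided derivative $\Psi_p'(s)=f(p/s)-(p/s)f'(p/s)=\varphi(p/s)$, where $\varphi(u)\coloneqq f(u)-uf'(u)$ obeys $\varphi'(u)=-uf''(u)\le0$, hence $\varphi$ is non-increasing (for non-smooth $f$, fix a measurable selection of subgradients; the monotonicity survives). Applying the supporting-line inequality to $\Psi_{p(x)}$ pointwise in $x$ and integrating,
\[
D_f(P\|\bQ)-D_f(P\|Q^\star)=\int\bigl[\Psi_{p(x)}(q(x))-\Psi_{p(x)}(q^\star(x))\bigr]\,d\mu(x)\;\ge\;\int\varphi\bigl(p(x)/q^\star(x)\bigr)\bigl(q(x)-q^\star(x)\bigr)\,d\mu(x).
\]
It remains to show the last integral is non-negative. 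Partition $\mathcal{X}$ into $R_-=\{p/r_P<b\}$ (where $q^\star=b$), $R_0=\{b\le p/r_P\le be^\eps\}$ (where $q^\star=p/r_P$), and $R_+=\{p/r_P>be^\eps\}$ (where $q^\star=be^\eps$); then $p/q^\star=r_P$ on $R_0$, $p/q^\star=p/b<r_P$ on $R_-$, and $p/q^\star=p/(be^\eps)>r_P$ on $R_+$. With $\lambda\coloneqq\varphi(r_P)$, the monotonicity of $\varphi$ together with $q\ge b=q^\star$ on $R_-$ and $q\le be^\eps=q^\star$ on $R_+$ gives $\bigl(\varphi(p/q^\star)-\lambda\bigr)(q-q^\star)\ge0$ on each region, so
\[
\int\varphi(p/q^\star)(q-q^\star)\,d\mu\;\ge\;\lambda\int(q-q^\star)\,d\mu\;=\;\lambda(1-1)\;=\;0,
\]
since $q$ and $q^\star$ are both probability densities. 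Combining the two displays yields $D_f(P\|Q^\star)\le D_f(P\|\bQ)$ for all $\bQ\in\mathcal{M}$, and since $Q^\star\in\mathcal{M}$ the infimum is attained at $Q^\star$, which is the claimed identity.

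I do not anticipate a substantive obstacle: the only delicate points are (i) making the subgradient manipulations rigorous for a general convex $f$ (choosing one-sided derivatives consistently, the identity $\Psi_p'(s)=\varphi(p/s)$, its monotonicity, and the extended-real bookkeeping) and (ii) confirming the existence — and, if $r\mapsto\int\mathrm{clip}(p/r;\,b,\,be^\eps)\,d\mu$ is only weakly monotone, a valid choice — of $r_P$. The conceptual crux, which is exactly what lets one $Q^\star$ be optimal for all $f$-divergences simultaneously, is that the Lagrangian first-order condition forces the optimal density to equal a constant multiple of $p$ wherever the clip is inactive, with that constant fixed purely by the normalization constraint and hence independent of $f$.
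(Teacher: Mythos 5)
Your argument is correct, and it is worth noting that the paper does not prove this statement at all: it is imported verbatim as Proposition~C.8 of \citet{park2024exactly}, so there is no in-paper proof to compare against. Your derivation is a legitimate standalone proof of the cited fact, and it captures the right mechanism: the perspective function $\Psi_p(s)=s f(p/s)$ is convex with (sub)derivative $\varphi(p/s)$ for $\varphi(u)=f(u)-uf'(u)$ non-increasing, the supporting-line inequality reduces optimality to $\int\varphi(p/q^\star)(q-q^\star)\,d\mu\ge0$, and the clipping structure makes $\varphi(p/q^\star)-\varphi(r_P)$ and $q-q^\star$ have the same sign on each of $R_-,R_0,R_+$, so the multiplier $\lambda=\varphi(r_P)$ annihilates the normalization constraint. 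This is exactly why one projection is simultaneously optimal for every $f$-divergence, and your closing remark correctly identifies that the clip level is fixed by normalization alone. Two small points to tighten. First, your claim that $\int\mathrm{clip}(p/r;b,be^\eps)\,d\mu\to be^\eps$ as $r\downarrow0$ is only valid when $p>0$ $\mu$-a.e.; if $c_1=0$ and $\mu(\{p=0\})>0$ the limit is $be^\eps\mu(\{p>0\})+b\,\mu(\{p=0\})$, which one still checks is at least $1$ (using $\mu(\{p>0\})\ge1/c_2$ and $b(c_2+e^\eps-1)\ge c_2$, equivalent to $c_1(c_2-1)\ge0$), so existence of $r_P$ survives, but the statement as written is slightly off. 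Second, on $\{p=0\}$ one should confirm the integrand $\varphi(p/q^\star)(q-q^\star)$ is well-defined when $\varphi(0^+)=+\infty$; since $q-q^\star=q-b\ge0$ there, the sign argument goes through in the extended reals, and as you observe the genuinely infinite cases force both sides of the identity to be $+\infty$ simultaneously because all competitors have density bounded below by $b>0$.
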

We want to prove that if we define the neighborhood as 
\begin{equation*}
N_\gamma(P_0) \coloneqq  \left\{ P \in \mathcal{P}(\mathcal{X}) :\quad E_\gamma(P \, \| \, P_0) =  E_\gamma(P_0 \, \| \, P) = 0  \right\},
\end{equation*}
and let \( \bQ^\star_\varepsilon \) denote the optimal sampler from Theorem~\ref{thm: local pure}, and let \( \bQ^\star_{g_\varepsilon} \) be the instantiation of Theorem~\ref{thm: local functional} with \( g = g_\varepsilon \). Then, for all \( P \in N_\gamma(P_0) \),
\[
D_f(P \,\|\, \bQ^\star_\varepsilon) \leq D_f(P \,\|\, \bQ^\star_{g_\varepsilon}).
\]
Note that this inequality is equivalent to the condition that for all \( P \in N_\gamma(P_0) \),
\[
D_f(P \,\|\, \bQ^\star_{\varepsilon,\ngb}) \leq D_f(P \,\|\, \bQ^\star_{g_\varepsilon,\ngb}),
\]
which is the ultimate result we want to prove in this proposition. 

Therefore, it suffices to show that for $\Ptm = \ngb$, we have $\bQ^\star_\eps = \mathbf{Q}^\star_{c_1,c_2,\mu,\eps}$ and that $\bQ^\star_{g_\varepsilon} \in \Pt_{b,be^\eps,\mu}$.

It follows that for $c_1 = \frac{1}{\gamma}$, $c_2 = \gamma$, and $\mu = P_0$, we have the equivalence $\Ptm = \ngb$. For this instantiation of $\Ptm$, the optimal sampler $\mathbf{Q}^\star_{c_1,c_2,\mu,\eps}$ for input $P$
is defined as a probability measure such that $Q \ll P_0$ and
\[
\frac{dQ}{dP_0}(x) = \textnormal{clip} \left( \frac{1}{r_P} \frac{dP}{dP_0}(x) ; b, b e^{\eps} \right), 
\]
where
\[
b = \frac{\gamma - \frac{1}{\gamma}}{(e^{\eps} - 1)(1 - \frac{1}{\gamma}) + \gamma - \frac{1}{\gamma}} = \frac{\gamma + 1}{e^\eps + \gamma} .
\]
This is exactly  \( \bQ^\star_\varepsilon \), the optimal sampler from Theorem~\ref{thm: local pure}. Thus, it remains to prove that $\bQ^\star_{g_\varepsilon} \in \Pt_{b,be^\eps,P_0}$ for $b =  \frac{\gamma + 1}{e^\eps + \gamma}$.

Combining Theorem~\ref{thm: local functional} and Corollary~\ref{cor: special case of f-LDP}, we obtain
\[
\bQ^\star_{g_\varepsilon}(P) \coloneqq \lambda^\star_{g_\varepsilon} P + (1 - \lambda^\star_{g_\varepsilon}) P_0, \quad 
\lambda^\star_{g_\varepsilon} = \frac{e^\varepsilon - 1}{(1 - \frac{1}{\gamma})e^\varepsilon + \gamma - 1} = \frac{\gamma(e^\varepsilon - 1)}{(\gamma - 1)(e^\eps + \gamma)}.
\]

Recall that
\[
\tilde{\mathcal{P}}_{b, be^\eps, P_0}
:= \left\{ Q \in \mathcal{P}(\mathcal{X}) : Q \ll P_0, \quad b \leq \frac{dQ}{dP_0} \leq be^\eps \quad P_0\text{-a.e.} \right\},
\]
and
\[
\ngb
:= \left\{ P \in \mathcal{P}(\mathcal{X}) : P \ll P_0, \quad \frac{1}{\gamma} \leq \frac{dP}{dP_0} \leq \gamma \quad P_0\text{-a.e.} \right\}.
\]

Therefore, if $P \in \ngb$, we have
\[
\frac{d\bQ^\star_{g_\varepsilon}}{dP_0} = \lambda^\star_{g_\varepsilon} \frac{dP}{dP_0} + (1 - \lambda^\star_{g_\varepsilon}).
\]
It follows that $\bQ^\star_{g_\varepsilon} \ll P_0$ and
\begin{align*}
    \frac{\lambda^\star_{g_\varepsilon}}{\gamma} + (1 - \lambda^\star_{g_\varepsilon})\leq \frac{d\bQ^\star_{g_\varepsilon}}{dP_0} \le \gamma \lambda^\star_{g_\varepsilon} + (1 - \lambda^\star_{g_\varepsilon}) \quad P_0\text{-a.e.}.
\end{align*}
Equivalently,
\begin{align*}
   \frac{e^\varepsilon - 1}{(\gamma - 1)(e^\eps + \gamma)} + \frac{\gamma^2 -e^\eps}{(\gamma - 1)(e^\eps + \gamma)}\leq \frac{d\bQ^\star_{g_\varepsilon}}{dP_0} \le \frac{\gamma^2(e^\varepsilon - 1)}{(\gamma - 1)(e^\eps + \gamma)} + \frac{\gamma^2 -e^\eps}{(\gamma - 1)(e^\eps + \gamma)} \quad P_0\text{-a.e.}.
\end{align*}
As a result,
\begin{align*}
    \frac{\gamma^2 -1}{(\gamma - 1)(e^\eps + \gamma)}\leq \frac{d\bQ^\star_{g_\varepsilon}}{dP_0} \le   \frac{(\gamma^2 -1)e^\eps}{(\gamma - 1)(e^\eps + \gamma)} \quad P_0\text{-a.e.}.
\end{align*}
Or equivalently,
\begin{align*}
    \frac{\gamma + 1}{e^\eps + \gamma}\leq \frac{d\bQ^\star_{g_\varepsilon}}{dP_0} \le   \frac{(\gamma + 1)e^\eps}{e^\eps + \gamma} \quad P_0\text{-a.e.}.
\end{align*}
This shows that $\bQ^\star_{g_\eps} \in \tilde{\mathcal{P}}_{b, be^\eps, P_0}$ and completes the proof.

\end{proof}

\section{Detailed experimental setup and additional experiments
}\label{sec: appendix experimental setup and edditional experiments}

\subsection{Mixture of Laplace distributions}\label{appendix: laplace mixture}

% \HG{$\frac{c_2 - c_1}{1 - c_1} \in \N$ does not naturally hold for mixture of Laplace distributions. I can make it hold by writing $\tilde{\mathcal{P}}_{\mathcal{L}} \subseteq \tilde{\mathcal{P}}_{0,\, \lceil e^{1/b}\rceil,\, h_{\mathcal{L}}}$ (see Example 3.1), but this would seem artificial. Anything you suggest I add in order to make the justification of $\frac{c_2 - c_1}{1 - c_1} \in \N$ assumption better?} \SA{I agree it may look artificial, so we can keep $c_1$ for Laplace in the above example. But did you example how did you address this issue for Fig 1?}\HG{For the experiments, I chose $c_1$ and $c_2$ in a way that $\frac{c_2 - c_1}{1 - c_1} \in \N$ by defining the local neighborhood using ceiling and floor functions.} \SA{Make sure that you mention it in the appendix.}\HG{Sure.}

\textbf{Proof of $\mathbf{\tilde{\mathcal{P}}_{\mathcal{L}} \subseteq \tilde{\mathcal{P}}_{e^{-1/b},\, e^{1/b},\, h_{\mathcal{L}}}}$ (Example \ref{example: Gaussian Laplace universe})}

Let the mixture of Laplace distributions with fixed scale parameter \( b \) be defined as
\[
\tilde{\mathcal{P}}_{\mathcal{L}} = \left\{ \sum_{i=1}^k \lambda_i \, \mathcal{L}(m_i, b) : k \in \mathbb{N},\, \lambda_i \geq 0,\, \sum_{i=1}^k \lambda_i = 1,\, \|m_i\|_1 \leq 1 \right\},
\]
where \( \mathcal{L}(m, b) \) denotes the \(n\)-dimensional Laplace distribution with mean \( m \in \mathbb{R}^n \) and scale parameter \( b > 0 \). We aim to show that
\[
\tilde{\mathcal{P}}_{\mathcal{L}} \subseteq \tilde{\mathcal{P}}_{e^{-1/b},\, e^{1/b},\, h_{\mathcal{L}}},
\]
where \( h_{\mathcal{L}} \) is the density of the zero-mean \(n\)-dimensional Laplace distribution with scale parameter \( b \).

The density of an \(n\)-dimensional Laplace distribution with location \( m \in \mathbb{R}^n \) and scale \( b \) is given by
\[
\mathcal{L}(x \mid m, b) = \frac{1}{(2b)^n} \exp\left( -\frac{\|x - m\|_1}{b} \right),
\quad x \in \mathbb{R}^n.
\]

Now, fix any \( m \in \mathbb{R}^n \) such that \( \|m\|_1 \leq 1 \). Then for any \( x \in \mathbb{R}^n \), we have the following bounds:
\begin{align*}
\frac{\|x\|_1 - \|m\|_1}{b}
&\leq \frac{\|x - m\|_1}{b}
\leq \frac{\|x\|_1 + \|m\|_1}{b} \\
\Rightarrow \quad
\frac{\|x\|_1 - 1}{b}
&\leq \frac{\|x - m\|_1}{b}
\leq \frac{\|x\|_1 + 1}{b}.
\end{align*}
Exponentiating both sides, we get:
\[
\exp\left( -\frac{\|x\|_1 + 1}{b} \right)
\leq \exp\left( -\frac{\|x - m\|_1}{b} \right)
\leq \exp\left( -\frac{\|x\|_1 - 1}{b} \right).
\]
Multiplying by the constant \( \frac{1}{(2b)^n} \), we obtain:
\[
\frac{e^{-1/b}}{(2b)^n} \exp\left( -\frac{\|x\|_1}{b} \right)
\leq \mathcal{L}(x \mid m, b)
\leq \frac{e^{1/b}}{(2b)^n} \exp\left( -\frac{\|x\|_1}{b} \right),
\]
which implies
\[
e^{-1/b} \, \mathcal{L}(x \mid \mathbf{0}, b) \leq \mathcal{L}(x \mid m, b) \leq e^{1/b} \, \mathcal{L}(x \mid \mathbf{0}, b).
\]
For any distribution \( P \in \tilde{\mathcal{P}}_{\mathcal{L}} \), we can write
\[
p(x) = \sum_{i=1}^k \lambda_i \, \mathcal{L}(x \mid m_i, b).
\]
Since each \( m_i \) satisfies \( \|m_i\|_1 \leq 1 \), the above inequality applies to each mixture component. Thus, for all \( x \in \mathbb{R}^n \),
\[
e^{-1/b} \, \mathcal{L}(x \mid \mathbf{0}, b)
\leq \sum_{i=1}^k \lambda_i \mathcal{L}(x \mid m_i, b)
\leq e^{1/b} \, \mathcal{L}(x \mid \mathbf{0}, b).
\]
Defining \( p_0(x) = \mathcal{L}(x \mid \mathbf{0}, b) \), we conclude that for every \( P \in \tilde{\mathcal{P}}_{\mathcal{L}} \),
\[
e^{-1/b} \leq \frac{p(x)}{p_0(x)} \leq e^{1/b}, \qquad \forall x \in \mathbb{R}^n,
\]
which confirms that \( \tilde{\mathcal{P}}_{\mathcal{L}} \subseteq \tilde{\mathcal{P}}_{e^{-1/b},\, e^{1/b},\, h_{\mathcal{L}}} \).

\textbf{Experimental details of Figure \ref{fig: Laplace pure and functional local vs global}}

The original distribution is a mixture of four two-dimensional Laplace distributions, each with scale parameter \( b = 2 \) and means at \( (1,0) \), \( (0,1) \), \( (-1,0) \), and \( (0,-1) \), respectively, with equal weights \( \frac{1}{4} \). That is, the input distribution is given by

\[
P_{\textsf{input}} = \sum_{i=1}^4 \frac{1}{4} \, \mathcal{L}(m_i, 2),
\]
where the \( m_i \) are defined as above.

From Example~\ref{example: Gaussian Laplace universe}, we know that \( P_{\textsf{input}} \in \tilde{\mathcal{P}} \), where
\[
\tilde{\mathcal{P}} = \left\{ \sum_{i=1}^k \lambda_i \, \mathcal{L}(m_i, b) : k \in \mathbb{N},\, \lambda_i \geq 0,\, \sum_{i=1}^k \lambda_i = 1,\, \|m_i\|_1 \leq 1 \right\},
\]
and furthermore, \( \tilde{\mathcal{P}} \subseteq \tilde{\mathcal{P}}_{e^{-1/b},\, e^{1/b},\, h_{\mathcal{L}}} \), where \( h_{\mathcal{L}} \) denotes the density of a two-dimensional Laplace distribution with mean zero and scale parameter \( b = 2 \).

For \( b = 2 \), we define the local and global minimax universes as
\[
\tilde{\mathcal{P}}_{\textsf{local}} = \tilde{\mathcal{P}}_{\frac{1}{2},\, 2,\, h_{\mathcal{L}}}
\quad \text{and} \quad
\tilde{\mathcal{P}}_{\textsf{global}} = \tilde{\mathcal{P}}_{\frac{1}{6},\, 6,\, h_{\mathcal{L}}}.
\]

These universes are chosen such that \( \tilde{\mathcal{P}}_{\textsf{local}} \subseteq \tilde{\mathcal{P}}_{\textsf{global}} \), \( P_{\textsf{input}} \in \tilde{\mathcal{P}}_{\textsf{local}} \cap \tilde{\mathcal{P}}_{\textsf{global}} \),  and the ratio \( \frac{c_2 - c_1}{1 - c_1} \in \mathbb{N} \), as required by Assumption~\ref{assumption: norm}.
Moreover, we note that \( \tilde{\mathcal{P}}_{e^{-1/b},\, e^{1/b},\, h_{\mathcal{L}}} \subseteq \tilde{\mathcal{P}}_{\frac{1}{2},\, 2,\, h_{\mathcal{L}}} \), i.e., the condition in Assumption \ref{assumption: norm} is achieved by slightly adjusting the bounds \( c_1 \) and \( c_2 \) using floor and ceiling functions.

We evaluate the performance of minimax-optimal samplers on the input distribution under two LDP settings: comparing local minimax-optimal samplers with global minimax-optimal samplers under both \( \nu \)-GLDP and \( \varepsilon \)-LDP. In the pure-LDP setting, we compare the global minimax-optimal sampler of \citet{park2024exactly} with the local minimax-optimal sampler from Theorem~\ref{thm: local pure} for \( \varepsilon = 1 \). In the \( \nu \)-GLDP setting, we compare our global minimax-optimal sampler from Corollary~\ref{cor: special case GDP} with the local minimax-optimal sampler from Theorem~\ref{thm: local functional}, using the special case \( g = G_\nu \) with \( \nu = 1.5 \).

Under both settings, Figure~\ref{fig: Laplace pure and functional local vs global} demonstrates that the local minimax-optimal sampler better preserves the input distribution compared to the global minimax-optimal sampler, given the same level of privacy.

\subsection{Gaussian mixtures}\label{appendix: gaussian ring}

\textbf{Proof of $\tilde{\mathcal{P}}_{\mathcal{N}} \subseteq \tilde{\mathcal{P}}_{0,\, 1,\, h_{\mathcal{N}}}$ (Example \ref{example: Gaussian Laplace universe})} 
% \HG{Double check Example 3.1!}

let $\tilde{\mathcal{P}}_{\mathcal{N}}  = \Big\{ \sum_{i=1}^k \lambda_i \mathcal{N}(m_i, \sigma^2 I_n)  :  \lambda_i \geq 0, \, \sum_{i=1}^k \lambda_i = 1, \, \|m_i\|_2 \leq 1 \Big\}$. We want to show that  $\tilde{\mathcal{P}}_{\mathcal{N}} \subseteq \tilde{\mathcal{P}}_{0,\, 1,\, h_{\mathcal{N}}}$ , where $h_{\mathcal{N}}(x) = \frac{1}{(2\pi \sigma^2)^\frac{n}{2}} \exp\big(-\frac{[\max(0, \|x\|_2 - 1)]^2}{2\sigma^2} \big)$.

Fix a component centre \(m\in\mathbb{R}^{n}\) with \(\|m\|_{2}\le1\).
Its Gaussian density at \(x\) is
\[
\mathcal{N}_{m,\sigma^{2}I_{n}}[x]
=\frac{1}{(2\pi\sigma^{2})^{n/2}}
  \exp\Bigl(
     -\tfrac{\|x-m\|_{2}^{2}}{2\sigma^{2}}
  \Bigr).
\]
By the triangle inequality,
\[
\|x-m\|_{2}
\;\ge\;
\bigl|\|x\|_{2}-\|m\|_{2}\bigr|
\;\ge\;
\max\bigl(0,\|x\|_{2}-1\bigr),
\]
because \(\|m\|_{2}\le1\).
Squaring and dividing by \(2\sigma^{2}\) yields
\[
\exp\Bigl(
  -\tfrac{\|x-m\|_{2}^{2}}{2\sigma^{2}}
\Bigr)
\;\le\;
\exp\Bigl(
  -\tfrac{[\max(0,\|x\|_{2}-1)]^{2}}{2\sigma^{2}}
\Bigr).
\]
Multiplying by the common normalizing constant
\((2\pi\sigma^{2})^{-n/2}\) gives
\[
\mathcal{N}_{m,\sigma^{2}I_{n}}[x]
\;\le\;
h_{\mathcal{N}}(x),\qquad\forall x\in\mathbb{R}^{n}.
\]

Now take an arbitrary mixture
\(P=\sum_{i=1}^{k}\lambda_{i}\,\mathcal{N}(m_{i},\sigma^{2}I_{n})
\in\tilde{\mathcal{P}}_{\mathcal{N}}\).
Its density is
\[
p(x)=\sum_{i=1}^{k}\lambda_{i}\,
      \mathcal{N}_{m_{i},\sigma^{2}I_{n}}[x].
\]
Because each component satisfies
\(\mathcal{N}_{m_{i},\sigma^{2}I_{n}}[x]\le h_{\mathcal{N}}(x)\) and the
weights obey \(\lambda_{i}\ge0,\ \sum_{i}\lambda_{i}=1\), we have
\[
0\le p(x)\le h_{\mathcal{N}}(x)\qquad\forall x\in\mathbb{R}^{n}.
\]
Therefore \(P\in\tilde{\mathcal{P}}_{0,\,1,\,h_{\mathcal{N}}}\), as
claimed.

It is worth noting that $h_\mathcal{N}$ is not a probability density. In order to make $h_{\mathcal{N}}$ a valid probability distribution, we normalize it by its integral and define
\[
c_2 = \int h_{\mathcal{N}}(x) \, dx, \quad \text{and} \quad h(x) = \frac{h_{\mathcal{N}}(x)}{c_2}.
\]
Accordingly, we have \( \tilde{\mathcal{P}}_{0,1,h_{\mathcal{N}}} = \tilde{\mathcal{P}}_{0,c_2,h} \).

\subsection{Finite sample space numerical results under pure LDP}\label{appendix: finite space}

\textbf{Experimental details of Figure \ref{fig: finite worst-case k = 20}}

We compare local and global minimax-optimal samplers in the finite setting \( \mathcal{X} = [k] \), where the global universe is defined as \( \tilde{\mathcal{P}}_{\textsf{global}} = \mathcal{P}([k]) = \tilde{\mathcal{P}}_{0, k, \mu_k} \), with \( \mu_k \) denoting the uniform distribution over \([k]\). The local neighborhood is specified as \( \tilde{\mathcal{P}}_{\textsf{local}} = \mathcal{N}_\gamma(\mu_k) = \tilde{\mathcal{P}}_{\frac{1}{\gamma},\, \gamma,\, \mu_k} \), where \( \gamma = \frac{k}{2} - 1 \), consistent with the finite-space version of Theorem~\ref{thm: local pure}.

For the selected values \( k \in \{10, 20, 100\} \), it is easy to verify that \( \gamma \in \mathbb{N} \), and the uniform distribution \( \mu_k \) is \( (\alpha, \frac{1}{\alpha}, 1) \)-decomposable, where
\[
\alpha = \frac{1 - c_1}{c_2 - c_1}
= \frac{1 - \frac{1}{\gamma}}{\gamma - \frac{1}{\gamma}}
= \frac{1}{\gamma + 1}
= \frac{2}{k}.
\]
Therefore, the decomposability condition in Theorem~\ref{thm: local pure} is satisfied, and we can apply its finite-space version by setting \( \mathcal{X} = [k] \).

The local minimax risk is then given by
\[
\mathcal{R}\big(\mathcal{Q}_{[k], \mathcal{P}([k]), \varepsilon}, N_\gamma(\mu_k), f\big) 
= \frac{1 - r_1}{r_2 - r_1} \, f(r_2) + \frac{r_2 - 1}{r_2 - r_1} \, f(r_1),
\]
where
\[
b \coloneqq \frac{\gamma - \tfrac{1}{\gamma}}{(e^\varepsilon - 1)(1 - \tfrac{1}{\gamma}) + \gamma - \tfrac{1}{\gamma}} 
= \frac{\gamma + 1}{\gamma + e^\varepsilon}, \quad
r_1 \coloneqq \frac{1}{\gamma b} = \frac{e^\varepsilon + \gamma}{\gamma(\gamma + 1)}, \quad
r_2 \coloneqq \frac{\gamma}{b e^\varepsilon} = \frac{\gamma(e^\varepsilon + \gamma)}{e^\varepsilon(\gamma + 1)}.
\]

On the other hand, the global minimax risk is given by~\citep[Theorem 3.1]{park2024exactly}:
\[
\mathcal{R}\big(\mathcal{Q}_{[k], \mathcal{P}([k]), \varepsilon}, \mathcal{P}([k]), f\big)
= \frac{e^{\varepsilon}}{e^{\varepsilon} + k - 1} 
  f\left( \frac{e^{\varepsilon} + k - 1}{e^{\varepsilon}} \right)
+ \frac{k - 1}{e^{\varepsilon} + k - 1} f(0).
\]

Therefore, in Figure~\ref{fig: finite worst-case k = 20}, we compare the theoretical worst-case \( f \)-divergence losses achieved by the local minimax-optimal sampler and the global minimax-optimal sampler:
\[
\mathcal{R}\big(\mathcal{Q}_{[k], \mathcal{P}([k]), \varepsilon}, N_\gamma(\mu_k), f\big)
\quad \text{and} \quad
\mathcal{R}\big(\mathcal{Q}_{[k], \mathcal{P}([k]), \varepsilon}, \mathcal{P}([k]), f\big).
\]

\textbf{Additional numerical results}

We replicated the procedure used in Section~\ref{experiment: finite} to generate Figure~\ref{fig: finite worst-case k = 20} , but now with sample sizes \(k = 10\) and \(k = 100\). As shown in Figures~\ref{fig: finite worst-case k = 10} and~\ref{fig: finite worst-case k = 100} and comparing with Figure \ref{fig: finite worst-case k = 20}, the local minimax sampler consistently attains a smaller worst-case loss than the global minimax sampler across different $\eps$ values. Also, the performance gap decreases for larger values of $k$.

\begin{figure}[ht]
  \centering
  \includegraphics[width=0.7\linewidth]{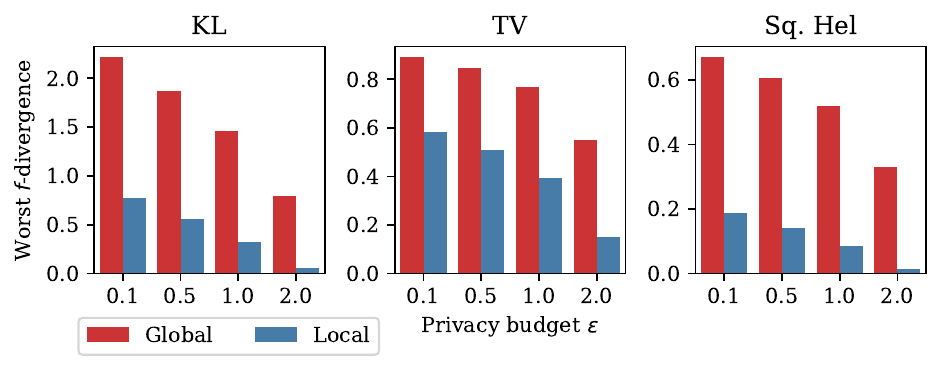}
  \caption{Theoretical worst-case 
$f$-divergences of global and local minimax samplers under the pure LDP setting with uniform reference distribution $\mu_k$ over finite space ($k = 10$).
% (Left: KL divergence; Center: Total variation distance; Right: Squared Hellinger distance.)
}
  \label{fig: finite worst-case k = 10}
\end{figure}

\begin{figure}[ht]
  \centering
  \includegraphics[width=0.7\linewidth]{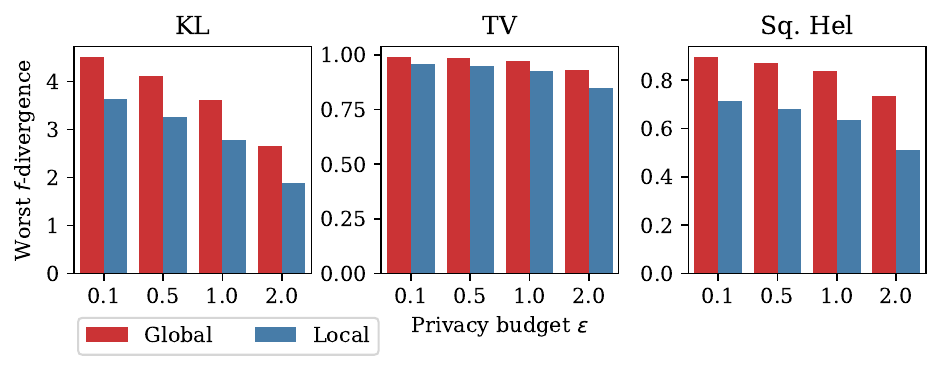}
  \caption{Theoretical worst-case 
$f$-divergences of global and local minimax samplers under the pure LDP setting with uniform reference distribution $\mu_k$ over finite space ($k = 100$).
% (Left: KL divergence; Center: Total variation distance; Right: Squared Hellinger distance.)
}
  \label{fig: finite worst-case k = 100}
\end{figure}

\subsection{Experimental details of continuous sample space}\label{appendix: details of continuous}

In the continuous setting with \( \mathcal{X} = \mathbb{R} \), we fix the universe \( \tilde{\mathcal{P}}_{\textsf{local}} \) and evaluate the empirical worst-case \( f \)-divergence over 100 randomly generated client distributions \( P_1, \ldots, P_{100} \in \tilde{\mathcal{P}}_{\textsf{local}} \). Each \( P_j \) represents a client and is constructed as a mixture of a random number of one-dimensional Laplace components with scale parameter \( b = 1 \). The full generation procedure is described below.

We define the mixture class of Laplace distributions with fixed scale parameter \( b \) as
\[
\tilde{\mathcal{P}}_{\mathcal{L}} = \left\{ \sum_{i=1}^k \lambda_i \, \mathcal{L}(m_i, b) : k \in [K],\, \lambda_i \geq 0,\, \sum_{i=1}^k \lambda_i = 1,\, |m_i| \leq 1 \right\},
\]
where \( \mathcal{L}(m, b) \) denotes the one-dimensional Laplace distribution with mean \( m \in \mathbb{R} \) and scale parameter \( b > 0 \). To prevent an unbounded number of components in each mixture, we impose an upper bound \( K \) on the number of Laplace components per client.

Each \( P_j \in \tilde{\mathcal{P}}_{\mathcal{L}} \) is generated by randomly selecting \( k \), \( \lambda_i \), and \( m_i \) as follows:  
First, sample \( \tilde{k} \) from a Poisson distribution with mean \( k_0 \), and set \( k = \min(\tilde{k} + 1, K) \). Then, sample each \( m_1, \ldots, m_k \) independently from the uniform distribution on \( [-1, 1] \), and sample weights \( (\lambda_1, \ldots, \lambda_k) \) from the uniform distribution on \( \mathcal{P}([k]) \). In this experiment, to maintain consistency with~\citet{park2024exactly}, we use \( K = 10 \) and \( k_0 = 2 \).

We define the local and global universes as
\[
\tilde{\mathcal{P}}_{\textsf{local}} = \tilde{\mathcal{P}}_{1/3,\, 3,\, h_{\mathcal{L}}}
\quad \text{and} \quad
\tilde{\mathcal{P}}_{\textsf{global}} = \tilde{\mathcal{P}}_{1/9,\, 9,\, h_{\mathcal{L}}},
\]
where \( h_{\mathcal{L}} \) denotes the density of the Laplace distribution with mean zero and scale \( b = 1 \).

We evaluate the empirical worst-case divergence of each sampler using the maximum
\[
\max_{j \in [100]} D_f\bigl(P_j \,\|\, \mathbf{Q}(P_j)\bigr).
\]
The local minimax sampler is instantiated from Theorem~\ref{thm: local pure}, while the global minimax sampler corresponds to the optimal sampler from~\citep[Theorem 3.3]{park2024exactly}.

\subsection{Finite sample space numerical results under  GLDP}  \label{appendix: experiment GDP local global discrete}

In this section, we adopt the same experimental setup as in Section~\ref{experiment: finite} to evaluate the worst-case $f$-divergence of the local and global minimax samplers under $\nu$-GLDP constraints. To this end, we follow the same procedure using the same global and local universes. However, the computation of minimax risk differs: for the global minimax risk, we use the optimal value corresponding to the optimal sampler characterized in Corollary~\ref{cor: special case GDP}, while for the local minimax risk, we instantiate  Theorem~\ref{thm: local functional} with $g = G_\nu$ (both the finite sample space version of the results).

\begin{figure}[ht]
  \centering
  \includegraphics[width=0.7\linewidth]{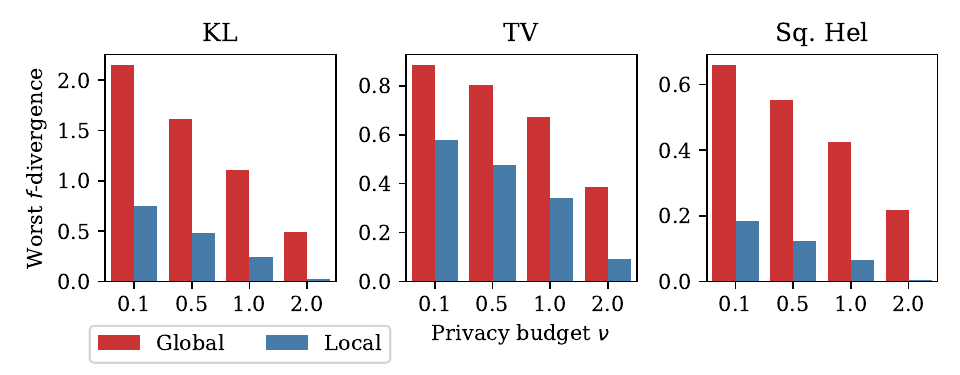}
  \caption{Theoretical worst-case 
$f$-divergences of global and local minimax samplers under the $\nu$-GLDP setting with uniform reference distribution $\mu_k$ over finite space ($k = 10$).
% (Left: KL divergence; Center: Total variation distance; Right: Squared Hellinger distance.)
}
  \label{fig: finite worst-case GDP k = 10}
\end{figure}

\begin{figure}[ht]
  \centering
  \includegraphics[width=0.7\linewidth]{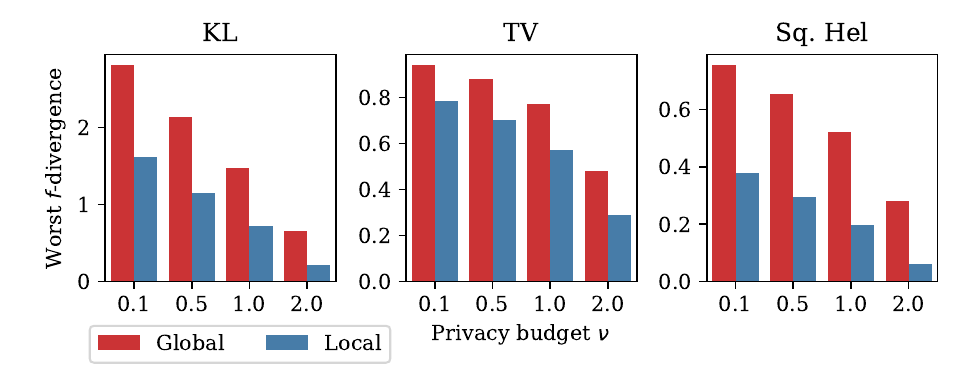}
  \caption{Theoretical worst-case 
$f$-divergences of global and local minimax samplers under the $\nu$-GLDP setting with uniform reference distribution $\mu_k$ over finite space ($k = 20$).
% (Left: KL divergence; Center: Total variation distance; Right: Squared Hellinger distance.)
}
  \label{fig: finite worst-case GDP k = 20}
\end{figure}

\begin{figure}[ht]
  \centering
  \includegraphics[width=0.7\linewidth]{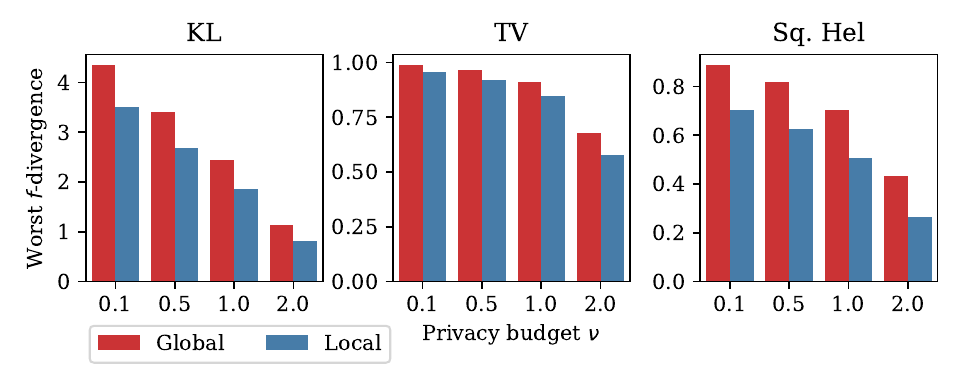}
  \caption{Theoretical worst-case 
$f$-divergences of global and local minimax samplers under the $\nu$-GLDP setting with uniform reference distribution $\mu_k$ over finite space ($k = 100$).
% (Left: KL divergence; Center: Total variation distance; Right: Squared Hellinger distance.)
}
  \label{fig: finite worst-case GDP k = 100}
\end{figure}

We compare the worst-case $f$-divergence under $\nu$-GDP for local and global minimax-optimal samplers across different values of $\nu \in \{0.1, 0.5, 1, 2\}$. The numerical results in Figures~\ref{fig: finite worst-case GDP k = 10}, \ref{fig: finite worst-case GDP k = 20}, and \ref{fig: finite worst-case GDP k = 100} demonstrate that for various sample sizes $k \in \{10, 20, 100\}$, the local minimax-optimal sampler consistently achieves lower minimax risk than the global sampler across all privacy parameter values.

\subsection{Continuous sample space numerical results under  GLDP} \label{appendix: experiment GDP local global continuous}

We follow the same experimental procedure described in Appendix~\ref{appendix: details of continuous}, with the only difference being that we now compare the local and global minimax-optimal samplers under $\nu$-GDP instead of $\varepsilon$-LDP. The local and global universes, as well as the distribution generation process, remain unchanged. For the global minimax sampler, we use the construction provided in Corollary~\ref{cor: special case GDP}, while the local minimax sampler is instantiated from Theorem~\ref{thm: local functional} with \( g = G_\nu \). We conduct the comparison across various values of \( \nu \in \{0.1,0.5,1,2\} \), evaluating the resulting samplers using three $f$-divergences: KL divergence, total variation distance, and squared Hellinger distance.

\begin{figure}[ht]
  \centering
  \includegraphics[width=0.7\linewidth]{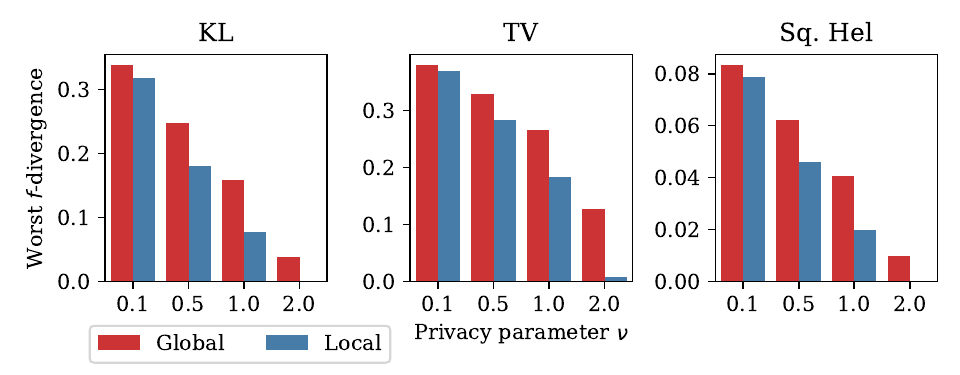}
  \caption{Empirical worst-case 
$f$-divergences of global and local minimax samplers under $\nu$-GLDP setting, over 100 experiments on a 1-D Laplace mixture.}
  \label{fig: continuous worst-case GDP}
\end{figure}

As illustrated in Figure~\ref{fig: continuous worst-case GDP}, in the $\nu$-GDP setting—similar to the $\varepsilon$-LDP case—the local minimax sampler consistently achieves a smaller worst-case $f$-divergence than the global minimax sampler across all three divergences and privacy parameter values.

\subsection{Additional numerical results on continuous sample space }\label{appendix: high dimension laplace}
We extend the experiment from Section~\ref{subsec: experiment 1d continuous} to the two-dimensional setting \( \mathcal{X} = \mathbb{R}^2 \). Figure~\ref{fig: continuous worst-case 2-D} compares the worst-case empirical \( f \)-divergence between the global sampler of~\citet{park2024exactly} and the local sampler described in Theorem~\ref{thm: local functional} for $g = g_\eps$.

We follow the same experimental procedure as in Section~\ref{subsec: experiment 1d continuous}, detailed in Appendix~\ref{appendix: details of continuous}, to generate 100 client distributions. Each distribution is constructed as a mixture of 2-D Laplace components with fixed scale parameter \( b = 1 \). The only difference from the one-dimensional case is the extension to 2-D Laplace mixtures.

The mixture class of 2-D Laplace distributions with scale parameter \( b \) is defined as
\[
\tilde{\mathcal{P}}_{\mathcal{L}} = \left\{ \sum_{i=1}^k \lambda_i \, \mathcal{L}(m_i, b) : k \in [K],\, \lambda_i \geq 0,\, \sum_{i=1}^k \lambda_i = 1,\, \|m_i\|_1 \leq 1 \right\},
\]
where \( \mathcal{L}(m, b) \) denotes a two-dimensional Laplace distribution with mean \( m \in \mathbb{R}^2 \) and scale \( b > 0 \). To control the complexity of each mixture, we impose an upper bound \( K \) on the number of components per distribution.

Each distribution \( P_j \in \tilde{\mathcal{P}}_{\mathcal{L}} \) is generated by randomly sampling \( k \), \( \lambda_i \), and \( m_i \) as follows:  
First, sample \( \tilde{k} \sim \text{Poisson}(k_0) \), and set \( k = \min(\tilde{k} + 1, K) \). Then, sample each \( m_i \in \mathbb{R}^2 \) independently from the uniform distribution on the \(\ell_1\) ball \( \{ x \in \mathbb{R}^2 : \|x\|_1 \leq 1 \} \), and draw the weights \( (\lambda_1, \ldots, \lambda_k) \) uniformly from the probability simplex \( \mathcal{P}([k]) \). In this experiment, following the setup in~\citet{park2024exactly}, we use \( K = 10 \) and \( k_0 = 2 \).

The local and global universes are defined as
$
\tilde{\mathcal{P}}_{\textsf{local}} = \tilde{\mathcal{P}}_{1/3,\, 3,\, h_{\mathcal{L}}}
\quad $ and $
\tilde{\mathcal{P}}_{\textsf{global}} = \tilde{\mathcal{P}}_{1/9,\, 9,\, h_{\mathcal{L}}}$,
where \( h_{\mathcal{L}} \) denotes the density of the two-dimensional Laplace distribution with mean zero and scale parameter \( b = 1 \).

% \begin{figure}[ht]
%   \centering
%   \includegraphics[width=0.8\linewidth]{figures/result_1DLaplaceMix_b=1_c=3.png}
%   \caption{Empirical worst-case 
% $f$-divergences of global and local minimax samplers under the pure LDP setting, averaged over 100 experiments on a 1-D Laplace mixture.
% (Left: KL divergence; Center: Total variation distance; Right: Squared Hellinger distance.)}
%   \label{fig: continuous worst-case}
% \end{figure}

\begin{figure}[ht]
  \centering
  \includegraphics[width=0.7\linewidth]{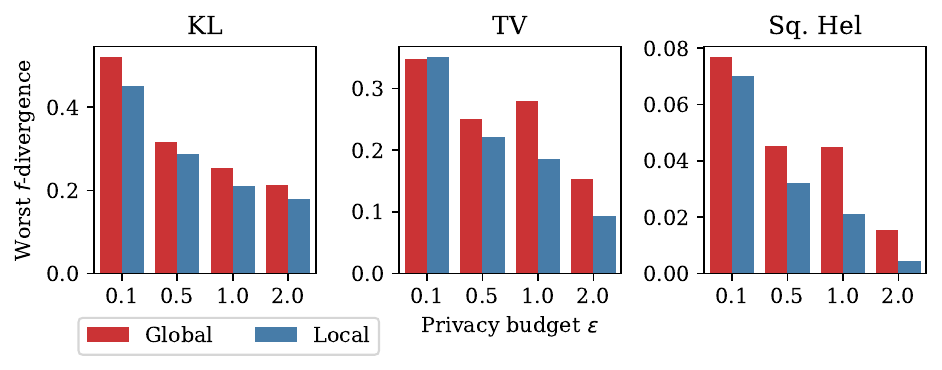}
  \caption{Empirical worst-case 
$f$-divergences of global and local minimax samplers under the pure LDP setting, over 100 experiments on a 2-D Laplace mixture.
% (Left: KL divergence; Center: Total variation distance; Right: Squared Hellinger distance.)
}
  \label{fig: continuous worst-case 2-D}
\end{figure}

In the two-dimensional case, similar to the one-dimensional setting, the local sampler outperforms the global sampler for nearly all fixed values of the privacy parameter.

\section{Instructions for reproducing results}\label{appendix: reproduce}

In this appendix, we provide instructions for reproducing the experiments and figures in the paper. For a detailed description of the code, please refer to the provided file \texttt{README.md}. For tasks that require substantial runtime, we specify the running times. Tasks that complete in less than 5 seconds are excluded from such reporting.

All experiments were conducted on a system running Ubuntu 22.04.4 LTS, equipped with an Intel(R) Xeon(R) CPU @ 2.20GHz and 16GB of RAM.

\subsection{Instructions for reproducing Figure \ref{fig: Laplace pure and functional local vs global}}

From the repository root, run 
\texttt{python -m experiments.exp\_LapMixture\_visual} 
to generate the output distributions, and then 
\texttt{python -m plotting.plot\_LapMixture\_visual} 
to produce Figure~\ref{fig: Laplace pure and functional local vs global}.

The measured running time in our environment for applying the minimax-optimal sampler to the original input distribution is approximately 300 seconds.

\subsection{Instructions for reproducing results for finite space}

The finite space results are organized into six figures, grouped into two categories. 
Figures \ref{fig: finite worst-case k = 20}, \ref{fig: finite worst-case k = 10}, and \ref{fig: finite worst-case k = 100} 
present the worst-case divergences under pure LDP for different values of \( k \in \{10, 20, 100\} \). 
In contrast, Figures~\ref{fig: finite worst-case GDP k = 10}, \ref{fig: finite worst-case GDP k = 20}, 
and \ref{fig: finite worst-case GDP k = 100} show the corresponding results under \( \nu \)-GLDP for the same values of \( k \).

To generate Figures~\ref{fig: finite worst-case k = 20}, \ref{fig: finite worst-case k = 10}, and \ref{fig: finite worst-case k = 100}, 
use the script \texttt{plotting/plot\_finite\_pure.py} with the \texttt{--k} argument to specify the desired value of \( k \). 
For example, the following commands can be used to generate the respective plots:
\begin{verbatim}
python -m plotting.plot_finite_pure --k 20
python -m plotting.plot_finite_pure --k 10
python -m plotting.plot_finite_pure --k 100
\end{verbatim}

To produce Figures~\ref{fig: finite worst-case GDP k = 10}, \ref{fig: finite worst-case GDP k = 20}, 
and \ref{fig: finite worst-case GDP k = 100}, run the script \texttt{plotting/plot\_finite\_GLDP.py} with the corresponding \( k \) values:
\begin{verbatim}
python -m plotting.plot_finite_GLDP --k 10
python -m plotting.plot_finite_GLDP --k 20
python -m plotting.plot_finite_GLDP --k 100
\end{verbatim}

\subsection{Instructions for reproducing results for continuous space}

To reproduce Figure~\ref{fig: continuous worst-case}, first run the script \texttt{experiments/exp\_1DLaplaceMix\_pure.py} with different values of the privacy parameter \( \eps \). The following commands correspond to \( \eps \in \{0.1, 0.5, 1.0, 2.0\} \), respectively:

\begin{verbatim}
python -m experiments.exp_1DLaplaceMix_pure --eps 0.1 --scale 1 --seed 1 
python -m experiments.exp_1DLaplaceMix_pure --eps 0.5 --scale 1 --seed 2 
python -m experiments.exp_1DLaplaceMix_pure --eps 1.0 --scale 1 --seed 3 
python -m experiments.exp_1DLaplaceMix_pure --eps 2.0 --scale 1 --seed 4 
\end{verbatim}

These scripts can be executed independently and in any order, or run in parallel. In our environment, running all four in parallel required approximately 600 seconds. After completion, run \texttt{python -m plotting.plot\_1DLaplaceMix\_pure} to generate the final plots.

To reproduce Figure~\ref{fig: continuous worst-case GDP}, execute the script \texttt{experiments/exp\_1DLaplaceMix\_GLDP.py} with various values of the privacy parameter \( \nu \). The following commands correspond to \( \nu = 0.1, 0.5, 1.0, \) and \( 2.0 \), respectively:

\begin{verbatim}
python -m experiments.exp_1DLaplaceMix_GLDP --nu 0.1 --scale 1 --seed 1 
python -m experiments.exp_1DLaplaceMix_GLDP --nu 0.5 --scale 1 --seed 2
python -m experiments.exp_1DLaplaceMix_GLDP --nu 1.0 --scale 1 --seed 3 
python -m experiments.exp_1DLaplaceMix_GLDP --nu 2.0 --scale 1 --seed 4 
\end{verbatim}

These scripts can be executed in any order or in parallel. In our environment, running all four in parallel required approximately 80 seconds. After completion, run \texttt{python -m plotting.plot\_1DLaplaceMix\_GLDP} to generate the final figure.

To reproduce Figure~\ref{fig: continuous worst-case 2-D}, run the script \texttt{experiments/exp\_nDLaplaceMix\_pure.py} with different values of the privacy parameter \( \eps \). The following commands correspond to \( \eps \in \{0.1, 0.5, 1.0, 2.0\} \):

\begin{verbatim}
python -m experiments.exp_nDLaplaceMix_pure --eps 0.1 --seed 1 --dim 2
python -m experiments.exp_nDLaplaceMix_pure --eps 0.5 --seed 2 --dim 2
python -m experiments.exp_nDLaplaceMix_pure --eps 1.0 --seed 3 --dim 2
python -m experiments.exp_nDLaplaceMix_pure --eps 2.0 --seed 4 --dim 2
\end{verbatim}

These can be run independently or in parallel. In our environment, running all four in parallel required approximately 40 seconds. Once the first step is completed, run \texttt{python -m plotting.plot\_nDLaplaceMix\_pure ----dim 2} to generate the final figure.

% \section{Broader impact}\label{appendix: broader impact}
% Our proposed optimal samplers are designed under LDP guarantees and can be applied to privacy protection in generative modeling—a rapidly growing area of interest. A key obstacle to the adoption of privacy-preserving algorithms in practice is the potential degradation in model performance. By characterizing minimax-optimal samplers under two variants of LDP (functional and pure) and across two formulations (global and local), we develop samplers that minimize utility loss under a given privacy budget. As a result, our samplers help overcome this challenge and support broader real-world applicability.

% Although an LDP sampler provides meaningful privacy guarantees, it cannot achieve perfect privacy without completely sacrificing utility—there is an inherent trade-off between the two. Moreover, Our samplers are designed for the single-sample setting, where each client releases only one privatized data point. Extending these guarantees to scenarios involving multiple, potentially correlated releases is an important direction for future work (see Section~\ref{sec: conclusion}). In practice, clients often contribute data through multiple channels, and aggregating these disclosures can lead to greater privacy leakage.

\fi

\end{document}